\newcommand{\mioldIII}[1]{\iffalse{#1}\fi}
\useunder{\uline}{\ul}{}
\newtheorem{theorem}{Theorem}[section] 
\newtheorem{definition}[theorem]{Definition} 
\newtheorem{lemma}{Lemma}
\begin{document}

\begin{frontmatter}



\title{A Hierarchical Framework for Measuring Scientific Paper Innovation via Large Language Models}

\author[label1,label2]{Hongming Tan\fnref{co-first}}
\ead{thm22@mails.tsinghua.edu.cn}

\author[label1]{Shaoxiong Zhan\fnref{cofirst}}
\ead{zhansx24@mails.tsinghua.edu.cn}

\author[label1]{Fengwei Jia}
\ead{okweb@163.com}

\author[label1,label2]{Hai-Tao Zheng\corref{cor}}
\ead{zheng.haitao@sz.tsinghua.edu.cn}

\author[label1,label2]{Wai Kin (Victor) Chan\corref{cor}}
\ead{chanw@sz.tsinghua.edu.cn}

\fntext[co-first]{Equal Contribution.}
\cortext[cor]{Corresponding author: Wai Kin (Victor) Chan and Hai-Tao Zheng.}

\address[label1]{Shenzhen International Graduate School, Tsinghua University, Shenzhen 518055, China}
\address[label2]{Pengcheng Laboratory, Shenzhen, 518055, China}





\begin{abstract}
Measuring scientific paper innovation is both important and challenging. Existing content-based methods often overlook the full-paper context, fail to capture the full scope of innovation, and lack generalization. We propose HSPIM, a hierarchical and training-free framework based on large language models (LLMs). It introduces a Paper-to-Sections-to-QAs decomposition to assess innovation. 
We segment the text by section titles and use zero-shot LLM prompting to implement section classification, question-answering (QA) augmentation, and weighted innovation scoring.
The generated QA pair focuses on section-level innovation and serves as additional context to improve the LLM scoring. For each chunk, the LLM outputs a novelty score and a confidence score. We use confidence scores as weights to aggregate novelty scores into a paper-level innovation score. To further improve performance, we propose a two-layer question structure consisting of common and section-specific questions, and apply a genetic algorithm to optimize the question-prompt combinations. 
Furthermore, under the fine-grained structure of innovation, we extend HSPIM to an HSPIM$^+$ that generates novelty, contribution, and feasibility scores with respective confidence scores.
Comprehensive experiments on scientific conference paper datasets show that HSPIM outperforms baseline methods in effectiveness, generalization, and interpretability.
Demo code is available at https://github.com/Jasaxion/HSPIM.
\end{abstract}




\begin{keyword}
Innovation Measurement \sep Hierarchical Framework \sep Text Mining \sep Large Language Model
\end{keyword}

\end{frontmatter}



\section{Introduction}
Recently, the rapid increase in scientific paper submissions has significantly heightened the demand for high-quality evaluation.
The most important aspect of evaluating papers is to assess their innovation.
Innovation was introduced in economics to explain how new products, processes, or structures emerge to transform industries and promote development \cite{schumpeter2021theory}.
In terms of research, innovation involves applying novel tools, methods, or concepts to enhance both methodologies and outcomes \cite{costanzo2019innovation}. 
We note that \textbf{innovation} is different from novelty. While novelty focuses on newness and uniqueness, it may lack practical feasibility and real-world contributions.
Instead, innovation emphasizes both novelty and practical value \cite{scott1994determinants}.

Current methods mainly analyze how research combines existing knowledge in unconventional ways to assess novelty rather than innovation.
These approaches can be categorized into content-based and metadata-based methods. Content-based methods analyze textual features to assess novelty directly from the text, utilizing rule-based systems \cite{basu2001evaluating} and deep learning models \cite{amplayo2018network}. In contrast, metadata-based approaches rely on external attributes such as journal impact and author reputation \cite{boyack2005predicting, tahamtan2018creativity}.

Content-based methods help to evaluate unpublished papers and to support double-blind review. They are therefore more practical than metadata-based methods. However, current content-based methods still have several limitations.
First, to handle long text, most methods focus on specific sections such as titles, abstracts, keywords, or contribution sentences, which may cause biases compared to full paper analysis \cite{pitkin1999accuracy}. 
Second, many approaches evaluate only the novelty of a paper without considering its value and feasibility, leading to an incomplete assessment of innovation.
Third, existing models are typically trained on papers from specific topics and publication years, limiting their ability to build a generalizable paper innovation measurement framework.

To address the aforementioned issues in content-based methods, we propose a hierarchical framework for measuring scientific paper innovation based on large language models (LLMs). 
This framework evaluates the innovation of scientific papers at multiple levels: from the document level (full paper) to the paragraph level (sections) and finally to the sentence level (question-answer pairs).
First, to handle the problem of long-text analysis, we divide papers into text chunks based on their original smallest sections, which are then categorized by LLMs into common scientific paper section types following the IMRaD format \cite{sollaci2004introduction}.
Specifically, we propose to divide the scientific paper into the following sections: Abstract, Introduction, Related Work, Approach/Methodology/Model/Method, Analysis Theory, Experiments, Experiment Analysis, Discussion/Limitations, and Conclusion.
Initially, we name this approach \textbf{Section-based} Scientific Paper Innovation Measurement (SSPIM).

Second, to measure innovation beyond novelty, we follow social science theories to define it as a combination of \textit{novelty} and \textit{practical value}. Practical value includes \textit{contribution} and \textit{feasibility}. This structure reflects common understanding in innovation studies. Based on this innovation concept, we propose a \textbf{weighted innovation scoring} formula with \textbf{question-answering (QA) augmentation}. (i) We input each text chunk into a zero-shot LLM prompt to generate a JSON output \{\textit{``novelty\_score''}, \textit{``reason''}, \textit{``confidence\_score''}\}. 
The \textit{novelty\_score} reflects section-level novelty. The \textit{reason} explains the rationale behind the score. The \textit{confidence\_score} serves as a weight to modulate the novelty score of each section. Specifically, the confidence score reflects how strongly the LLM supports the local novelty claim in context. This design implicitly captures the feasibility attribute of innovation by indicating the plausibility of the novelty score.
We then apply weighted averaging using the confidence scores to aggregate local novelty into a paper-level innovation score.

(ii) We perform QA augmentation before scoring. 
Specifically, we design questions to guide the LLM to generate innovation-focused answers for each section chunk. The resulting QA pair is added as additional context to enhance in-context learning during LLM scoring.
Inspired by structural and regulatory genes in genetics\mioldIII{ \cite{spitz2012transcription}}, we design a \textit{two-layer question structure}: one common question shared across all sections, and one specific question for each section type.
The generated QA pairs distill original section content into concise, innovation-focused carriers \cite{tan2024qaea}, helping the LLM analyze more attributes of innovation, such as feasibility and contribution. 
We refer to this Paper-to-Sections-to-QAs framework as \textbf{Hierarchical} Scientific Paper Innovation Measurement (HSPIM).
To evaluate the effect of a fine-grained representation of innovation, we further extend HSPIM to HSPIM$^+$ by incorporating a $p$-norm aggregation of novelty, contribution, and feasibility scores, each with corresponding confidence weights.

Third, we use pre-trained LLMs as zero-shot generators and scorers to exploit their knowledge and reasoning strength.
This approach is training-free and demonstrates strong generalization. To further improve robustness and effectiveness, we treat the question prompts as optimization targets in a combinatorial optimization task. Inspired by previous prompt optimization approaches \cite{guoconnecting, yang2023large} and gene variation theory, we propose the \textbf{multi-prompt optimizer} using the heuristic genetic algorithm (GA) \cite{mitchell1998introduction}. We assume there are \(s\) section types. 
GA searches for the optimal question-prompt combination within \(s\) specific question sets and one common question set, following standard genetic algorithm steps: population initialization, selection, crossover, and mutation.
Each individual in the population consists of \(s\) specific questions plus one common question. Under the elite GA, we explore different optimization strategies for two-layer question structure, including joint optimization, two-step optimization, and pruning, to further enhance performance.

We use three peer review datasets from PeerRead \cite{kang2018dataset} and one dataset from NLPeer \cite{dycke2023nlpeer} to conduct extensive experiments.
We average peer review scores for originality and soundness/correctness as proxies for \textit{novelty} and \textit{practical value}, respectively, to construct the ground-truth innovation score.
Performance is evaluated using metrics such as RMSE and MAE, around which we conduct further detailed analyses.
Additionally, we evaluate the semantic textual similarity between \textit{reason} output scoring reasons and actual peer review comments.

In addition to traditional content-based approaches, recent LLM-based models such as SEA-E \cite{yu2024automated} focus on generating peer reviews with attributes like originality and soundness. However, these systems are typically fine-tuned on specific datasets, operate in a single-pass manner, and capture review-style criteria rather than directly measuring innovation. 
In contrast, our framework represents the first LLM-based approach for scientific innovation measurement, realized through a training-free, hierarchical evaluation that integrates section-level QAs with a genetic multi-prompt optimizer.

In summary, the main contributions of this paper are as follows:  
\begin{itemize}
    \item We propose a Hierarchical Scientific Paper Innovation Measurement (HSPIM) framework based on Paper-to-Sections-to-QAs decomposition.
    To the best of our knowledge, HSPIM is the first section-based text mining framework that employs zero-shot LLMs to evaluate scientific paper innovation.
    It effectively addresses challenges in long-text analysis, innovation quantification, and model generalization.
    \item We clearly define the concepts of innovation by distinguishing it from novelty. Based on this distinction, we propose weighted innovation scoring and QA augmentation.
    \item We introduce a genetic algorithm-based prompt optimizer for our two-layer question structure to optimize LLM scoring.
    We are the first to apply genetic algorithms for joint multi-prompt optimization.
    \item 
    We conduct both theoretical analysis and empirical validation to demonstrate the effectiveness and robustness of HSPIM.
\end{itemize}

\section{Related Work}
Our work draws on prior research on three areas: the concept of innovation, paper innovation measurement, and prompt-based LLMs.
\subsection{Innovation Concept}
Schumpeter initially defined innovation in \textit{The Theory of Economic Development} as creating new products, processes, or structures to disrupt equilibrium and drive growth \cite{schumpeter2021theory}. 
Christensen further distinguished innovation into sustaining and disruptive types, emphasizing that disruptive innovation combines novelty with real-world impact through market reshaping and widespread adoption \cite{bower1995disruptive}.
These classic economic theories highlight that innovation inherently focuses on creating value through real-world application and impact.

Social science research provides multifaceted insights into the nature of innovation. Early work highlights that innovation should respond to real-world needs through public demand and institutional frameworks\mioldIII{ \cite{edler2007public}}. 
Later studies suggest that innovation goes beyond novelty, encompassing practical value within broader ecological and methodological contexts \cite{alajami2020beyond}.
Other research shows that innovation is socially constructed and requires recognition from audiences and institutions\mioldIII{ \cite{godart2020sociology}}. Recent work in research assessment suggests that innovation involves both methodological novelty and impact-oriented contributions \cite{costanzo2019innovation}. Together, these studies support the view that innovation entails both novelty and practical value.

Historically, innovation evaluation focused mainly on novelty. Over time, it began to include practical value, such as applicability and feasibility, recognizing that innovation is a multifaceted concept\mioldIII{ \cite{godin2008innovation}}. 
Dosi introduced the concepts of \textit{technological paradigms} and \textit{technological trajectories}, clarifying that innovation involves not just novelty, but also systematic realizability and practical feasibility \cite{dosi1982technological}. 
Similarly, Rogers et al.\ argued that innovation diffusion depends on two key factors. The first is \textit{relative advantage}, which reflects real-world contribution. The second is \textit{complexity and trialability}, which reflect feasibility \cite{rogers2014diffusion}.

\begin{figure*}[ht]
  \centering
  \includegraphics[width=0.5\textwidth]{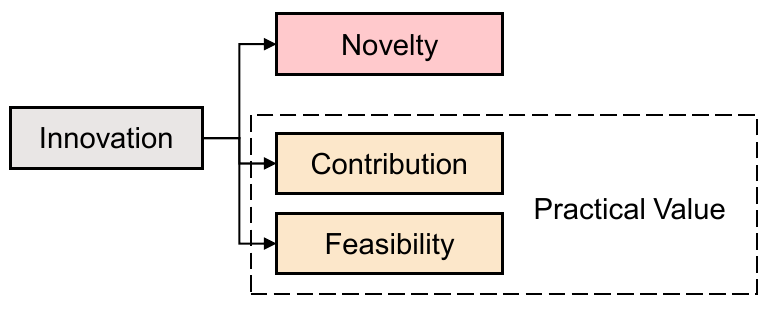}
  \caption{Conceptual decomposition of innovation into novelty and practical value (contribution and feasibility), which grounds the scoring design of HSPIM.}
  \label{innovation_concept}
\end{figure*}

Integrating these insights, we define innovation as a composite concept with two main components: \textit{Novelty} and \textit{Practical Value}. Practical value is further divided into \textit{Contribution} (theoretical, methodological, or applied value) and \textit{Feasibility} (the potential for real-world implementation), as shown in Fig. \ref{innovation_concept}. Since innovation is a complex concept, it cannot be directly measured like novelty. Therefore, a comprehensive analysis is needed to capture its full structure before quantification.

\subsection{Approaches to Measuring Paper Innovation}
Existing methods for measuring paper innovation mainly focus on novelty. As previous studies suggest, paper novelty is evaluated by identifying atypical combinations of existing knowledge \cite{mayer1995search, simonton2004creativity}. Approaches to novelty measurement include content-based and metadata-based methods.

Content-based methods rely on textual analysis to align with peer review practices that assess novelty directly from the text. 
These content-based models range from rule-based approaches to deep learning methods.
Rule-based methods include WordNet \cite{basu2001evaluating} and domain-specific knowledge bases \cite{shabani2023rule}.
Deep learning models include unsupervised generative models such as graph-based autoencoder neural networks \cite{amplayo2018network} and supervised discriminative models like convolutional neural network (CNN) \cite{lecun1989backpropagation}, long short-term memory (LSTM) \cite{hochreiter1997long, kang2018dataset}, Transformer \cite{vaswani2017attention},
TextCNN and BERTopic \cite{jeon2023measuring, wang2023measuring}.
Besides, recent studies have been conducted on text novelty detection tasks~\cite{shah2021three}.
Recently, fine-tuned LLM-based models like SEA \cite{yu2024automated} generate review texts along with several evaluation scores to simulate the real-world peer review.

Content-based methods can be categorized by their analysis target, such as topics \cite{wang2024effective}, titles \cite{jeon2023measuring}, keywords \cite{boudreau2016looking}, paragraphs\mioldIII{ \cite{hou2022new}}, or contribution sentences \cite{wang2023measuring}. Focusing on specific parts of the paper may introduce biases \cite{pitkin1999accuracy} compared to using full-text analysis.

On the other hand, metadata-based methods evaluate external factors such as journal impact and author reputation through regression models \cite{boyack2005predicting}. Other metadata-based methods includes applying scientometrics techniques \cite{tahamtan2018creativity} and bibliometric methods \cite{matsumoto2021introducing} based on reference analysis.
Since content-based methods help to evaluate unpublished papers and support double-blind peer review, they are more practical than metadata-based methods for real-world applications.
In addition, the innovation level of a paper does not depend on its external attributes.

We propose a content-based HSPIM framework using training-free LLM prompting to extract deep and comprehensive innovation information through section-based QA. 
Our approach addresses the challenges of long-text analysis, innovation quantification, and model generalization in content-based methods.

\subsection{Prompt-based LLMs}
Large language models (LLMs) are effective tools for in-context learning \cite{kojima2022large}. Zero-shot LLMs combine input data and prompt design to form context and then use this context to generate answers directly.
Moreover, LLM-as-a-Judge \cite{zheng2023judging} shows that zero-shot LLMs can closely match human judgment and produce reliable scores. 
LLM-as-a-Judge can directly assign quantitative scores based on in-context learning.
Thus, we suggest that zero-shot LLMs can effectively generate novelty and confidence scores for evaluating scientific paper innovation based on the paper content.
Furthermore, we extract innovation-focused QA pairs from the original text chunk of the paper for better innovation judgment \cite{tan2024qaea}.

As previous studies have shown \cite{liu2023pre}, LLM performance is sensitive to prompt designs.
Therefore, prompt optimization has emerged as a prominent paradigm.
Early works such as Optimization by PROmpting (OPRO) \cite{yang2023large} and Automatic Prompt Optimization (APO)\mioldIII{ \cite{pryzant2023automatic}} leverage zero-shot LLMs as optimizers to update prompts based on output performance feedback. They apply heuristic algorithms to automate this process, enabling derivative-free and training-free optimization that surpasses manual designs.
Other related methods include Evolutionary Prompt Optimizer (EvoPrompt) \cite{guoconnecting} and AutoHint \cite{sun2023autohint}. 
\mioldIII{Another line of research explores prompt-tuning with domain knowledge. For example, BioKnowPrompt uses domain knowledge to improve relation extraction \cite{li2022bioknowprompt}.}

In contrast to long-prompt optimization\mioldIII{ \cite{hsieh2023automatic}}, which updates individual sentences within a large prompt in each iteration while maintaining the overall structure, our approach performs multi-prompt optimization. Instead of modifying a single long prompt, we need to jointly optimize multiple question prompts, each generating QA pairs for different text chunks. These QA pairs contribute to weighted innovation scoring to compute an innovation score, and we update question prompts based on RMSE performance feedback. To achieve this, we introduce a genetic algorithm-based method \cite{mitchell1998introduction} that simultaneously updates multiple prompts within HSPIM.

\section{Methodology}
In this section, we first define the notations and the problem. 
Then, we provide a detailed introduction to the HSPIM framework and compare it with its simplified variant. Finally, we justify the two-layer question structure and propose three multi-prompt optimization strategies.
A summary of key notations used in this paper is given in Table \ref{tab:notation-summary}.

\subsection{Notations and Problem Definition}
\label{sec:notations}
In this paper, we focus on measuring scientific paper innovation to quantify this challenging attribute. A scientific paper dataset consists of a collection of paper texts \(\mathcal{C}\) and the corresponding peer reviews. Let \(\mathcal{C} = \{p_1, p_2, \ldots, p_n\}\) denote a set of papers, where each \(p_i\) is a paper and \(n\) is the total number of papers. 
Our objective is to assign numeric innovation scores to all papers in \(\mathcal{C}\) and to minimize the prediction error between the predicted scores \(\hat{\textit{Innovation}}_i\) and the ground-truth scores \(\textit{Innovation}_i\).

We derive ground-truth innovation scores from peer reviews in the scientific paper dataset. Each paper \(p_i\) receives \(m_i\) reviews \(\{r_{i1}, r_{i2}, \dots, r_{im_i}\}\), each containing a comment and several evaluation scores. Following established peer review criteria, we quantify innovation using originality and soundness/correctness. 
Specifically, for each review \(r_{ij}\), the innovation score is defined as \(\textit{Innovation}_{ij} = (\textit{Originality}_{ij} + \textit{Soundness}_{ij})/2\).
The final labeled innovation score for \(p_i\) is obtained by averaging its review scores: \(\textit{Innovation}_i = \frac{1}{m_i} \sum_{j=1}^{m_i} \textit{Innovation}_{ij}\). Our objective is to minimize the discrepancy between the predicted innovation score \(\hat{\textit{Innovation}}_i\) and the labeled innovation score \(\textit{Innovation}_i\). 
To achieve this, we use the root mean squared error (RMSE), calculated as the square root of the average of \((\hat{\textit{Innovation}}_i - \textit{Innovation}_i)^2\) over all \(n\) papers.

\begin{table}[ht]
\centering
\caption{Summary of notations in HSPIM framework.}
\label{tab:notation-summary}
\scriptsize
\setlength{\tabcolsep}{0.4pt}
\begin{tabular}{ll}
\toprule
\textbf{Notation}        & \textbf{Meaning} \\ 
\midrule
$\mathcal{C}$            & Collection of all papers \\
$p_i$                    & The $i$-th paper in $\mathcal{C}$ \\
$n$                      & Total number of papers, i.e.\ $|\mathcal{C}|$ \\
$m_i$                    & Number of peer reviews for paper $p_i$ \\
$s$                      & Number of predefined section types \\
$l_i$                    & Number of text chunks in paper $p_i$ \\
$t_{ik}$                 & The $k$-th text chunk of paper $p_i$ \\
$\textit{Originality}_{ij}$, $\textit{Soundness}_{ij}$ & Peer-review scores for paper $p_i$ from review $j$ \\
$\textit{Innovation}_{i}$  & Labeled innovation score for paper $p_i$ \\
$\hat{\textit{Innovation}}_{i}$ & Predicted innovation score for paper $p_i$ \\
$\textit{Confidence}_{ik}$ ($c_{ik}$) & Predicted confidence score for chunk $t_{ik}$ \\
$\textit{Novelty}_{ik}$ ($n_{ik}$)  & Predicted novelty score for chunk $t_{ik}$ \\
$\textit{Contribution}_{ik}$ ($d_{ik}$) & Predicted contribution score for chunk $t_{ik}$ \\
$\textit{Feasibility}_{ik}$ ($f_{ik}$)  & Predicted feasibility score for chunk $t_{ik}$ \\
$Q_r$                    & Specific question set for section type $r$ \\
$\mathcal{Q}_c$          & Common question set \\
\bottomrule
\end{tabular}
\end{table}

\begin{figure*}[ht]
  \centering
  \includegraphics[width=1\textwidth]{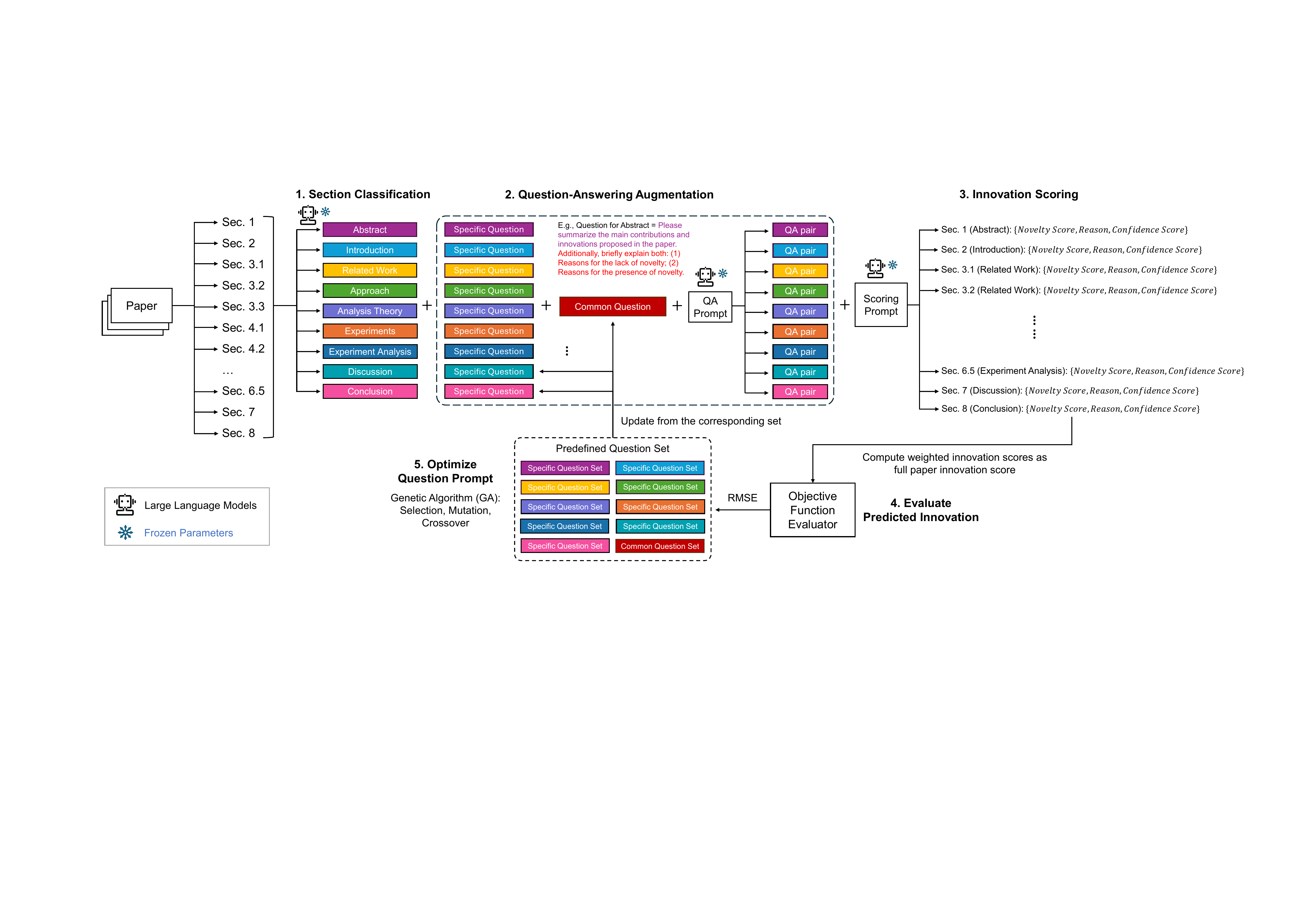}
  \caption{An example of hierarchical scientific paper innovation measurement (HSPIM) via large language models. 
  We use zero-shot LLMs in three steps: section classification (Step-1), question answering (Step-2), and innovation scoring (Step-3). 
  The overall paper innovation score is computed based on the novelty and confidence scores. In Step-4, we calculate the RMSE between this score and the ground truth.
  To improve in-context learning, we design two types of innovation-related questions: a common question applied to all sections and specific questions for each section type. In Step-5, we use a Genetic Algorithm (GA) to find better question prompts and update them simultaneously.
  }
  \label{HSPIM_framework}
\end{figure*}

We propose the HSPIM framework to implement hierarchical section-based text mining. Each paper \(p_i\) is divided into \(l_i\) chunks \(\{t_{i1}, t_{i2}, \ldots, t_{il_i}\}\), which are classified into \(s\) predefined section types \(\{section_1, \dots, section_s\}\) by LLMs. 
By generating confidence scores and novelty scores for each section chunk via LLMs, the predicted innovation score \(\hat{\textit{Innovation}}_i\) is obtained as a weighted average of the section novelty scores, with confidence scores serving as weights.
Besides, HSPIM employs QA augmentation before scoring. It optimizes prompts to generate section-specific QA pairs.
A genetic algorithm searches for the optimal prompt combination within \(s\) predefined specific question sets \(Q_1, Q_2, \dots, Q_s\) (one set per section type) and a single common question set \(\mathcal{Q}_c\). Assume that \(t_{ik}\) is classified as \(section_r\) with \(r \in \{1,2,\ldots,s\}\). For each text chunk \(t_{ik}\), the final question \(Q_{ik}\) is formed by concatenating one question from \(Q_r\) and one question from \(\mathcal{Q}_c\). Then \(QA_{ik}\) is incorporated as additional innovation-focused information for \(t_{ik}\) in the scoring prompt, guiding the LLM to generate more meaningful and informative scores. Here, we define our HSPIM as follows:

\begin{definition}[HSPIM] \label{def:HSPIM}
\emph{HSPIM} is a hierarchical text mining framework for measuring scientific paper innovation. Each paper \(p_i\) is segmented into \(l_i\) text chunks \(\{t_{i1}, \ldots, t_{il_i}\}\) and classified into \(s\) predefined section types by LLMs. The predicted innovation score is given by
\begin{align}
\hat{\textit{Innovation}}_i = \frac{\sum_{k=1}^{l_i} \textit{Confidence}_{ik} \,\cdot\, \textit{Novelty}_{ik}}{\sum_{k=1}^{l_i} \textit{Confidence}_{ik}}, \label{eq:predicted_innovation}
\end{align}
where \(\textit{Confidence}_{ik}\) and \(\textit{Novelty}_{ik}\) are LLM-generated values for \(t_{ik}\). 
To enhance scoring, HSPIM uses QA augmentation and a genetic algorithm to optimize question prompts.
It selects an optimal question-prompt combination from \(s\) section-specific question sets and a shared common question set. For each chunk \(t_{ik}\) classified as \(section_r\), the final question \(Q_{ik}\) is formed by combining one question from \(Q_r\) and one from \(\mathcal{Q}_c\). The resulting QA pairs provide additional context to guide the LLM in generating more reliable innovation scores.
\end{definition}

We also introduce HSPIM$^+$ as a norm-based extension of HSPIM. The $p$-norm allows us to combine novelty, contribution, and feasibility in a flexible way. Different values of $p$ let the final score reflect a balanced, cumulative, or dominant effect.

\begin{definition}[HSPIM$^+$ (Norm-based Extension)]
\label{def:HSPIM$^+$}
\emph{HSPIM$^+$} extends the HSPIM framework in Definition \ref{def:HSPIM} by modeling innovation as a composite of three attributes: novelty, contribution, and feasibility. For each chunk \(t_{ik}\), LLMs generate \(\textit{Novelty}_{ik}\), \(\textit{Contribution}_{ik}\), and \(\textit{Feasibility}_{ik}\), along with corresponding confidence scores \(c^{(n)}_{ik}\), \(c^{(d)}_{ik}\), \(c^{(f)}_{ik}\) as weights.

First, for each attribute, a weighted mean is computed:
\begin{align}
\textit{Novelty}_i &= \frac{\sum_{k=1}^{l_i} c^{(n)}_{ik} \cdot \textit{Novelty}_{ik}}{\sum_{k=1}^{l_i} c^{(n)}_{ik}} \\
\textit{Contribution}_i &= \frac{\sum_{k=1}^{l_i} c^{(d)}_{ik} \cdot \textit{Contribution}_{ik}}{\sum_{k=1}^{l_i} c^{(d)}_{ik}} \\
\textit{Feasibility}_i &= \frac{\sum_{k=1}^{l_i} c^{(f)}_{ik} \cdot \textit{Feasibility}_{ik}}{\sum_{k=1}^{l_i} c^{(f)}_{ik}}
\end{align}

Let \(\mathbf{v}_i = [\textit{Novelty}_i,\, \textit{Contribution}_i,\, \textit{Feasibility}_i]\) be the score vector for paper \(p_i\). The overall innovation score is then given by the $p$-norm of this vector, linearly mapped back to the $[1, 5]$ range:
\begin{align}
\label{eq:hspim_plus_norm}
\hat{\textit{Innovation}}_i^{(p)} = \mathcal{M}\left( \left\| \mathbf{v}_i \right\|_p \right)
\end{align}
where $\mathcal{M}(\cdot)$ denotes a linear mapping to $[1, 5]$, and $p \in \{1, 2, \infty\}$ specifies the aggregation norm: $L_1$ (mean), $L_2$ (Euclidean), and $L_\infty$ (max), respectively.
Here, the confidence scores reflect the credibility of the LLMs' evaluation of novelty, contribution, and feasibility.
\end{definition}

\subsection{HSPIM Framework and Workflow}
\label{ssec:HSPIM_workflow}
Fig. \ref{HSPIM_framework} shows the workflow of our hierarchical scientific paper innovation measurement (HSPIM) method.

\subsubsection{Step-1: Section Classification}
To effectively analyze long-text scientific papers, we propose section-based method.
First, we use the Science Parse library to extract multiple text chunks from a paper's PDF by identifying section titles. Each text chunk represents a section of the paper. Next, we classify \( k \) text chunks from a paper into nine predefined section types using zero-shot prompting with large language models (LLMs). 
We extend the classic IMRaD structure (Introduction, Methods, Results, and Discussion) \cite{sollaci2004introduction} into nine categories: \textit{Abstract, Introduction, Related Work, Approach/Methodology/Model/Method, Analysis Theory, Experiments, Experiment Analysis, Discussion/Limitations,} and \textit{Conclusion}, where the number of section types is set to \( s = 9 \).
While preserving the IMRaD sections for \textit{Introduction} and \textit{Methods}, we also include the \textit{Abstract}, which is often found at the beginning of academic papers. 
Additionally, we replace \textit{Results} with \textit{Experiments}, a term more commonly used in scientific papers, to improve model compatibility. The \textit{Discussion} section is divided into two distinct parts: \textit{Analysis Theory} for theoretical analysis and \textit{Experiment Analysis} for examining experimental results. Finally, we include \textit{Discussion/Limitations} for reflective insights and \textit{Conclusion} for summarizing key findings.

\begin{figure*}[t]
  \centering
  \includegraphics[width=0.5\textwidth]{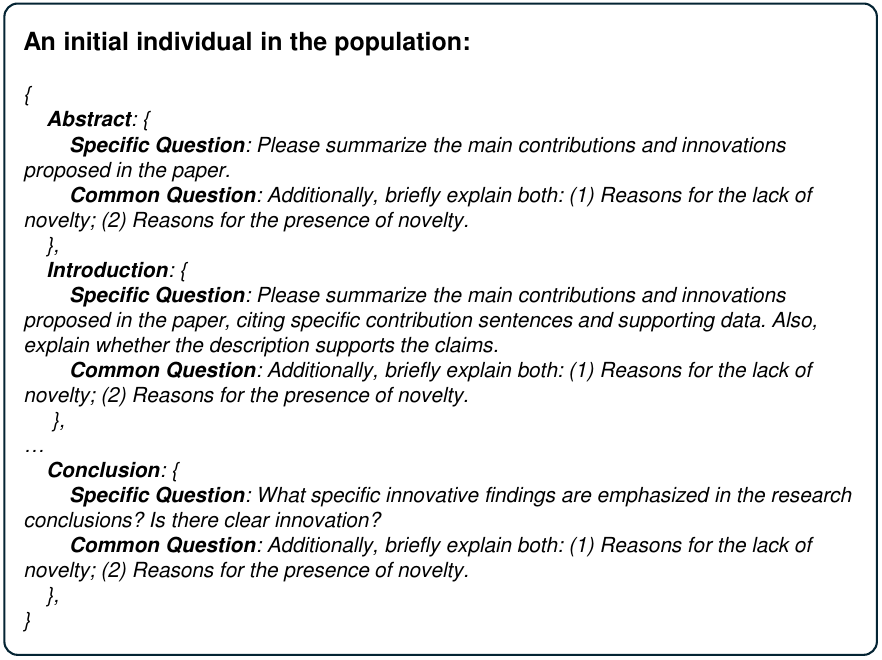}
  \caption{An example of an individual (a question-prompt combination) for multi-prompt optimization. Within an individual, each section has the same common question but a different specific question.}
  \label{initial_individual}
\end{figure*}

\subsubsection{Step-2: Question-Answering Augmentation}
To align with the section-based design, we introduce QA augmentation via LLMs to extract innovation-focused information from different sections. QA augmentation means using QA pair as additional section content for subsequent innovation scoring. We design two types of questions: specific questions, which target individual sections, and common questions, which apply to all sections.  

As shown in Fig. \ref{HSPIM_framework}, each section-assigned text chunk selects one question from the corresponding specific question set and one from the common question set to form the final section question:  
\begin{align}
\textit{section question} = \textit{specific question} + \textit{common question}.
\end{align}
For example, the question for the \textit{Abstract} can be composed of ``\textit{Please summarize the main contributions and innovations proposed in the paper.}" from the specific question set of \textit{Abstract}, and ``\textit{Additionally, briefly explain both: (1) Reasons for the lack of novelty; (2) Reasons for the presence of novelty.}" from the common question set.

We note that specific questions are tailored to each section type and only affect the score of the designated section. 
In contrast, common questions ask about innovation-related reasons and influence all sections. Since specific questions are the only prompt differences among sections, they serve as the key mechanism to capture innovation based on section characteristics.

\subsubsection{Step-3: Innovation Scoring}
In Step 3, we build a bridge from local novelty of section content to overall innovation. 
Since innovation is not directly measurable, we provide the original section-based text chunk along with its QA pair as input to LLMs to generate a \textbf{novelty score}. Additionally, we introduce a \textbf{confidence score} to indicate the reliability of the novelty score.
The confidence score serves two purposes. First, it is used to calculate a weighted average of the novelty scores to adjust the importance of each section. Second, it acts as a reliability or soundness measure for the novelty score, providing a further evaluation from novelty to innovation.
The LLM scoring prompt then generates a JSON output containing {\textit{``novelty\_score''}, \textit{``reason''}, \textit{``confidence\_score''}}. \textit{novelty\_score} represents the novelty level of the given section content. 
\textit{reason} explains the rationale behind the novelty score and is used for subsequent qualitative analysis.
\textit{confidence\_score} indicates the reliability of the novelty score.

The formula for calculating the predicted innovation score \(\hat{\textit{Innovation}}\) is given by \(\hat{\textit{Innovation}}_i = \sum_{k=1}^{l_i} \textit{Confidence}_{ik} \cdot \textit{Novelty}_{ik} / \sum_{k=1}^{l_i} \textit{Confidence}_{ik}\), where \( i \) denotes the \( i \)-th paper in the dataset, and \( l_i \) represents the total number of text chunks of the \( i \)-th paper.
Based on innovation theory, we also introduce the innovation attributes of contribution and feasibility, and define HSPIM$^+$ in Definition \ref{def:HSPIM$^+$}.

\subsubsection{Step-4: Evaluate Predicted Innovation}
We compute the difference between the HSPIM output innovation scores with the ground-truth peer review scores in the dataset.
Our objective is to minimize the RMSE of innovation scores. 
Since the current peer reviews do not provide a direct innovation score, we instead focus on two attribute scoring: originality and soundness/correctness, both rated on a scale of \{1, 2, 3, 4, 5\}.
Originality reflects the novelty, and we use soundness/correctness as a proxy for practical value.
We define innovation as a combination of originality and soundness/correctness.
A paper with a high originality score but a low soundness/correctness score may have methodological flaws or unreliable conclusions, reducing its overall innovation.
Thus, the peer review innovation score \(\textit{Innovation}_i\) for paper \( p_i \) is computed as the average of originality and soundness scores across all its reviews:

\begin{align}
\textit{Innovation}_{i} = \frac{1}{m_i} \sum_{j=1}^{m_i} \frac{\textit{Originality}_{ij} + \textit{Soundness}_{ij}}{2}, \label{eq:label_innovation}
\end{align}
where \( i \) is the paper index, \( m_i \) is the number of reviews for \( p_i \), and \( \textit{Originality}_{ij} \) and \( \textit{Soundness}_{ij} \) are the originality and soundness scores from the \( j \)-th review.

Then we use RMSE to measure the prediction errors:
\begin{equation}
\text{RMSE} = \sqrt{\frac{1}{n} \sum_{i=1}^{n} \left( \textit{Innovation}_{i} - \hat{\textit{Innovation}}_i \right)^2}.
\end{equation}

We define Step-1 to Step-4 as the naive implementation of HSPIM, which includes the complete Paper-to-Sections-to-QAs process and is thus called a hierarchical framework.

\begin{figure*}[ht]
  \centering
  \includegraphics[width=0.8\textwidth]{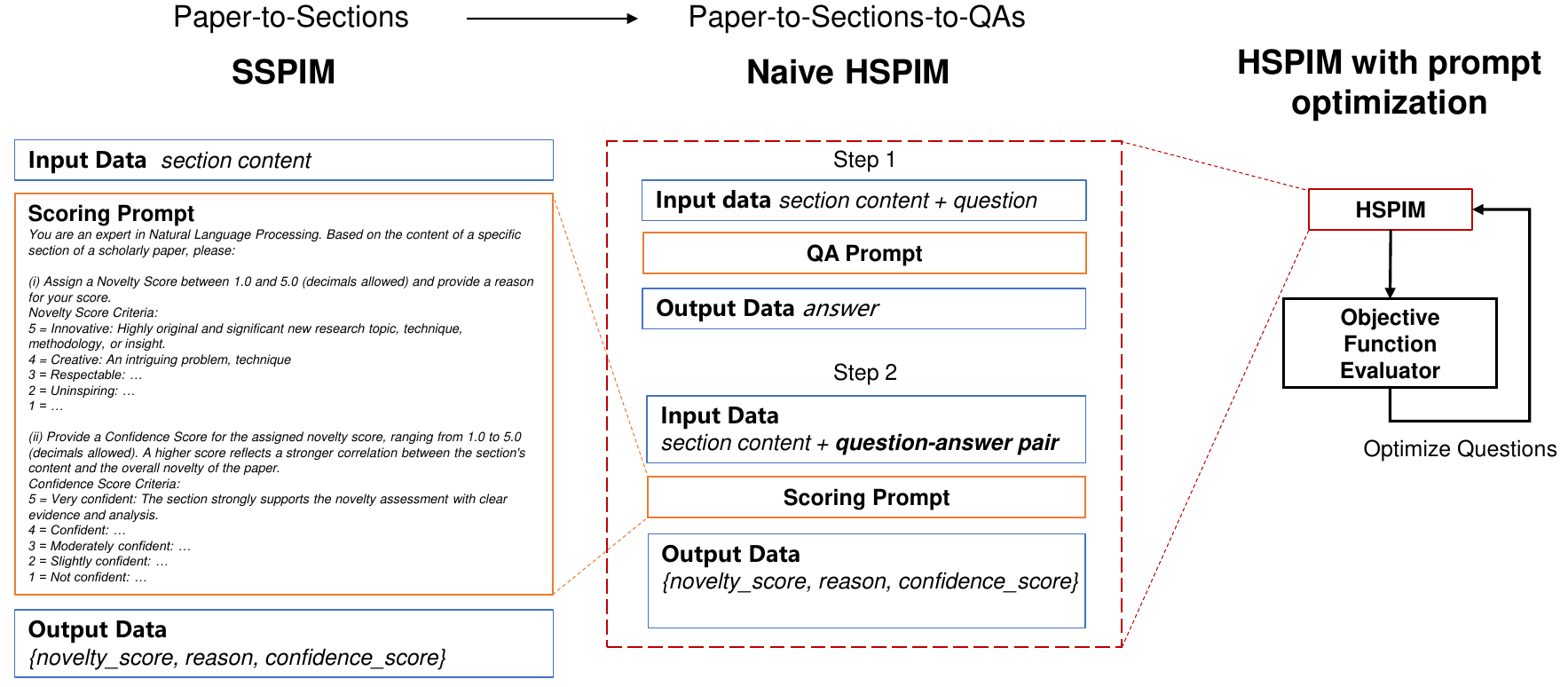}
  \caption{Comparison of section-based scientific paper innovation measurement (SSPIM), naive implement of hierarchical scientific paper innovation measurement (HSPIM) and HSPIM with prompt optimization.}
  \label{all_framework}
\end{figure*}

\begin{figure*}[ht]
    \centering
    \subfloat[\scriptsize Joint Optimization.]{
        \includegraphics[width=0.3\linewidth]{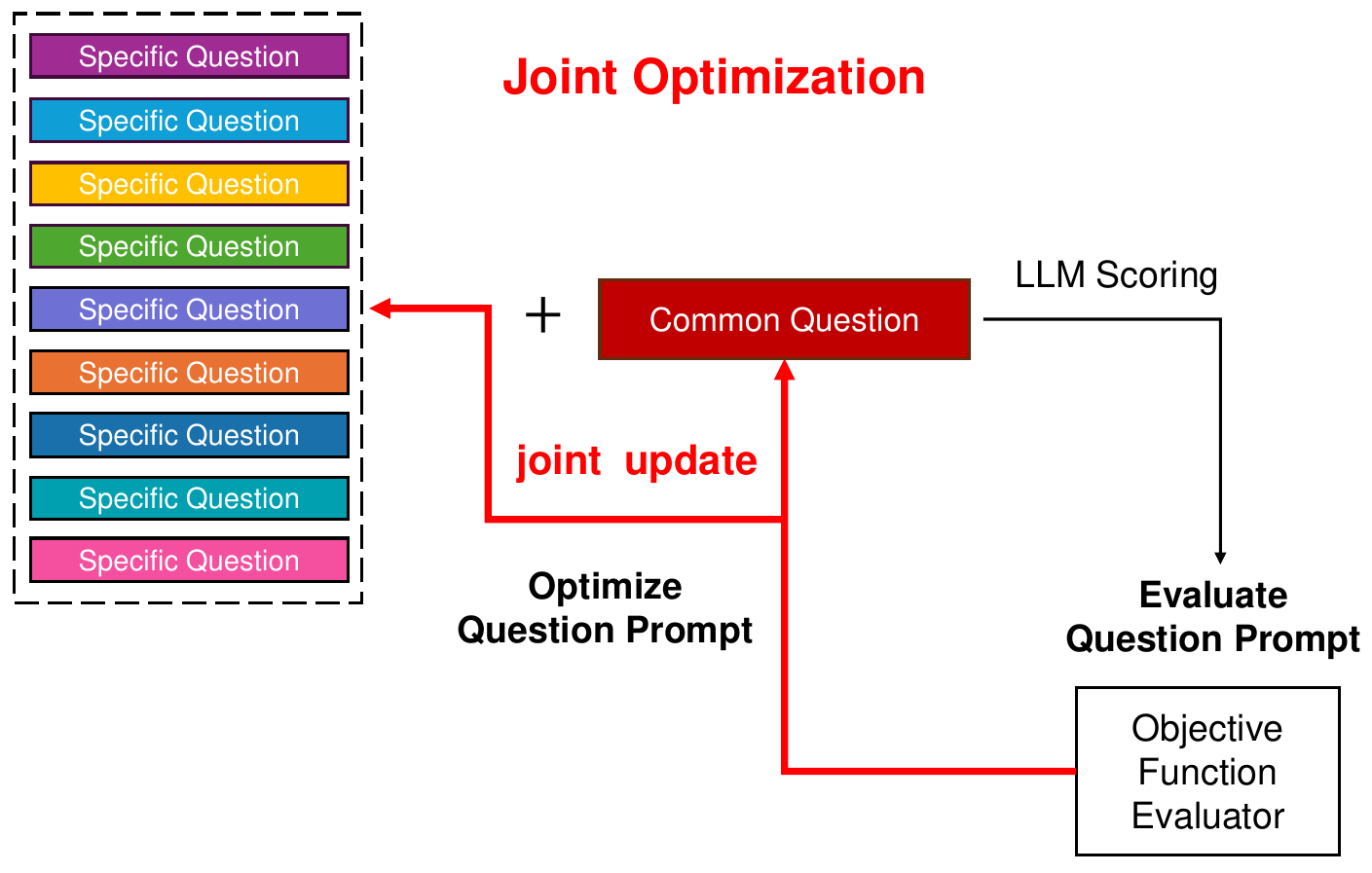}
        \label{fig:joint}
    }
    \subfloat[\scriptsize Two-Step Optimization.]{
        \includegraphics[width=0.3\linewidth]{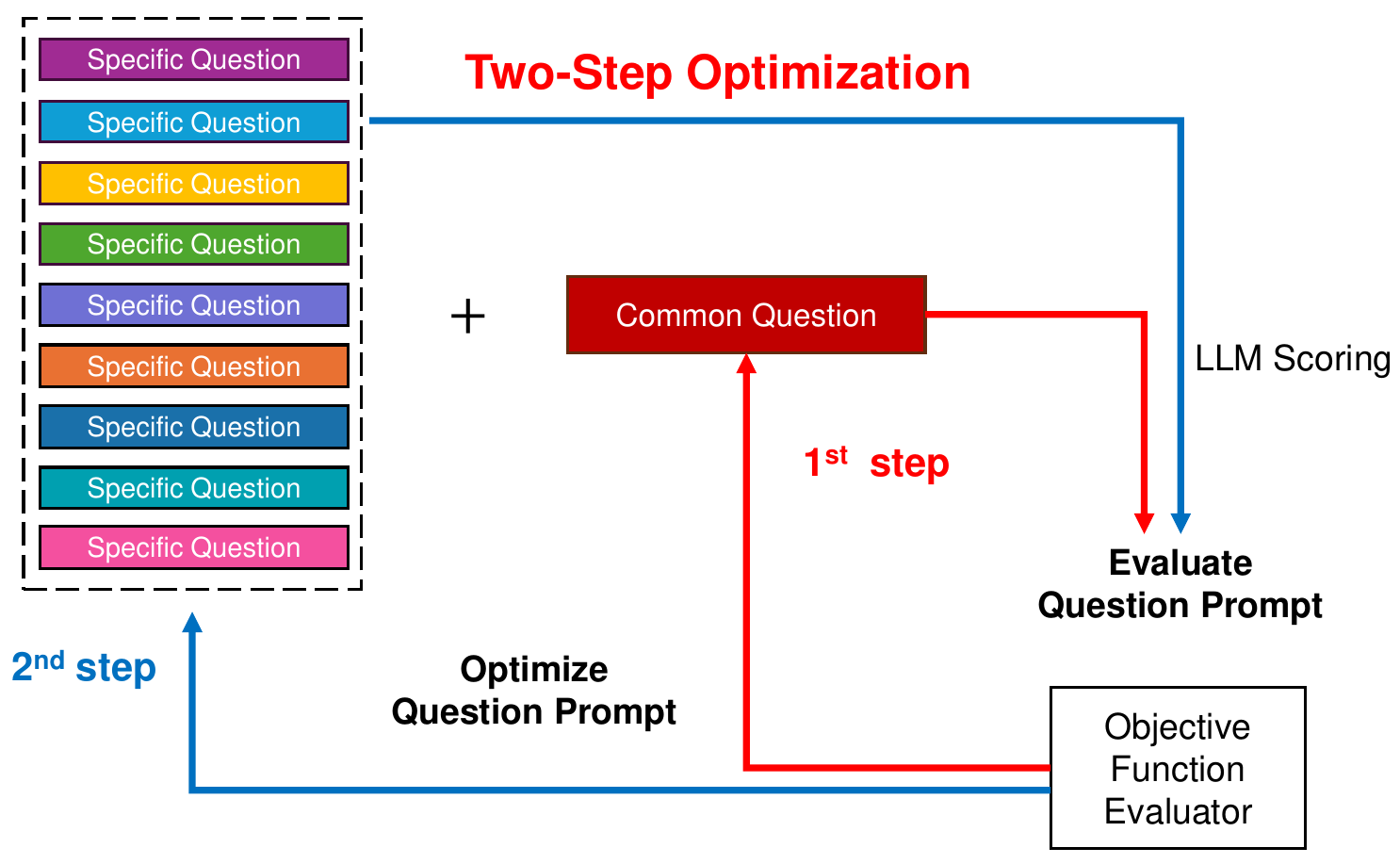}
        \label{fig:two-step}
    }
    \subfloat[\scriptsize Pruning.]{
        \includegraphics[width=0.3\linewidth]{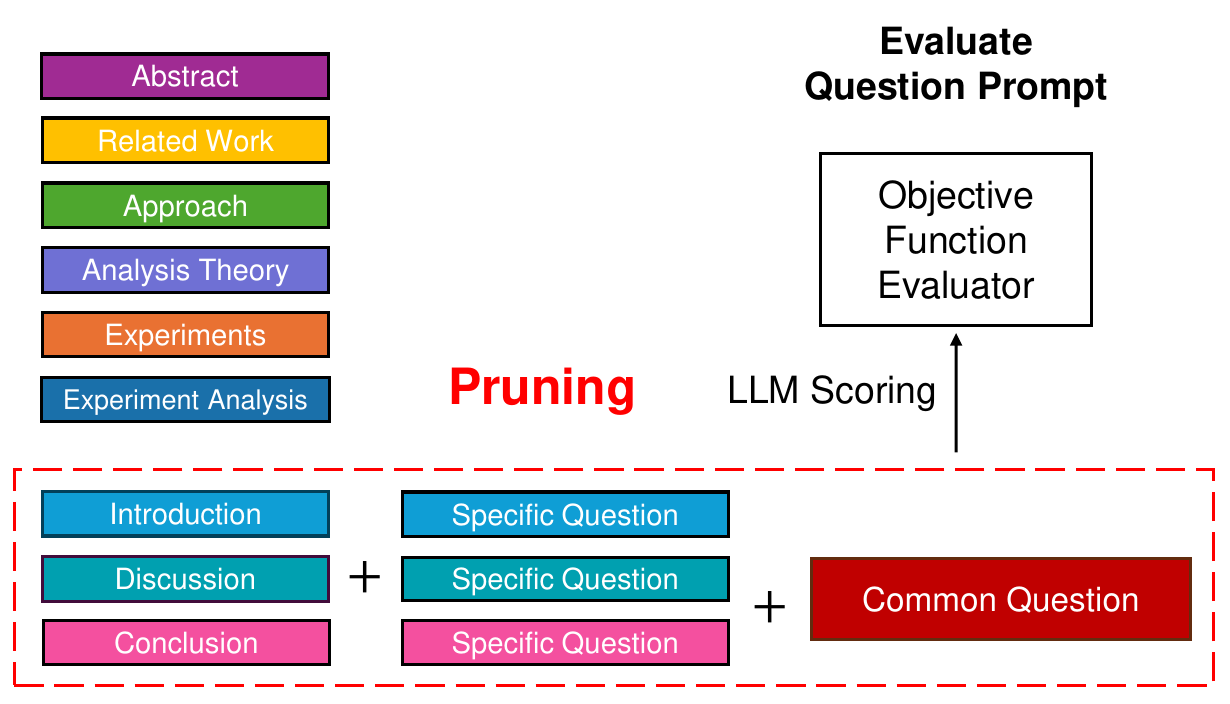}
        \label{fig:pruning}
    }
    \caption{Three types of multi-prompt optimization strategies for two-layer question structure.}
    \label{fig:optimization strategies}
\end{figure*}

\subsubsection{Step-5: Optimize Question Prompt}
\label{sssec:step5_prompt_opt}
The performance of HSPIM depends on the design of innovation questions. To identify appropriate innovation questions for each section type, we employ prompt optimization to optimize the questions. 
Specifically, we focus on a multi-search-space setting, where multiple text chunks from the same paper are processed with different prompt questions to generate section-based QA pairs.  
Our approach conducts multi-round LLM-based QA generation and scoring. Since text chunks are independent, multiple rounds of QA generation and scoring can run in parallel. 
The final innovation score is obtained by a weighted average of all section scores. 

Previous single-prompt optimization methods mainly update one prompt per round and evaluate its effect on a one-step LLM output. 
These approaches are neither suitable nor efficient for our multi-prompt task.
Our method enables simultaneous QA generation and scoring for multiple text chunks to improve time efficiency. 
We propose a genetic algorithm (GA) optimization approach to jointly optimize multiple question prompts across multiple search spaces.

In our GA implementation, we define an individual as a question-prompt combination. As shown in Fig. \ref{initial_individual}, each individual consists of one common question and \( s \) specific questions. All questions are selected from their corresponding question sets.
We initialize the population by randomly generating \( p \) individuals for each paper.
To compare the effectiveness of different question-prompt combinations (individuals), each paper is evaluated using \( p \) different question combinations, producing \( p \) final innovation scores. The GA prompt optimizer calculates the fitness of all \( p \) individuals using RMSE and ranks them accordingly.
Specifically, the fitness function is defined as the RMSE between the predicted and ground-truth innovation scores on the validation set.
The top-performing individuals are selected as parents for the next generation while keeping elite individuals.

The selected parents then undergo crossover and mutation to generate offspring. 
Crossover selects two parents at random. For each section type, the specific question is inherited from either parent with a 50\% probability. Similarly, the common question is inherited from either parent with the same probability. Mutation modifies the offspring based on the predefined mutation rate \(\mu\). For each section type, the specific question has a probability of \(\mu\) to be replaced with another question from the section question set. Likewise, the common question has a probability of \(\mu\) to be replaced with another question from the common question set. 
These offspring replace the previous individuals for the next iteration, and the process continues until the predefined number of iterations is reached.

\subsection{Framework Comparison}
As described in Section \ref{ssec:HSPIM_workflow}, we progressively construct the hierarchical scientific paper innovation measurement (HSPIM) framework, which consists of Step-1 to Step-5. 

Fig. \ref{all_framework} presents three framework variants: section-based scientific paper innovation measurement (SSPIM), a naive implementation of HSPIM, and HSPIM with prompt optimization.
SSPIM includes Step-1, Step-3, and Step-4, and it uses a single-turn prompting approach to achieve the Paper-to-Sections framework. 
Naive HSPIM covers Step-1 to Step-4 and employs a two-turn dialogue, implementing a hierarchical Paper-to-Sections-to-QAs framework.
HSPIM with prompt optimization incorporates all five steps and iteratively optimizes prompts through multiple-turn dialogues.
The novelty and confidence scoring criteria in our scoring prompt follow the ACL-2018 reviewer guidelines\footnote{https://acl2018.org/downloads/acl\_2018\_review\_form.html}. The QA prompt is adapted from the QAG prompt in the QAEA-DR framework \cite{tan2024qaea}.

\subsection{Multi-prompt Optimization Strategies in HSPIM}
\label{sec:Optimization_Strategies}
Fig. \ref{initial_individual} shows an individual (a question-prompt combination) for multi-prompt optimization.
Each prompt combination follows a two-layer question structure, which includes \(s\) specific questions and a common question, where \(s\) denotes the number of predefined section types.

In multi-prompt optimization, question prompts evolve similarly to genetic algorithms, where specific questions function analogously to structural genes affecting distinct traits, and common questions act as regulatory genes influencing broader characteristics \cite{jacob1961genetic,forrest1993genetic}. Each specific question targets precisely one section type, akin to a mutation in a structural gene causing targeted phenotypic changes. In contrast, the common question governs global attributes of innovation evaluation, mirroring the role of regulatory genes in coordinating the expression of multiple genes simultaneously \cite{jacob1961genetic}.

Under the standard genetic algorithm (GA) with this two-layer question structure, we explore optimization strategies including \textbf{joint optimization}, \textbf{two-step optimization}, and \textbf{pruning}, as illustrated in Fig. \ref{fig:optimization strategies}.

\textbf{Joint Optimization:} Fig. \ref{fig:optimization strategies} (a) shows that joint optimization jointly searches for better common and specific question prompts in each GA iteration and updates them as the offspring for the next evaluation round.
It serves as the default optimization strategy, enabling rapid updates of all \(s+1\) question prompts. Hereafter, we denote the HSPIM framework with joint optimization as HSPIM\textsubscript{joint}.

\textbf{Two-step Optimization:} Fig. \ref{fig:optimization strategies} (b) illustrates a two-step optimization process. First, update the common question and complete its optimization. Then, proceed to update \(s\) specific questions. In the following sections, we refer to the HSPIM framework with this two-step optimization as HSPIM\textsubscript{two-step}.

\textbf{Pruning:} From Fig. \ref{fig:optimization strategies} (c), pruning selects a subset of \(s'\) section types (\( s' < s \)) instead of using all \( s \) section types when computing the weighted novelty as the innovation score. 
We traverse all combinations of \(s'\) section types in the training set, select the best-performing one, and apply it to the test set.
In the figure, only three section types are kept for the final calculation (\(s' = 3\)). This strategy helps to identify the most important sections in the dataset and to show their impact on the innovation score. It also helps to reduce the cost in real-world applications.

\section{Theoretical Analysis of HSPIM}
\label{sec:theory}
In this section, we provide a theoretical analysis of our HSPIM framework, focusing on the unbiasedness of the weighted scoring function and the convergence properties of the genetic algorithm-based prompt optimization. Our notation and definitions follow Section \ref{sec:notations}, but we make slight modifications to accommodate the revised bounded interval and unbiasedness assumptions for both innovation attributes and \(\textit{Confidence}_{ik}\).

\subsection{Preliminaries and Assumptions}

Recall that each paper \(p_i\) is divided into \(l_i\) text chunks, \(\{t_{i1}, t_{i2}, \dots, t_{il_i}\}\), and each chunk \(t_{ik}\) is classified by an LLM into one of \(s\) section types. After section-specific QA augmentation, the LLM outputs two scores for each chunk \(t_{ik}\):
\(\textit{Novelty}_{ik}\) and \(\textit{Confidence}_{ik}\). We then form the predicted innovation \(\hat{\textit{Innovation}}_i\) via Eq.~\eqref{eq:predicted_innovation}, while the ground-truth label is \(\textit{Innovation}_i\) from Eq.~\eqref{eq:label_innovation}. Let us define the prediction error for paper \(p_i\) as \(\varepsilon_i = \hat{\textit{Innovation}}_i - \textit{Innovation}_i\).

We make the following assumptions:

\begin{itemize}
    \item[(A1)] \textbf{Bounded Ouputs:} There exist constants \(0 < L < M\) such that \(\textit{Novelty}_{ik}, \textit{Confidence}_{ik} \in [L, M]\) for all \(i,k\). This ensures both scores remain strictly above zero and below \(M\).

    \item[(A2)] \textbf{Section-Based Unbiasedness \emph{(in expectation)}:} 
    For each chunk \(t_{ik}\), the random variables \(\textit{Novelty}_{ik}\) and \(\textit{Confidence}_{ik}\) satisfy \(\mathbb{E}[\textit{Novelty}_{ik}]\allowbreak = n_{ik}^*,\;\mathbb{E}[\textit{Confidence}_{ik}] = c_{ik}^*.\) \(\textit{Novelty}_{ik}\) is independent of \(\textit{Confidence}_{ik}\).
    Here, \(n_{ik}^*\) and \(c_{ik}^*\) represent ``true'' local novelty and confidence values, respectively.

    \item[(A3)] \textbf{Noise Independence Across Section-Chunks:}
    Random variations in \((\textit{Novelty}_{ik}, \textit{Confidence}_{ik})\) across different chunks are independent or exhibit only negligible dependence.
\end{itemize}

These assumptions provide a simplified yet tractable basis for analyzing HSPIM.

\subsection{Convergence of the Weighted Scoring Mechanism}
\label{ssec:weighted_scoring_convergence}
Under (A1)--(A3), the weighted scoring function in \eqref{eq:predicted_innovation} converges in expectation to a ``true'' innovation measure. We outline the key result below, applying the same logic to both \(\textit{Novelty}_{ik}\) and \(\textit{Confidence}_{ik}\) but omitting redundant proofs for brevity.

\begin{lemma}[Section-Based Bias and Variance]
\label{lemma:bias_variance}
For chunk \(t_{ik}\), assume (A1)--(A3). Then
\(\mathbb{E}[\textit{Novelty}_{ik}] = n_{ik}^*,\;\mathbb{E}[\textit{Confidence}_{ik}] = c_{ik}^*.\)
Both \(\textit{Novelty}_{ik}\) and \(\textit{Confidence}_{ik}\) have finite variances since they lie in \([L,M]\).
\end{lemma}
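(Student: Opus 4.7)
The plan is to observe that this lemma is essentially a direct restatement and consequence of the assumptions already imposed in (A1)--(A3), so the proof should be short and mechanical. The main obstacle is not mathematical depth but rather being careful not to introduce any extra machinery beyond what is strictly needed, since over-engineering the proof would obscure the fact that it is a bookkeeping step preparing the ground for the subsequent convergence argument in Section~\ref{ssec:weighted_scoring_convergence}.

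First, I would handle the two expectation claims. These follow verbatim from assumption (A2), which posits $\mathbb{E}[\textit{Novelty}_{ik}] = n_{ik}^*$ and $\mathbb{E}[\textit{Confidence}_{ik}] = c_{ik}^*$. I would state this explicitly but only briefly, to emphasize that these are definitional targets against which variance and later convergence results will be measured.

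Second, I would treat the variance claim by invoking boundedness from (A1). Since $\textit{Novelty}_{ik}$ and $\textit{Confidence}_{ik}$ take values in the compact interval $[L, M]$ with $0 < L < M$, each admits a finite second moment: for any random variable $X$ with $X \in [L,M]$ almost surely, $\mathbb{E}[X^2] \leq M^2 < \infty$, hence $\mathrm{Var}(X) = \mathbb{E}[X^2] - (\mathbb{E}[X])^2 \leq M^2 < \infty$. If I want a tighter bound, I would cite Popoviciu's inequality to get $\mathrm{Var}(X) \leq (M-L)^2 / 4$, which may be useful later when bounding the variance of the aggregated score $\hat{\textit{Innovation}}_i$ in the convergence result. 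Applying this to both $\textit{Novelty}_{ik}$ and $\textit{Confidence}_{ik}$ yields the desired finite-variance conclusion.

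Finally, although assumption (A3) is not strictly needed for this lemma in isolation, I would briefly note that it is recorded here because the subsequent aggregation in Section~\ref{ssec:weighted_scoring_convergence} uses the per-chunk expectations and variances established here together with cross-chunk independence to control the variance of the weighted sum. No obstacle of substance arises; the only subtle point is ensuring that the independence of $\textit{Novelty}_{ik}$ and $\textit{Confidence}_{ik}$ (from (A2)) is not inadvertently conflated with the cross-chunk independence (from (A3)), which I would flag verbally but not exploit in this particular lemma.
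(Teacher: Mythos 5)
Your proposal is correct and matches the paper's treatment: the paper gives no separate proof of Lemma~\ref{lemma:bias_variance}, treating it exactly as you do --- the expectation claims are a verbatim restatement of (A2), and finite variance is immediate from the boundedness in (A1) since $\textit{Novelty}_{ik}, \textit{Confidence}_{ik} \in [L,M]$ implies $\mathbb{E}[X^2] \le M^2 < \infty$. Your optional Popoviciu bound and the remark distinguishing within-chunk from cross-chunk independence are harmless additions beyond what the paper records.
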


\begin{theorem}[Unbiasedness of the weighted scoring function]
\label{thm:unbiased_aggregator}
Let \(\hat{\textit{Innovation}}_i\) be as defined in Eq.~\eqref{eq:predicted_innovation}. Under (A1)--(A3) and Lemma~\ref{lemma:bias_variance}, we have
\begin{equation}
\begin{aligned}
\mathbb{E}[\hat{\textit{Innovation}}_i] 
&= \frac{\sum_{k=1}^{l_i} \mathbb{E}[\textit{Confidence}_{ik}] \cdot \mathbb{E}[\textit{Novelty}_{ik}]}{\sum_{k=1}^{l_i} \mathbb{E}[\textit{Confidence}_{ik}]} \\
&= \frac{\sum_{k=1}^{l_i} c_{ik}^*\,n_{ik}^*}{\sum_{k=1}^{l_i} c_{ik}^*}.
\end{aligned}
\end{equation}
Hence, \(\hat{\textit{Innovation}}_i\) is an unbiased weighted scoring of local novelty values in expectation.
\end{theorem}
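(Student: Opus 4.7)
The plan is to show that the expectation of the weighted ratio in Eq.~\eqref{eq:predicted_innovation} equals the stated ratio of the per-chunk means $c_{ik}^{*}$ and $n_{ik}^{*}$. The argument naturally splits into handling the numerator, which is straightforward by linearity of expectation under the independence structure of (A2) and (A3), and handling the denominator, which is the genuinely subtle step since expectation does not commute with division.

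For the numerator, I would invoke Lemma~\ref{lemma:bias_variance} together with the per-chunk independence in (A2) to conclude that each product satisfies $\mathbb{E}[\textit{Confidence}_{ik}\cdot\textit{Novelty}_{ik}] = c_{ik}^{*}\,n_{ik}^{*}$. Linearity of expectation and Fubini (justified by the boundedness in (A1)) then give $\mathbb{E}\bigl[\sum_{k=1}^{l_i} \textit{Confidence}_{ik}\,\textit{Novelty}_{ik}\bigr] = \sum_{k=1}^{l_i} c_{ik}^{*}\,n_{ik}^{*}$.

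For the denominator, I would condition on the confidence vector $\mathbf{c}_i = (\textit{Confidence}_{i1}, \ldots, \textit{Confidence}_{il_i})$. By (A1) each entry lies in $[L,M]$, so $\sum_{k} \textit{Confidence}_{ik} \ge l_i L > 0$ almost surely and the conditional ratio is well defined. Using the independence in (A2) between $\textit{Novelty}_{ik}$ and $\textit{Confidence}_{ik}$, one obtains
\begin{equation*}
\mathbb{E}[\hat{\textit{Innovation}}_i \mid \mathbf{c}_i] = \frac{\sum_{k=1}^{l_i} \textit{Confidence}_{ik}\,n_{ik}^{*}}{\sum_{k=1}^{l_i} \textit{Confidence}_{ik}}.
\end{equation*}

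The main obstacle is the final step: passing from this conditional ratio to the unconditional ratio of means on the right-hand side of the theorem, since $\mathbb{E}[X/Y] \neq \mathbb{E}[X]/\mathbb{E}[Y]$ in general. I plan to invoke a first-order delta-method linearization of the map $\mathbf{c}_i \mapsto \sum_k c_{ik}\,n_{ik}^{*} / \sum_k c_{ik}$ around the mean vector $(c_{i1}^{*}, \ldots, c_{il_i}^{*})$. The bounded range $[L,M]$ from (A1) keeps the denominator uniformly bounded away from zero, while (A3) controls the variance of the denominator, so the leading-order linearization yields exactly the claimed ratio of expectations, with a correction of order $\mathrm{Var}(\sum_k \textit{Confidence}_{ik})$ that vanishes to first order. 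This approximation step is where the theorem's interpretation, namely exact equality versus a first-order/asymptotic identity, must be pinned down, and it is the most delicate part of the argument.
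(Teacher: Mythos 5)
Your decomposition into numerator and denominator, and in particular your decision to condition on the confidence vector, is a more careful reconstruction of what the paper actually does: the paper's proof sketch simply asserts that ``under mild regularity conditions and the independence assumption in (A2), we can exchange the summation and division,'' i.e.\ it moves the expectation inside both the numerator and the denominator of the ratio in a single step, and then applies \(\mathbb{E}[\textit{Confidence}_{ik}\,\textit{Novelty}_{ik}] = c_{ik}^*\,n_{ik}^*\) exactly as you do for the numerator. Your handling of the numerator and of the conditional expectation \(\mathbb{E}[\hat{\textit{Innovation}}_i \mid \mathbf{c}_i]\) therefore coincides with the paper's intent.

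The difference lies in the final step, and there you have correctly located a genuine gap --- one that is equally present in the paper's own proof. The identity
\begin{equation*}
\mathbb{E}\left[\frac{\sum_{k} \textit{Confidence}_{ik}\, n_{ik}^*}{\sum_{k} \textit{Confidence}_{ik}}\right]
= \frac{\sum_{k} c_{ik}^*\, n_{ik}^*}{\sum_{k} c_{ik}^*}
\end{equation*}
is not an exact consequence of (A1)--(A3): the expectation of a ratio of sums of bounded, non-degenerate random variables does not equal the ratio of the expectations unless the denominator is deterministic or the \(n_{ik}^*\) are all equal (in which case the ratio is constant). Your delta-method linearization makes this explicit --- it recovers the claimed ratio only as the leading-order term, with a residual bias of order \(\mathrm{Var}\bigl(\sum_{k}\textit{Confidence}_{ik}\bigr)\) that does not vanish under the stated assumptions; it would only vanish asymptotically as \(l_i \to \infty\) under (A3), whereas \(l_i\) here is a fixed, small number of sections. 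So your proposal does not prove the theorem as an exact statement, but neither does the paper: its appeal to ``mild regularity conditions'' is precisely the unjustified exchange you flagged. To make the theorem literally true one must either strengthen the hypotheses (deterministic confidence scores, or homogeneous \(n_{ik}^*\)) or weaken the conclusion to the first-order unbiasedness that your argument actually establishes.
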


\begin{proof}[Proof Sketch]
Boundedness (A1) ensures finite variance. Under mild regularity conditions and the independence assumption in (A2), we can exchange the summation and division as follows:
\begin{equation}
\begin{aligned}
\mathbb{E}\bigl[\hat{\textit{Innovation}}_i\bigr]
&= \mathbb{E}\Bigl[
     \frac{\sum_{k=1}^{l_i} \textit{Confidence}_{ik}\,\textit{Novelty}_{ik}}
          {\sum_{k=1}^{l_i} \textit{Confidence}_{ik}}
   \Bigr] \\
&= \frac{\sum_{k=1}^{l_i}\mathbb{E}\bigl[\textit{Confidence}_{ik}\,\textit{Novelty}_{ik}\bigr]}
     {\sum_{k=1}^{l_i}\mathbb{E}\bigl[\textit{Confidence}_{ik}\bigr]}.
\end{aligned}
\end{equation}
Since \(\textit{Novelty}_{ik}\) and \(\textit{Confidence}_{ik}\) are independent, we have:
\begin{equation}
\begin{aligned}
\mathbb{E}[\textit{Confidence}_{ik}\,\textit{Novelty}_{ik}] 
&= \mathbb{E}[\textit{Confidence}_{ik}]\,\mathbb{E}[\textit{Novelty}_{ik}] \\
&= c_{ik}^*\,n_{ik}^*.
\end{aligned}
\end{equation}
Substituting completes the proof.
\end{proof}

\noindent
\textbf{Discussion.} 
If local chunk-level novelty and confidence are unbiased in expectation, the final weighted average preserves unbiasedness for overall innovation. Each chunk focuses on a narrower attribute of novelty, weighted by its confidence.

\subsection{Convergence of GA-Based Prompt Optimization}

We now analyze the genetic algorithm (GA) used to optimize section-specific and common questions.

\begin{definition}[Discrete Prompt Combination]
\label{def:prompt_comb}
Let \(Q_r\) be the set of candidate \emph{specific} questions for section type \(r\), with \(|Q_r| = N_r\), and \(\mathcal{Q}_c\) be the set of candidate \emph{common} questions with \(|\mathcal{Q}_c|=N_c\). A \emph{prompt combination} (or \emph{individual}) is \((q_c, q_1,\ldots,q_s)\) where \(q_c \in \mathcal{Q}_c\) and \(q_r \in Q_r\). The finite search space has size
\(\bigl|\mathcal{Q}_c\bigr|\,\times\,\prod_{r=1}^s |Q_r| = N_c \,\times\,\prod_{r=1}^s N_r.\)
\end{definition}

Given a prompt combination \(x\), HSPIM generates predicted innovation scores \(\hat{\textit{Innovation}}_i(x)\) for each paper \(p_i\). The prediction discrepancy for paper \(p_i\) is defined as \(\varepsilon_i(x)=\hat{\textit{Innovation}}_i(x)-\textit{Innovation}_i.\)
The RMSE over the dataset is computed as
\begin{equation}
\text{RMSE}(x)=\sqrt{\frac{1}{n}\sum_{i=1}^{n}\varepsilon_i(x)^2}.
\end{equation}
The optimal prompt combination is
\begin{equation}
x^*=\arg\min_{x}\,\text{RMSE}(x).
\end{equation}

\begin{theorem}[GA Convergence on Finite Space]
\label{thm:ga_convergence}
Consider an elitist genetic algorithm (GA) with nonzero mutation rate \(\mu > 0\). As the generation number \(G\to\infty\), the probability that the GA population contains the optimal prompt combination \(x^*\) converges to 1:
\begin{equation}
\lim_{G\to\infty}\mathbb{P}\bigl[x^*\text{ is in the population at generation }G\bigr]=1.
\end{equation}
\end{theorem}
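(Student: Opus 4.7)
The plan is to model the elitist GA as a time-homogeneous Markov chain on the finite space of populations, then combine a one-step mutation reachability lemma with the monotonicity induced by elitism to obtain a Borel--Cantelli-style tail bound on the ``never seen $x^*$'' event.

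First, I would formalize the Markov chain. Let $\mathcal{X}$ denote the finite set of prompt combinations from Definition \ref{def:prompt_comb}, with $|\mathcal{X}| = N_c \prod_{r=1}^s N_r$. A population of fixed size $p$ lives in the finite set $\mathcal{X}^p$, and the composition of selection, crossover, mutation, and the elitist preservation rule induces a time-homogeneous Markov chain $\{P_G\}_{G \geq 0}$ on $\mathcal{X}^p$. Let $B_G$ be the event $\{x^* \in P_G\}$; the goal reduces to showing $\mathbb{P}(B_G) \to 1$.

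Second, I would prove a one-step reachability lemma: for any current population $P$, the probability that at least one offspring of the next generation equals $x^*$ is bounded below by a constant $q > 0$ that is independent of $P$. The argument isolates the mutation step: since mutation independently rewrites each of the $s+1$ question slots (one common and $s$ specific) with probability $\mu$, and each candidate question set has finite size, the probability that a single chosen parent is mutated into exactly $x^*$ is at least
\begin{equation}
q_0 \;=\; \prod_{r=1}^{s}\frac{\mu}{N_r}\;\cdot\;\frac{\mu}{N_c} \;>\; 0,
\end{equation}
which holds uniformly over the parent. Accounting for the $p$ independent offspring generated per generation gives the uniform lower bound $q \geq 1 - (1 - q_0)^{p} > 0$.

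Third, I would exploit elitism to conclude. Under the elitist rule (carry the best-so-far individual into the next population), once $x^*$ enters the population it is never displaced, so $B_G \subseteq B_{G+1}$ and the sequence $\mathbb{P}(B_G)$ is non-decreasing. Conditioning on $B_G^c$ and applying the reachability lemma gives $\mathbb{P}(B_{G+1}^c \mid B_G^c) \leq 1 - q$, and iterating yields $\mathbb{P}(B_G^c) \leq (1 - q)^G \to 0$ as $G \to \infty$.

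The main obstacle will be the rigorous derivation of the uniform lower bound $q$, because the composite transition kernel entangles selection, crossover, and mutation. I would handle this by conditioning on the event that mutation alone, applied to any one parent slot, produces $x^*$; since mutation is performed independently of the parent's identity and of the outcomes of selection and crossover, this event has probability at least $q_0$ regardless of upstream randomness. A secondary point to make explicit is the precise form of elitism used in Step-5 (namely, that at least one copy of the current best individual survives into the next generation); stating this assumption explicitly is what guarantees monotonicity of $\{B_G\}$ and rules out the pathological case where a discovered $x^*$ is lost through crossover or mutation in later rounds.
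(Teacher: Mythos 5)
Your proof is correct in substance but takes a genuinely different, more elementary route than the paper. The paper's argument is a two-line appeal to established theory: it observes that an elitist GA with nonzero mutation is a finite-state, irreducible, aperiodic Markov chain, and then invokes Theorem~2 of Rudolph (1996) to conclude almost-sure convergence to the global optimum. You instead build the result from first principles: a one-step reachability lemma (mutation alone can produce \(x^*\) from any parent with probability bounded below by a constant \(q_0>0\) independent of the population), combined with the monotonicity \(B_G \subseteq B_{G+1}\) supplied by elitism, yielding the explicit geometric tail bound \(\mathbb{P}(B_G^c)\le (1-q)^G\). What each buys: the paper's route is shorter and inherits the full strength of Rudolph's theorem, but hides the mechanism inside a citation; yours is self-contained, makes explicit exactly which two properties of the algorithm are load-bearing (uniform mutation reachability and elite retention), and delivers a quantitative convergence rate in terms of \(\mu\), the population size \(p\), and the question-set sizes \(N_c, N_1,\dots,N_s\) --- something the paper's sketch does not provide. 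One small repair to your reachability constant: the per-slot bound \(\mu/N_r\) is not uniform over parents, because a parent whose slot already equals the target question retains it with probability only \(1-\mu\) (mutation replaces it with \emph{another} question). The correct uniform per-slot lower bound is \(\min\{1-\mu,\ \mu/N_r\}\), which is still strictly positive for any \(\mu\in(0,1)\), so the argument goes through unchanged; but as written your \(q_0\) can fail for \(\mu\) close to \(1\). You are also right to flag that the precise elitism rule (at least one copy of the best-so-far individual survives every generation) must be stated explicitly --- the paper assumes this implicitly when it says ``once \(x^*\) is found, it persists.''
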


\begin{proof}[Proof Sketch]
The elitist GA with nonzero mutation probability can be represented as a finite-state, irreducible, and aperiodic Markov chain \cite{davis1993markov}. Nonzero mutation ensures ergodicity, allowing exploration of the entire finite search space, including the global optimum \(x^*\). Elitism guarantees that once \(x^*\) is found, it persists in subsequent generations. 
According to Theorem 2 in \cite{rudolph1996convergence}, which establishes the convergence conditions for evolutionary algorithms, the algorithm converges almost surely to the global optimum.
\end{proof}

\noindent
\textbf{Implications.}
A sufficiently long GA run eventually finds and retains the global optimum \(x^*\). In practice, we stop once performance saturates.

\subsection{Overall Effectiveness and Convergence Discussion}

Combining Theorem~\ref{thm:unbiased_aggregator} and Theorem~\ref{thm:ga_convergence} shows:

\begin{itemize}
    \item \textbf{Local Novelty to Global Innovation:}
    By weighting section-based novelty with confidence, HSPIM yields an unbiased full paper innovation estimate, given unbiasedness in expectation for both scores.
    \item \textbf{Guaranteed Prompt Optimization:}
    Elitist GA on a finite discrete space converges to a global minimum with probability 1, ensuring the best question-prompt combination.
    \item \textbf{Practical Constraints:}
    LLM evaluations incur cost, so GA often stops early. Even partial runs can significantly reduce RMSE compared to single-turn QA method.
\end{itemize}

These results justify why a hierarchical paper-to-sections-to-QAs design and a GA multi-prompt optimizer can effectively capture global innovation via local novelty scoring.

\subsection{Theoretical Justification of HSPIM$^+$}
\label{ssec:HSPIMplus_effectiveness}

To capture the composite nature of innovation, HSPIM$^+$ models it as a function of three core attributes: novelty, contribution, and feasibility. The mathematical structure is:
\begin{equation}
\label{eq:innovation_general_norm}
\textit{Innovation} = f(\textit{Novelty},\, \textit{Contribution},\, \textit{Feasibility}),
\end{equation}
where $f$ is the mapping defined by a $p$-norm with linear normalization (see Definition~\ref{def:HSPIM$^+$}).

We make the following assumptions for all papers $p_i$, chunks $t_{ik}$, and innovation attributes:

\begin{itemize}
    \item[(B1)] All innovation attribute scores (novelty, contribution, feasibility) and their confidence scores are bounded and strictly positive.
    \item[(B2)] Each attribute score is unbiased in expectation and independent of its corresponding confidence score.
    \item[(B3)] Innovation attribute scores and confidence scores are independent across section chunks.
    \item[(B4)] \textbf{Idealized Independence:} For each chunk, the innovation attribute scores and their confidence scores are mutually independent across novelty, contribution, and feasibility; that is, there is no dependence among the three attributes in either scoring or confidence assignment.
\end{itemize}

\noindent
\textbf{Remark.} Assumption (B4) idealizes the three attributes as fully independent for analytical tractability. In practice, some correlation may exist (e.g., high novelty may reduce feasibility), but we follow standard practice in multi-attribute aggregation.

\begin{theorem}[Unbiasedness of Norm-Based HSPIM$^+$]
\label{thm:unbiased_HSPIMplus}
Let $\hat{\textit{Innovation}}_i^{(p)}$ be the innovation score for paper $p_i$ computed by HSPIM$^+$ using the $p$-norm aggregation and linear normalization, as defined in Definition~\ref{def:HSPIM$^+$}, under assumptions (B1)--(B4).

Then, the predicted vector $\mathbf{v}_i = [\textit{Novelty}_i,\, \textit{Contribution}_i,\, \textit{Feasibility}_i]$ satisfies:
\begin{align}
\mathbb{E}[\textit{Novelty}_i] &= n_i^*, \\
\mathbb{E}[\textit{Contribution}_i] &= d_i^*, \\
\mathbb{E}[\textit{Feasibility}_i] &= f_i^*,
\end{align}
where $n_i^*$, $d_i^*$, $f_i^*$ are the true paper-level values for the three attributes, computed as the confidence-weighted means of the corresponding true section-level values.

Moreover, the overall innovation score $\hat{\textit{Innovation}}_i^{(p)}$ is an unbiased, monotonic transformation of these means via the $p$-norm and normalization:
\begin{equation}
\mathbb{E}[\hat{\textit{Innovation}}_i^{(p)}] = 1 + 4 \cdot \frac{ \| [n_i^*, d_i^*, f_i^*] \|_p - x_{\min} }{ x_{\max} - x_{\min} },
\end{equation}
where $x_{\min}$ and $x_{\max}$ are the theoretical minimum and maximum values of the $p$-norm in $[1,5]^3$.
\end{theorem}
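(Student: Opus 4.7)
The plan is to build the proof in two stages that mirror the two assertions of the theorem, with Theorem~\ref{thm:unbiased_aggregator} serving as the main workhorse for the componentwise part.

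First, I would apply Theorem~\ref{thm:unbiased_aggregator} independently to each of the three attribute channels (novelty, contribution, feasibility). Under (B1)--(B3), each pair $(\textit{Novelty}_{ik}, c^{(n)}_{ik})$, $(\textit{Contribution}_{ik}, c^{(d)}_{ik})$, $(\textit{Feasibility}_{ik}, c^{(f)}_{ik})$ satisfies exactly the hypotheses previously used in Section~\ref{ssec:weighted_scoring_convergence}: bounded strictly positive scores, chunk-wise unbiasedness in expectation, independence between the attribute score and its own confidence, and chunk-wise independence across $k$. The idealized cross-attribute independence in (B4) ensures that the three channels do not contaminate one another, so running Theorem~\ref{thm:unbiased_aggregator} once per channel is legitimate and yields
\begin{align*}
\mathbb{E}[\textit{Novelty}_i] &= n_i^* = \frac{\sum_{k=1}^{l_i} c^{*(n)}_{ik}\, n_{ik}^*}{\sum_{k=1}^{l_i} c^{*(n)}_{ik}},\\
\mathbb{E}[\textit{Contribution}_i] &= d_i^*, \quad \mathbb{E}[\textit{Feasibility}_i] = f_i^*.
\end{align*}
This settles the first half of the theorem and gives the componentwise unbiasedness of $\mathbf{v}_i$.

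For the second half, I would note that the normalization $\mathcal{M}(\cdot)$ is an affine map (a linear scaling plus an additive constant) and therefore commutes with expectation exactly. The only non-trivial step is passing the expectation through $\|\cdot\|_p$ applied to the random vector $\mathbf{v}_i$. My plan is to treat HSPIM$^{+}$ as a plug-in aggregator: once every coordinate of $\mathbf{v}_i$ is unbiased for $(n_i^*, d_i^*, f_i^*)$, substituting those true means into the deterministic monotone composite $\mathcal{M}\circ\|\cdot\|_p$ defines exactly the right-hand side of the theorem, after which a direct algebraic expansion of $\mathcal{M}$ produces the $1 + 4(\|\cdot\|_p - x_{\min})/(x_{\max}-x_{\min})$ form.

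The main obstacle is precisely this expectation-norm exchange. Because $\|\cdot\|_p$ is convex for $p \geq 1$, Jensen's inequality forces $\mathbb{E}[\|\mathbf{v}_i\|_p] \geq \|\mathbb{E}[\mathbf{v}_i]\|_p$, with strict inequality whenever the paper-level aggregates retain nonzero variance, and equality only in the degenerate case where $\mathbf{v}_i$ is deterministic. To close this gap honestly, I would lean on a concentration argument already implicit in (B1)--(B3): each paper-level aggregate is a bounded, confidence-weighted average of independent chunk-level quantities, so its variance scales as $O(1/l_i)$ and $\mathbf{v}_i$ concentrates near $[n_i^*, d_i^*, f_i^*]$. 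A first-order delta-method expansion of $\|\cdot\|_p$ around this mean then gives
\begin{equation*}
\mathbb{E}\bigl[\|\mathbf{v}_i\|_p\bigr] \;=\; \bigl\|[n_i^*, d_i^*, f_i^*]\bigr\|_p \;+\; O(1/l_i).
\end{equation*}
I would therefore state the equality in the theorem as exact under the plug-in interpretation and as a first-order identity under (B4), making this reading explicit so that the Jensen slack does not silently undermine the claim. The rest is routine: substitute the componentwise expectations into $\mathcal{M}$ and expand.
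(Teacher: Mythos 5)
Your proposal follows the same overall route as the paper's proof: the componentwise claim is obtained exactly as the paper does it, by invoking Theorem~\ref{thm:unbiased_aggregator} once per attribute channel under (B1)--(B3), and the second claim is obtained by pushing the true means through the deterministic map \(\mathcal{M}\circ\|\cdot\|_p\). Where you differ is in the treatment of the norm step, and your version is the more honest one. The paper's proof disposes of this step in one sentence --- ``since the \(p\)-norm and normalization are deterministic and monotonic, the final score is an unbiased and monotonic transformation of the true means'' --- which does not actually license exchanging expectation with a nonlinear function of the random vector \(\mathbf{v}_i\). You correctly identify the Jensen slack \(\mathbb{E}[\|\mathbf{v}_i\|_p]\ge\|\mathbb{E}[\mathbf{v}_i]\|_p\) for \(p\ge 1\), observe that the stated equality can only be read either as a plug-in definition or as a first-order identity with an \(O(1/l_i)\) correction from concentration of the confidence-weighted averages, and you say so explicitly. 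Two small additions would sharpen this further: for \(p=1\) with coordinates in \([1,5]\) the norm is just the sum, so the exchange is exact by linearity of expectation and neither (B4) nor any delta-method argument is needed there; and note that (B4) is doing essentially no work in either your argument or the paper's for the unbiasedness claims --- per-channel unbiasedness needs only (B1)--(B3), and mutual independence of the three paper-level means does not close the Jensen gap for \(p\in\{2,\infty\}\). In short, your proof is correct under the reading you make explicit, and it surfaces a gap that the paper's own proof papers over.
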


\begin{proof}
By Theorem \ref{thm:unbiased_aggregator}, under (B2)--(B3), each confidence-weighted mean (i.e., $\textit{Novelty}_i$, $\textit{Contribution}_i$, $\textit{Feasibility}_i$) is an unbiased estimator of its true paper-level mean ($n_i^*$, $d_i^*$, $f_i^*$).  

Assumption (B4) further ensures that these three means are mutually independent. Since the $p$-norm and normalization are deterministic and monotonic, the final innovation score $\hat{\textit{Innovation}}_i^{(p)}$ is thus an unbiased and monotonic transformation of the true means. 
\end{proof}

\noindent
\textbf{Summary.} 
HSPIM$^+$ provides an unbiased estimator for composite innovation in expectation, under the independence assumptions.

\section{Experiment}
In this section, we first describe the datasets, evaluation metrics, baselines, and experimental settings. Then, we present the main results and semantic textual similarity analysis.

\begin{table}[ht]
\centering
\caption{Statistics of peer review datasets.}
\label{tab:dataset_stats}
\scriptsize
\begin{tabular}{l|c|c|c}
\hline
Datasets & Domain & \# Papers & \# Reviews \\
\hline
\hline
ICLR-2017  & ML/AI & 427 & 1304 \\
CONLL-2016 & NLP/CL & 22  & 39   \\
ACL-2017   & NLP/CL & 136 & 272  \\
COLING-2020   & NLP/CL & 89 & 112  \\
\hline
\end{tabular}
\end{table}

\subsection{Datasets}
We choose scientific peer review datasets from ACL-2017, CoNLL-2016, and ICLR-2017 in PeerRead \cite{kang2018dataset}, and the COLING-2020 dataset from NLPeer \cite{dycke2023nlpeer}. Table \ref{tab:dataset_stats} shows that ICLR-2017 belongs to the ML/AI domain, while the others are from the NLP/CL domain.
As describe in Step-4 of Section \ref{ssec:HSPIM_workflow}, these datasets provide numeric attribute scores of originality and soundness/correctness to form the ground-truth innovation score.
Note that originality scores are mainly available in NLP-related conferences. 
For ICLR-2017, originality scores were manually annotated by the PeerRead authors and validated through the consistency re-annotation experiment.
All labeled scores range from 1 to 5 (integers only).

\subsection{Metrics}
Our task is to predict innovation scores for \(n\) papers in dataset \(\mathcal{C}\). The objective is to minimize the error between the predicted scores \(\hat{y}_i\) and the ground-truth scores \(y_i\).
We use root mean square error (RMSE) and mean absolute error (MAE) as the main evaluation metrics:

\begin{equation}
\text{RMSE} = \sqrt{\tfrac{1}{n} \sum_{i=1}^{n} (\hat{y}_i - y_i)^2}, \quad
\text{MAE} = \tfrac{1}{n} \sum_{i=1}^{n} |\hat{y}_i - y_i|.
\end{equation}

To measure the semantic similarity between the LLM-generated scoring reasons and the peer review comments, we compute cosine similarity and BERTScore \cite{zhangbertscore}. 
These metrics help to evaluate the effectiveness and interpretability of HSPIM through generated explanations.

\begin{table*}[ht!]
\centering
\caption{
Performance comparison among zero-shot HSPIM, supervised deep learning models, and other zero-shot LLMs. The bold values represent the best results.
}
\label{tab:unfair_comparison}
\scriptsize
\begin{tabular}{c|cccccccc|cc}
\toprule
\multicolumn{1}{c}{Datasets} 
    & \multicolumn{2}{|c}{ICLR-2017} 
    & \multicolumn{2}{c}{CoNLL-2016}
    & \multicolumn{2}{c}{ACL-2017}
    & \multicolumn{2}{c}{COLING-2020}
    & \multicolumn{2}{|c}{\textit{Average}} \\
\midrule
\multicolumn{1}{c|}{Metric}
    & RMSE & MAE
    & RMSE & MAE
    & RMSE & MAE
    & RMSE & MAE
    & RMSE & MAE \\
\midrule
\multicolumn{11}{c}{\textit{Supervised Models}} \\
CNN~\cite{lecun1989backpropagation}
    & 1.2753 & 1.0387
    & 0.7237 & 0.6269
    & 0.7629 & 0.6718
    & 1.1340 & 0.9467
    & 0.9740 & 0.8210 \\

LSTM~\cite{hochreiter1997long}
    & 1.0850 & 0.9124
    & 0.3599 & 0.3125
    & 0.5673 & 0.4270
    & 0.6149 & 0.5334
    & 0.6568 & 0.5463 \\

Transformer~\cite{vaswani2017attention}
    & 0.9915 & 0.8043
    & 0.3911 & 0.3672
    & \textbf{0.5551} & \textbf{0.4210}
    & 0.5844 & 0.4563
    & 0.6305 & 0.5122 \\
\midrule
\multicolumn{11}{c}{\textit{Zero-Shot LLMs}} \\
SEA-E (Mistral-7B)~\cite{yu2024automated}
    & 1.2502 & 1.1373
    & 0.3758 & 0.3611
    & 1.7627 & 1.5000
    & 1.2939 & 1.1154
    & 1.1707 & 1.0284 \\

DeepSeek-V3~\cite{liu2024deepseek}
    & 1.0333 & 0.7647
    & 0.6473 & 0.6389
    & 0.7856 & 0.7000
    & 0.6015 & 0.4808
    & 0.9162 & 0.8032 \\

GPT-4o mini~\cite{achiam2023gpt}
    & 1.1044 & 0.8235
    & 1.0285 & 0.9722
    & 0.6622 & 0.6286
    & 0.8695 & 0.7885
    & 0.7669 & 0.6461 \\
\midrule
DeepSeek-HSPIM\textsubscript{two-step}
    & 0.8629 & 0.7194
    & 0.0907 & 0.0617
    & 1.0964 & 0.8738
    & 0.5955 & 0.4616
    & 0.6614 & 0.5291 \\

DeepSeek-HSPIM\textsubscript{joint}
    & 0.8283 & 0.6966
    & \textbf{0.0638} & \textbf{0.0560}
    & 1.0887 & 0.8572
    & 0.5202 & 0.3933
    & 0.6253 & 0.5008 \\

DeepSeek-HSPIM\textsubscript{pruning=3}
    & 0.8258 & 0.6724
    & 0.0891 & 0.0569
    & 0.9634 & 0.7522
    & 0.4879 & 0.3911
    & \textbf{0.5915} & \textbf{0.4681} \\

GPT-HSPIM\textsubscript{two-step}
    & 0.8685 & 0.7324
    & 0.1776 & 0.0550
    & 1.1052 & 0.8905
    & 0.5087 & 0.4082
    & 0.6650 & 0.5215 \\

GPT-HSPIM\textsubscript{joint}
    & 0.8474 & 0.7160
    & 0.1973 & 0.1736
    & 1.1205 & 0.8897
    & 0.5075 & 0.4129
    & 0.6682 & 0.5481 \\

GPT-HSPIM\textsubscript{pruning=3}
    & \textbf{0.8190} & \textbf{0.6667}
    & 0.2369 & 0.1806
    & 1.0329 & 0.8036
    & \textbf{0.4475} & \textbf{0.3750}
    & 0.6341 & 0.5064 \\

\bottomrule
\end{tabular}%
\end{table*}

\subsection{Baselines}
In the experiments, we compare deep learning (DL) models including CNN \cite{lecun1989backpropagation}, LSTM \cite{hochreiter1997long}, and Transformer \cite{vaswani2017attention}. These baseline DL models perform supervised regression on each dataset. To provide a fairer comparison with zero-shot methods, we also train the DL models on COLING-2020 and evaluate them on the other three datasets. 

We also include baseline LLMs for comparison: DeepSeek-V3 (671B) \cite{liu2024deepseek}, GPT-4o mini \cite{achiam2023gpt}, and SEA-E \cite{yu2024automated}, a peer review generator fine-tuned from Mistral-7B. Baseline LLMs use one-step zero-shot prompt to generate scores directly from the full paper.

In our HSPIM framework, we mainly use DeepSeek-V3 and GPT-4o mini as LLM generators and scorers, denoted as DeepSeek-HSPIM and GPT-HSPIM, respectively. 
We also conduct internal comparisons among three HSPIM variants (SSPIM, naive HSPIM, and HSPIM with prompt optimization) to demonstrate the effectiveness of the hierarchical design and prompt optimization.

\subsection{Hyperparameters and Experimental Setting}
\textbf{LLM Hyperparameters:}
We follow the hyperparameter settings used in prompt optimization methods such as EVOPROMPT. For LLM QA generation, we set the temperature to 1.0 to encourage diverse and informative responses based on the input question and section content.
For LLM scoring, we use a temperature of 0 to reduce randomness and ensure more stable predictions.
We use the APIs of DeepSeek-V3 and GPT-4o mini in our experiments.

For model comparison, we use the peer review generator SEA-E \cite{yu2024automated}, which is fine-tuned on Mistral-7B-Instruct-v0.2. We set the temperature to 0 to generate peer reviews that include soundness and originality scores, which are extracted to compute innovation scores.

\textbf{Genetic Algorithm Hyperparameters:} To test the effectiveness under resource-limited scenarios, we set the number of iterations \( I = 5 \) and the population size \(P\) to 10. 
The predefined specific and common question sets each contain 11 candidates. The initial question-prompt combination used in naive HSPIM is selected from these sets.
We set the mutation rate to 10\%, and the number of elite individuals to \(\max\{1,\, 0.2 \times P\}\). The fitness function uses RMSE to evaluate prediction performance.

\textbf{Supervised Baseline Settings:} 
We use the PeerRead setup for deep learning models and take the first 1000 tokens of each paper as input.
We select only the originality and soundness/correctness attribute scores for training.
The training epochs are set to 10 for CNN, LSTM, and Transformer. 
For CNN and LSTM, we set the number of layers to 2, the dropout rate to 0.5, the learning rate to 0.001, and the batch size to 64. For the Transformer encoder, we use 12 attention heads, a hidden size of 768, a learning rate of 2e-5, and a batch size of 8. All results are averaged over three runs.

\begin{table}[ht!]
\centering
\caption{
Performance comparison on ICLR-2017, CoNLL-2016, and ACL-2017: zero-shot HSPIM vs. DL models trained on COLING-2020. The bold values represent the best results.
}
\label{tab:fair_comparison}
\scriptsize
\setlength{\tabcolsep}{0.4pt}
\begin{tabular}{c|cc|cc|cc}
\toprule
\multicolumn{1}{c|}{Datasets}
& \multicolumn{2}{c|}{ICLR-2017}
& \multicolumn{2}{c|}{CoNLL-2016}
& \multicolumn{2}{c}{ACL-2017} \\
\hline
\multicolumn{1}{c|}{Metrics}
& RMSE & MAE
& RMSE & MAE
& RMSE & MAE \\
\midrule
CNN \cite{lecun1989backpropagation}
    & 1.0172 & 0.7925
    & 0.6973 & 0.5296
    & 1.6393 & 1.4149 \\
LSTM \cite{hochreiter1997long}
    & 1.1350 & 1.0049
    & 0.9801 & 0.8417
    & 1.1122 & 1.0281 \\
Transformer \cite{vaswani2017attention}
    & 0.9135 & 0.7470
    & 0.8719 & 0.7718
    & 1.0225 & 0.9047 \\
\midrule
DeepSeek-HSPIM\textsubscript{two-step}
    & 0.8629 & 0.7194
    & 0.0907 & 0.0617
    & 1.0964 & 0.8738 \\
DeepSeek-HSPIM\textsubscript{joint}
    & 0.8283 & 0.6966
    & \textbf{0.0638} & \textbf{0.0560}
    & 1.0887 & 0.8572 \\
DeepSeek-HSPIM\textsubscript{pruning=3}
    & 0.8258 & 0.6724
    & 0.0891 & 0.0569
    & \textbf{0.9634} & \textbf{0.7522} \\
GPT-HSPIM\textsubscript{two-step}
    & 0.8685 & 0.7324
    & 0.1776 & 0.0550
    & 1.1052 & 0.8905 \\
GPT-HSPIM\textsubscript{joint}
    & 0.8474 & 0.7160
    & 0.1973 & 0.1736
    & 1.1205 & 0.8897 \\
GPT-HSPIM\textsubscript{pruning=3}
    & \textbf{0.8190} & \textbf{0.6667}
    & 0.2369 & 0.1806
    & 1.0329 & 0.8036 \\
\bottomrule
\end{tabular}
\end{table}

\subsection{Main Results}
This section presents the main experimental results. We compare our HSPIM framework with baseline models on four datasets using RMSE and MAE as evaluation metrics.
In the following results, HSPIM\textsubscript{two-step} and HSPIM\textsubscript{joint} denote the two-step and joint optimization for HSPIM, respectively.
HSPIM\textsubscript{pruning=3} selects three section types (\(s' = 3\)) to compute the innovation scores, following Section \ref{sec:Optimization_Strategies}.
We use HSPIM\textsubscript{joint} as the default framework to represent HSPIM with prompt optimization in the following experiments.

\subsubsection{Zero-shot HSPIM vs. Supervised DL Models and Zero-shot LLMs}
Table \ref{tab:unfair_comparison} shows that HSPIM framework outperforms both supervised baseline models and zero-shot baseline LLMs in terms of RMSE and MAE. Compared to one-step zero-shot prompting on full papers, HSPIM based on paper-to-sections-to-QAs decomposition reduces the average RMSE by 35.4\% (from 0.916 to 0.592) for DeepSeek and by 17.3\% (from 0.770 to 0.634) for GPT. 
Besides, our HSPIM outperforms all supervised baselines. This proves that the zero-shot approach can generalize well. For instance, DeepSeek-HSPIM\textsubscript{pruning=3} achieves a 6.2\% performance improvement over the supervised Transformer.

\begin{figure}[ht!]
  \centering
  \includegraphics[width=0.45\textwidth]{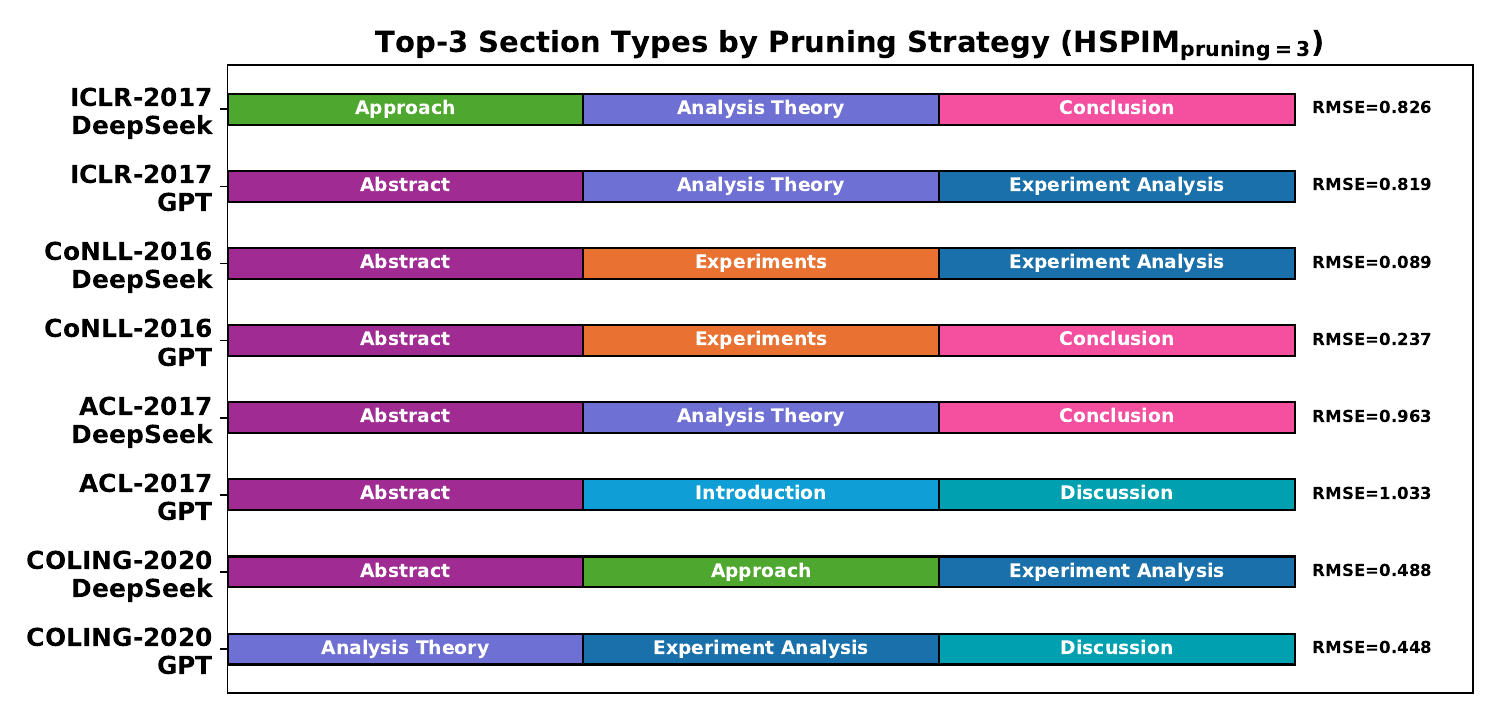}
  \caption{Top-3 section types selected by HSPIM pruning strategy (pruning=3) across different datasets and LLMs. The color bars show the combinations of section types that result in the best performance, with RMSE values shown on the right.}
  \label{pruning_plot}
\end{figure}

Additionally, we observe the following:  
(i) DeepSeek-HSPIM achieves better overall performance than GPT-HSPIM, which is reasonable because DeepSeek-V3 has stronger fundamental capabilities.
(ii) HSPIM\textsubscript{pruning=3} achieves the best results among the three optimization strategies.
Fig. \ref{pruning_plot} shows the top-3 selected section types across different datasets and LLMs. We observe that \textit{Abstract}, \textit{Analysis Theory}, \textit{Experiment Analysis}, and \textit{Conclusion} frequently appear. It indicates these sections contain important novelty explanations and method contributions for measuring innovation.
We believe section pruning helps to filter out potential noisy section types. 
In addition, it guides human reviewers to focus on specific sections that most significantly contribute to innovation analysis.
(iii) HSPIM achieves relatively lower performance on the ACL-2017 dataset due to dataset differences. 
ACL-2017 has a higher average ground-truth innovation score (3.97) on the test set than ICLR-2017 (3.57), CoNLL-2016 (3.48), and COLING-2020 (3.44). Section \ref{ssec:score_distribution} presents an analysis of the ground-truth innovation score distributions.
Supervised baselines perform better since both training and test sets in ACL-2017 have high scores. 
LLM baselines (DeepSeek-V3 and GPT-4o mini) also perform well on ACL-2017 because LLMs tend to assign higher scores to longer texts \cite{zheng2023judging}. 
This tendency matches the scoring distribution of ACL-2017. However, LLM baselines perform poorly on other datasets.

\subsubsection{Zero-Shot HSPIM vs. DL Models (Transfer from COLING-2020)}
Table \ref{tab:fair_comparison} compares HSPIM with DL models trained on COLING-2020 and evaluated on the remaining three datasets.
This setting provides a fairer comparison. HSPIM significantly outperforms DL models on ICLR-2017, CoNLL-2016, and ACL-2017. Compared to the best results of CNN, LSTM, and Transformer, HSPIM achieves average improvements of 44.0\%, 41.8\%, and 33.1\%, respectively. These results confirm the model generalization ability of the zero-shot HSPIM framework.

\begin{table}[ht]
\centering
\caption{RMSE comparison across our proposed frameworks. The bold values represent the best results.}
\label{tab:HSPIM_framework_comparison}
\scriptsize
\setlength{\tabcolsep}{2.5pt}
\begin{tabular}{@{}c|c|c|c|c|c@{}}
\toprule
\multicolumn{1}{c}{Dataset} &  & LLMs & SSPIM & Naive HSPIM & HSPIM$_{\text{joint}}$ \\ 
\midrule
\midrule
\multirow{4}{*}{ICLR-2017}  
    & \multirow{2}{*}{train} 
        & DeepSeek & 0.9908 & 0.9366 & 0.9316 \\
    & 
        & GPT      & 0.9687 & 0.9688 & \textbf{0.9237} \\ 
    \cmidrule(l){2-6}
    & \multirow{2}{*}{test}  
        & DeepSeek & 0.8656 & 0.8531 & 0.8283 \\
    & 
        & GPT      & \textbf{0.7841} & 0.9143 & 0.8474 \\ 
\midrule
\multirow{4}{*}{CoNLL-2016} 
    & \multirow{2}{*}{train} 
        & DeepSeek & 0.6644 & 0.5710 & \textbf{0.5657} \\
    & 
        & GPT      & 0.6553 & 0.6464 & 0.6025 \\ 
    \cmidrule(l){2-6}
    & \multirow{2}{*}{test}  
        & DeepSeek & 0.2063 & 0.2184 & \textbf{0.0638} \\
    & 
        & GPT      & 0.3001 & 0.0974 & 0.1973 \\ 
\midrule
\multirow{4}{*}{ACL-2017} 
    & \multirow{2}{*}{train} 
        & DeepSeek & 1.0406 & 0.8745 & \textbf{0.8668} \\
    & 
        & GPT      & 0.9252 & 0.9906 & 0.8756 \\ 
    \cmidrule(l){2-6}
    & \multirow{2}{*}{test}  
        & DeepSeek & 1.3505 & 1.1216 & \textbf{1.0887} \\
    & 
        & GPT      & 1.1448 & 1.1801 & 1.1205 \\ 
\midrule
\multirow{4}{*}{COLING-2020} 
    & \multirow{2}{*}{train} 
        & DeepSeek & 0.6957 & 0.7077 & 0.6704 \\
    & 
        & GPT      & 0.6411 & 0.6542 & \textbf{0.4692} \\ 
    \cmidrule(l){2-6}
    & \multirow{2}{*}{test} 
        & DeepSeek & 0.5645 & 0.6339 & 0.5202 \\
    & 
        & GPT      & 0.5151 & 0.5538 & \textbf{0.5075} \\
\bottomrule
\end{tabular}
\end{table}

\subsubsection{HSPIM Framework Comparison}
Table \ref{tab:HSPIM_framework_comparison} shows the performance comparison of three framework variants: section-based scientific paper innovation measurement (SSPIM), HSPIM using the initial prompt combination (naive HSPIM), and HSPIM with GA-based joint optimization strategy (HSPIM\textsubscript{joint}). We report results on both the training set and the test set. 
We observe a consistent decrease in RMSE from SSPIM to naive HSPIM, and further to HSPIM\textsubscript{joint}, which highlights the importance of each component.
Specifically, naive HSPIM outperforms SSPIM, which shows the effectiveness of \textbf{QA augmentation}. HSPIM\textsubscript{joint} improves naive HSPIM, which confirms the benefit of \textbf{multi-prompt optimization}.

\begin{table*}[ht]
\centering
\caption{Performance of GPT-HSPIM\textsuperscript{+} with different norms and optimization strategies across datasets.}
\label{tab:HSPIM_plus_results}
\scriptsize
\begin{tabular}{ll|cc|cc|cc|cc|cc}
\toprule
\multirow{2}{*}{Norm} & \multirow{2}{*}{Model} 
& \multicolumn{2}{c|}{ICLR-2017} 
& \multicolumn{2}{c|}{CoNLL-2016}
& \multicolumn{2}{c|}{ACL-2017}
& \multicolumn{2}{c|}{COLING-2020}
& \multicolumn{2}{c}{\textit{Average}} \\
& & RMSE & MAE & RMSE & MAE & RMSE & MAE & RMSE & MAE & RMSE & MAE \\
\midrule
\multirow{3}{*}{$L_1$}
& HSPIM\textsuperscript{+}\textsubscript{joint}     & 0.7695 & 0.6090 & 0.3992 & 0.3947 & 0.9461 & 0.7865 & 0.4604 & 0.3806 & 0.6438 & 0.5427 \\
& HSPIM\textsuperscript{+}\textsubscript{two-step} & 0.7904 & 0.6372 & 0.4279 & 0.4128 & 0.9386 & 0.7801 & 0.4529 & 0.3870 & 0.6525 & 0.5543 \\
& HSPIM\textsuperscript{+}\textsubscript{pruning=3}  & \textbf{0.3944} & \textbf{0.2667} & \textbf{0.2873} & \textbf{0.2870} & 0.8440 & 0.7262 & \textbf{0.2873} & \textbf{0.2870} & \textbf{0.4533} & \textbf{0.3917} \\
\midrule
\multirow{3}{*}{$L_2$}
& HSPIM\textsuperscript{+}\textsubscript{joint}     & 0.7693 & 0.6078 & 0.4038 & 0.3993 & 0.9425 & 0.7848 & 0.4621 & 0.3827 & 0.6444 & 0.5437 \\
& HSPIM\textsuperscript{+}\textsubscript{two-step} & 0.7893 & 0.6349 & 0.4318 & 0.4169 & 0.9348 & 0.7781 & 0.4533 & 0.3873 & 0.6523 & 0.5543 \\
& HSPIM\textsuperscript{+}\textsubscript{pruning=3}  & 0.3970 & 0.2714 & 0.2986 & 0.2981 & 0.8409 & 0.7246 & 0.2986 & 0.2981 & 0.4588 & 0.3980 \\
\midrule
\multirow{3}{*}{$L_\infty$}
& HSPIM\textsuperscript{+}\textsubscript{joint}     & 0.8142 & 0.5864 & 0.6336 & 0.6254 & 0.7978 & 0.7047 & 0.5833 & 0.4795 & 0.7072 & 0.5990 \\
& HSPIM\textsuperscript{+}\textsubscript{two-step} & 0.7845 & 0.5655 & 0.6473 & 0.6389 & 0.7854 & 0.6899 & 0.5330 & 0.4569 & 0.6875 & 0.5878 \\
& HSPIM\textsuperscript{+}\textsubscript{pruning=3}  & 0.5217 & 0.3667 & 0.6473 & 0.6389 & \textbf{0.7676} & \textbf{0.6786} & 0.6473 & 0.6389 & 0.6460 & 0.5808 \\
\bottomrule
\end{tabular}
\end{table*}

\subsubsection{HSPIM$^+$ Results}
\label{ssec:HSPIM$^+$}
Following Definition \ref{def:HSPIM$^+$}, HSPIM$^+$ measures innovation as a norm-based combination of novelty, contribution, and feasibility. As shown in Table \ref{tab:HSPIM_plus_results}, the choice of aggregation norm ($L_1$, $L_2$, $L_\infty$) has a notable effect on performance. 

The results in Table~\ref{tab:HSPIM_plus_results} show that the $L_1$ and $L_2$ norms yield similar performance across all datasets and optimization strategies. 
Notably, the $L_\infty$ norm achieves the lowest errors on the ACL-2017 dataset, outperforming both $L_1$ and $L_2$.
For all norm choices, the pruning strategy achieves the lowest RMSE and MAE, indicating its effectiveness.

Compared to the original HSPIM framework (Table~\ref{tab:unfair_comparison}), HSPIM$^+$ achieves a lower best average RMSE and MAE (0.4533 and 0.3917) than HSPIM (0.5915 and 0.4681).
This improvement is particularly clear on ACL-2017, where HSPIM$^+$ outperforms zero-shot LLM baselines and matches the performance of supervised models. 
These results demonstrate that modeling innovation as a composite of novelty, contribution, and feasibility through norm-based aggregation is both theoretically justified and empirically validated.


\begin{table}[ht]
\centering
\caption{
Performance comparison on a subset of the PMC Article Dataset and the arXiv physics.ins-det corpus. 
\textbf{Bold} highlights the best results and \underline{underline} highlights the second best results.
}
\label{tab:bmc_arxiv_dev_test}
\scriptsize
\begin{tabular}{l|cc|cc}
\toprule
Datasets & \multicolumn{2}{c|}{PMC} & \multicolumn{2}{c}{arXiv} \\ 
\cmidrule(lr){2-3}\cmidrule(lr){4-5}
Metrics & RMSE & MAE & RMSE & MAE \\ 
\midrule
\multicolumn{1}{l}{} & \multicolumn{4}{c}{\textit{Supervised Models}} \\
CNN & 1.5481 & 1.3022 & 1.4870 & 1.2244 \\
RNN & 1.0187 & 0.7878 & 1.1123 & 0.8823 \\
Transformer & \underline{0.8884} & \underline{0.6814} & \textbf{0.9380} & \textbf{0.7540} \\
\midrule
\multicolumn{1}{l}{} & \multicolumn{4}{c}{\textit{Zero-Shot LLMs}} \\
GPT-4o mini & 3.1006 & 2.9105 & 1.5711 & 1.3542 \\
DeepSeek-V3 & 3.2957 & 3.1779 & 1.2333 & 1.0221 \\
\midrule
GPT-HSPIM & \textbf{0.8824} & 0.6853 & 1.0371 & 0.8256 \\
DeepSeek-HSPIM & 0.8956 & \textbf{0.6755} & \underline{0.9424} & \underline{0.7554} \\
\bottomrule
\end{tabular}
\end{table}

\subsubsection{Domain Generalization Experiments}
\label{ssec:domain_generalization}
To evaluate the domain generalization performance of the HSPIM framework, we conduct experiments on subsets of the PubMed Central (PMC) Article Datasets and the arXiv physics.ins-det (Instrumentation and Detectors) collection. 
Since both datasets lack peer review scores, following previous studies, we use citation counts as a proxy for innovation scores \cite{amplayo2018network}. 
We then map citation counts to ground-truth innovation scores via a nonlinear transformation that approximates a normal distribution, so that the resulting score distribution closely fits the ground-truth innovation score distribution observed in Section \ref{ssec:score_distribution}. 
For PMC, we select 95 biochemical articles published in 2004 from the journal \textit{Bioinformatics} to ensure uniformity in publication venue, year, and domain. 
For arXiv, we use the 2018 corpus of \textit{physics.ins-det} (Instrumentation and Detectors), comprising 276 papers on detector design, readout and DAQ, performance evaluation, simulation, and AI-assisted control.

As shown in Table \ref{tab:bmc_arxiv_dev_test}, HSPIM achieves the best RMSE and MAE on PMC and the second-best scores on arXiv, and it outperforms zero-shot LLM baselines on both datasets. These results indicate strong cross-domain performance and support the general applicability of HSPIM.

\begin{table}[ht]
\centering
\caption{Semantic textual similarity between LLM-generated scoring reasons and actual peer review comments. The bold values represent the best results.}
\label{tab:semantic_similarity}
\setlength{\tabcolsep}{2.5pt}
\scriptsize
\begin{tabular}{lccc|ccc}
\toprule
\multirow{2}{*}{Dataset} 
  & \multicolumn{3}{c|}{Cosine Similarity} 
  & \multicolumn{3}{c}{BERTScore} \\
\cmidrule(lr){2-4}\cmidrule(lr){5-7}
 & SEA-E & SSPIM & HSPIM$_{\text{joint}}$ & SEA-E & SSPIM & HSPIM$_{\text{joint}}$ \\
\midrule
ICLR-2017   & 0.6532 & 0.7761 & \textbf{0.8551} & 0.8402 & 0.8688 & \textbf{0.8758} \\
CoNLL-2016  & 0.6712 & 0.7938 & \textbf{0.8529} & 0.8284 & 0.8546 & \textbf{0.8627} \\
ACL-2017    & 0.6516 & 0.8131 & \textbf{0.8602} & 0.8270 & 0.8532 & \textbf{0.8600} \\
COLING-2020 & 0.7001 & 0.7847 & \textbf{0.8342} & 0.8401 & 0.8553 & \textbf{0.8581} \\
\bottomrule
\end{tabular}
\end{table}

\subsection{Semantic Textual Similarity}
Table \ref{tab:semantic_similarity} reports the semantic similarity between scoring rationales generated by LLMs and actual peer review comments. The experiment compares SEA-E, SSPIM, and HSPIM in terms of how closely their generated evaluation texts align with real reviews. 
In SSPIM and HSPIM, LLMs are prompted to output not only the \textit{``novelty\_score''} and \textit{``confidence\_score''} but also a \textit{``reason''} to explain the novelty score. To measure semantic similarity, we use cosine similarity and BERTScore \cite{zhangbertscore}. Cosine similarity is computed with sentence embeddings from the gte-large-en-v1.5 model \cite{zhang2024mgte}. BERTScore uses the longformer-base-4096 \cite{beltagy2020longformer} model to support long inputs.
HSPIM\textsubscript{joint} achieves higher scores than SEA-E and SSPIM, with an average Cosine Similarity improvement of 27.3\% and 7.4\%, and a BERTScore improvement of 3.6\% and 0.7\%, respectively.
Compared to SEA-E and SSPIM, HSPIM generates scoring rationales that are semantically closer to real peer review comments. Therefore, our HSPIM framework can not only assess innovation quantitatively but also explain it qualitatively.

\begin{table}[ht]
\centering
\caption{RMSE comparison of different LLM-based HSPIM\textsubscript{joint}. The bold values represent the best results.}
\label{tab:open_source_llm_rmse}
\scriptsize
\begin{tabular}{lccc|cc}
\toprule
Dataset & Llama & Mistral & Qwen & DeepSeek & GPT \\
\midrule
ICLR-2017    & 0.8128 & \textbf{0.7606} & 0.7968 & 0.8283 & 0.8474 \\
CoNLL-2016   & 0.2036 & 0.3309 & 0.3800 & \textbf{0.0638} & 0.1973 \\
ACL-2017     & 1.0552 & \textbf{0.9224} & 0.9559 & 1.0887 & 1.1205 \\
COLING-2020  & 0.5713 & \textbf{0.4946} & 0.5038 & 0.5202 & 0.5075 \\
\bottomrule
\end{tabular}
\end{table}

\subsection{LLM-based HSPIM Comparison}
\label{ssec:llm_hspim_comparison}
Table \ref{tab:open_source_llm_rmse} compares the RMSE results of HSPIM\textsubscript{joint} using different LLMs. In addition to DeepSeek-V3 and GPT-4o Mini, we also deploy three small open-source LLMs: LLaMA-3-8B \cite{touvron2023llama}, Mistral-7B \cite{jiang2023mistral}, and Qwen2.5-7B \cite{yang2024qwen2}.
We see that HSPIM performs well on small-scale LLMs, showing its potential to support innovation assessment with lightweight models. Notably, Mistral-7B outperforms other LLMs on three out of four datasets.

\begin{table}[h]
\centering
\caption{RMSE comparison of SSPIM and HSPIM$_{\text{joint}}$ with and without confidence scores on test sets. The bold values represent the best results.}
\label{tab:ablation_confidence_score}
\scriptsize
\begin{tabular}{lcccc}
\toprule
\multirow{2}{*}{Dataset} 
  & \multicolumn{2}{c}{SSPIM} 
  & \multicolumn{2}{c}{HSPIM$_{\text{joint}}$} \\
\cmidrule(lr){2-3}\cmidrule(lr){4-5}
  & w/o conf. & w/ conf. & w/o conf. & w/ conf. \\
\midrule
ICLR-2017   & 0.8657 & \textbf{0.8656} & \textbf{0.8262} & 0.8283 \\
CoNLL-2016  & 0.2297 & \textbf{0.2063} & 0.0660 & \textbf{0.0638} \\
ACL-2017    & 1.3730 & \textbf{1.3505} & 1.1056 & \textbf{1.0887} \\
COLING-2020 & 0.5730  & \textbf{0.5645} & 0.5273 & \textbf{0.5202} \\
\bottomrule
\end{tabular}
\end{table}

\begin{table}[h]
\centering
\caption{Effectiveness of two-layer question structure on HSPIM framework. The bold values represent the best results.}
\label{tab:ablation_prompt_design}
\setlength{\tabcolsep}{6pt}
\scriptsize
\begin{tabular}{llcc}
\toprule
Dataset & Question Prompt Type & RMSE & MAE \\
\midrule
\multirow{3}{*}{ICLR-2017} 
    & Specific           & 0.9034 & 0.7881 \\
    & Common             & 0.8642 & 0.7217 \\
    & Specific+Common    & \textbf{0.8283} & \textbf{0.6966} \\
\midrule
\multirow{3}{*}{CoNLL-2016} 
    & Specific           & 0.0793 & 0.0614 \\
    & Common             & 0.1735 & 0.1389 \\
    & Specific+Common    & \textbf{0.0638} & \textbf{0.0560} \\
\midrule
\multirow{3}{*}{ACL-2017} 
    & Specific           & 1.2948 & 1.0878 \\
    & Common             & 1.1271 & 0.8975 \\
    & Specific+Common    & \textbf{1.0887} & \textbf{0.8572} \\
\midrule
\multirow{3}{*}{COLING-2020} 
    & Specific           & 0.5899 & 0.4618 \\
    & Common             & 0.5359 & 0.4201 \\
    & Specific+Common    & \textbf{0.5202} & \textbf{0.3933} \\
\bottomrule
\end{tabular}
\end{table}

\section{Experimental Analysis}

\subsection{Ablation Study}

In this section, we conduct ablation study on confidence score and two-layer question structure.

\subsubsection{Ablation Study on Confidence Score}
Table \ref{tab:ablation_confidence_score} compares the performance with (w/) and without (w/o) using confidence scores. 
The table reports RMSE results for SSPIM and HSPIM\textsubscript{joint} on DeepSeek-V3. Using confidence scores to do weighted innovation scoring helps to reduce RMSE in both SSPIM and HSPIM frameworks. This result shows that adding confidence scores is helpful to connect local novelty evaluations to full-paper innovation.

\subsubsection{Ablation Study on Two-Layer Question Structure}
Table \ref{tab:ablation_prompt_design} presents the results of the ablation study on two-layer question structure under DeepSeek-HSPIM.
We can see that combining specific and common questions helps to reduce RMSE by 13.90\% compared to using only specific questions, and by 18.43\% compared to using only common questions. 
These results confirm the effectiveness of the two-layer question structure.

\begin{figure*}[!ht]
    \centering
    \subfloat[\scriptsize ICLR-2017]{
        \includegraphics[width=0.4\linewidth]{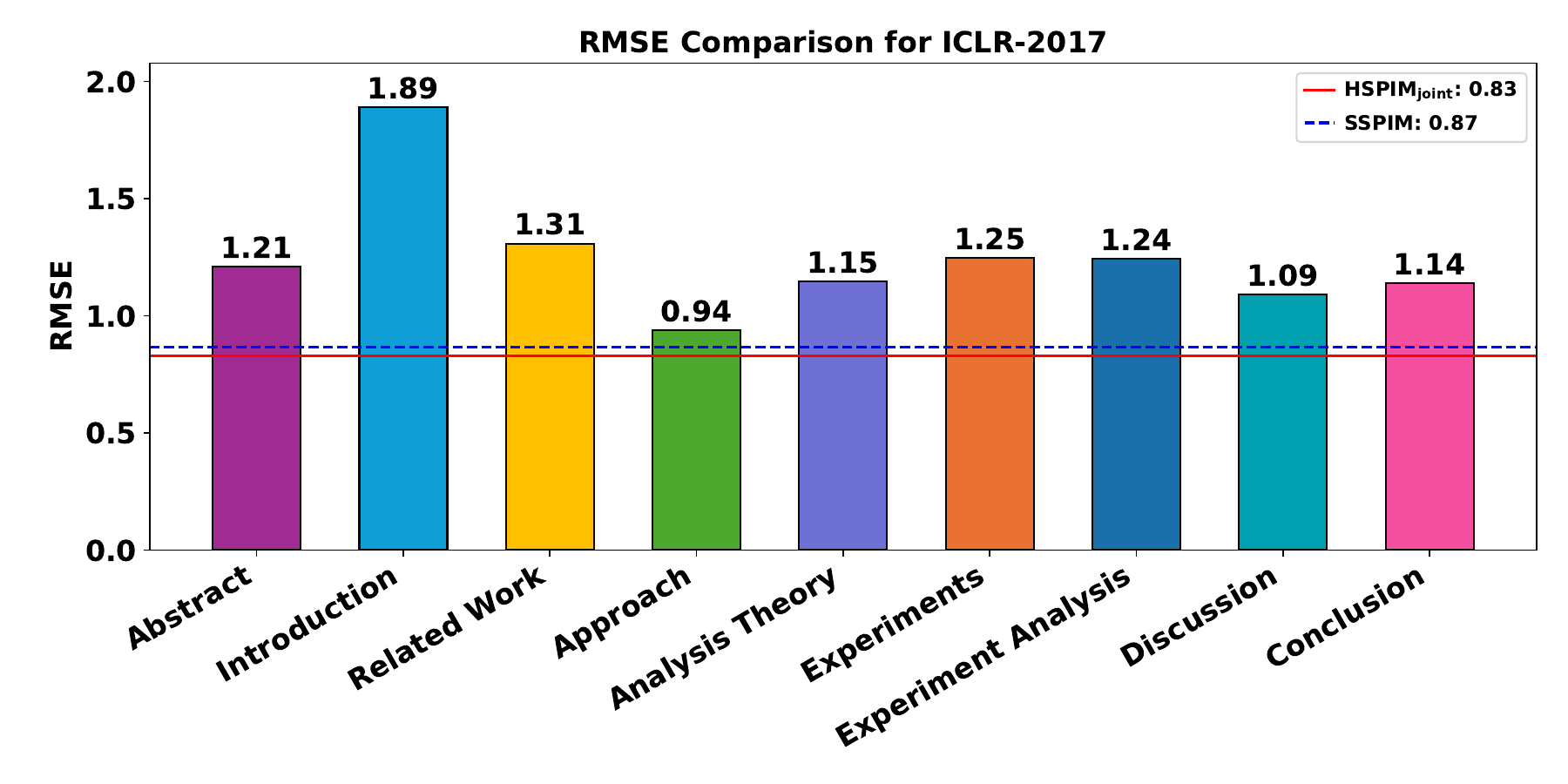}
        \label{fig:iclr2017}
    }
    \subfloat[\scriptsize CoNLL-2016]{
        \includegraphics[width=0.4\linewidth]{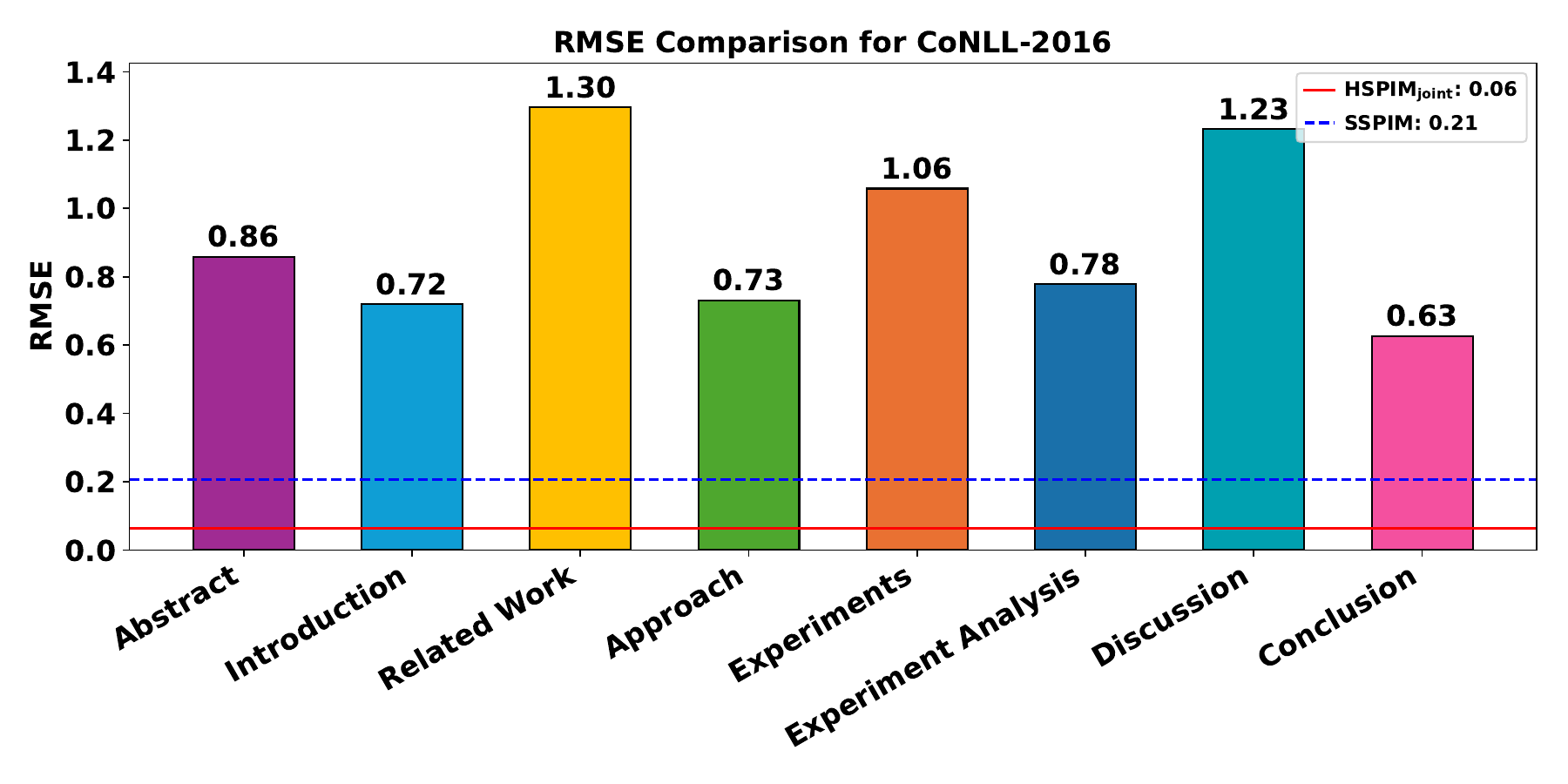}
        \label{fig:conll2016}
    }
    
    \subfloat[\scriptsize ACL-2017]{
        \includegraphics[width=0.4\linewidth]{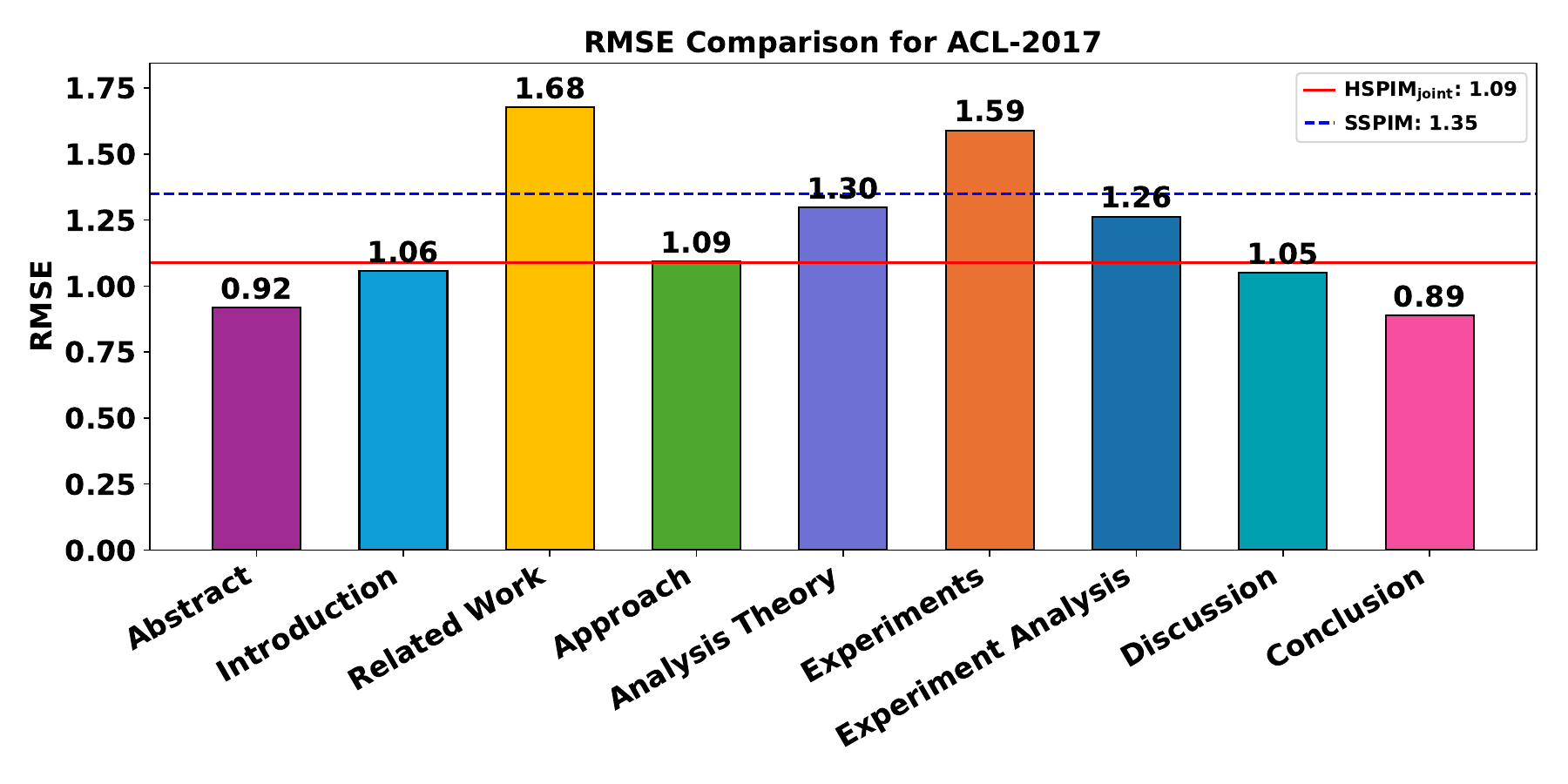}
        \label{fig:acl2017}
    }
    \subfloat[\scriptsize COLING-2020]{
        \includegraphics[width=0.4\linewidth]{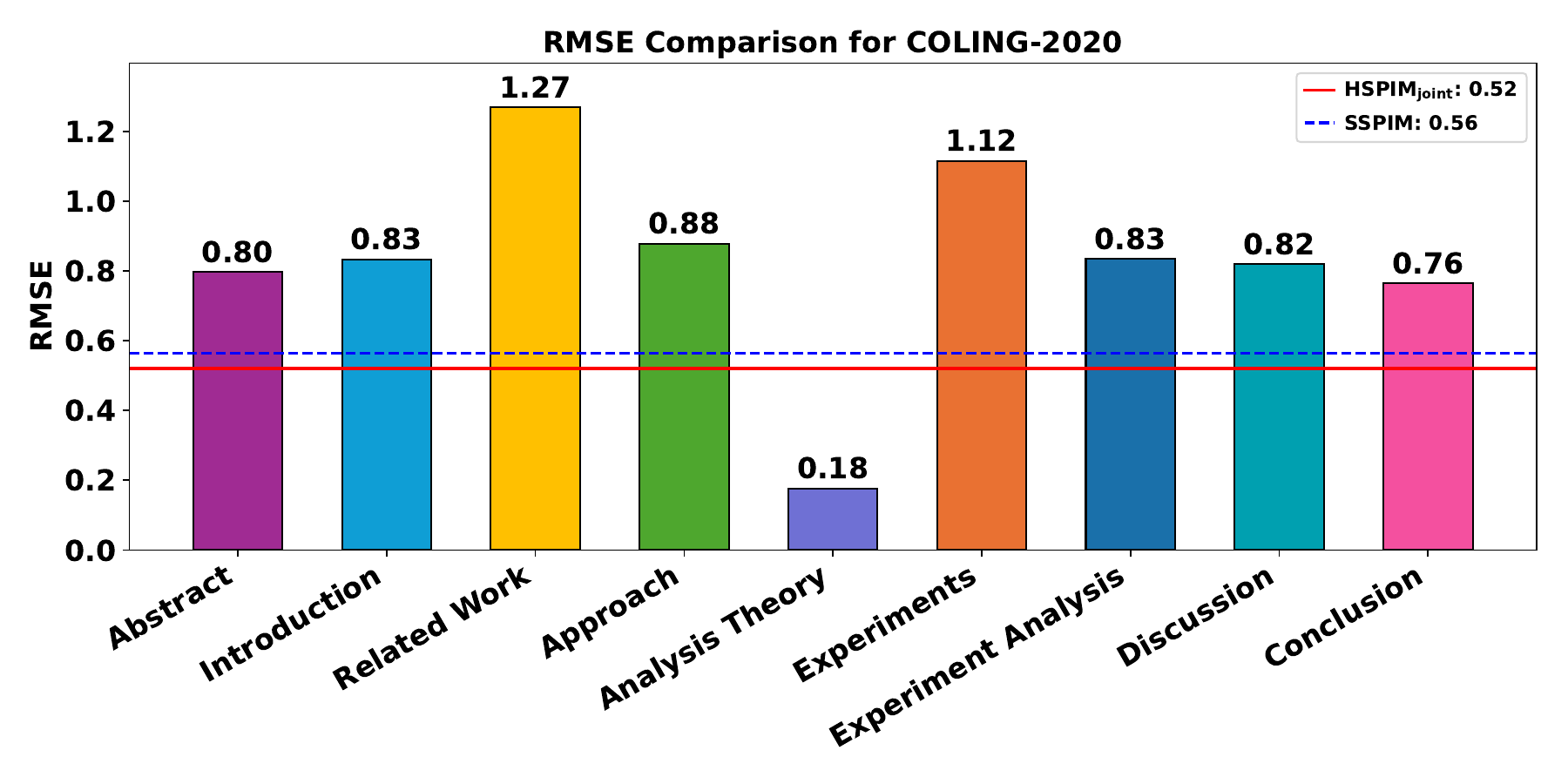}
        \label{fig:coling2020}
    }
    \caption{
    Innovation scoring RMSE comparison of using individual sections, SSPIM, and HSPIM$_{\text{joint}}$, based on DeepSeek-V3. Each bar represents the RMSE for the average innovation score of a section type.
    The red solid line and the blue dashed line represent the RMSE results of HSPIM\textsubscript{joint} and SSPIM, respectively.
    }
    \label{fig:rmse_comparison}
\end{figure*}

\begin{figure*}[!ht]
    \centering
    \subfloat[\scriptsize ACL-2017]{
        \includegraphics[width=0.35\linewidth]{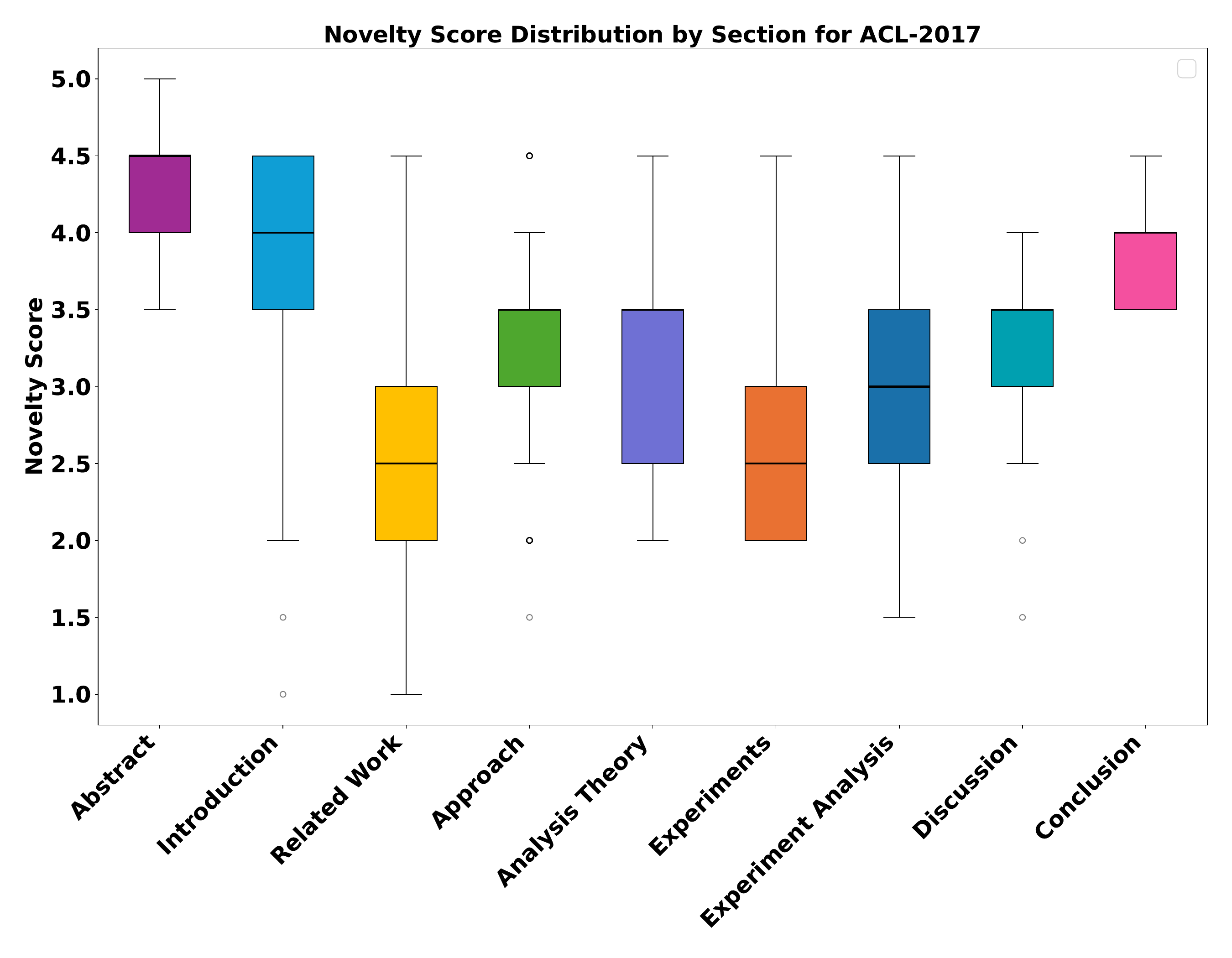}
        \label{fig:acl_2017_novelty_scores}
    }
    \subfloat[\scriptsize CoNLL-2016]{
        \includegraphics[width=0.35\linewidth]{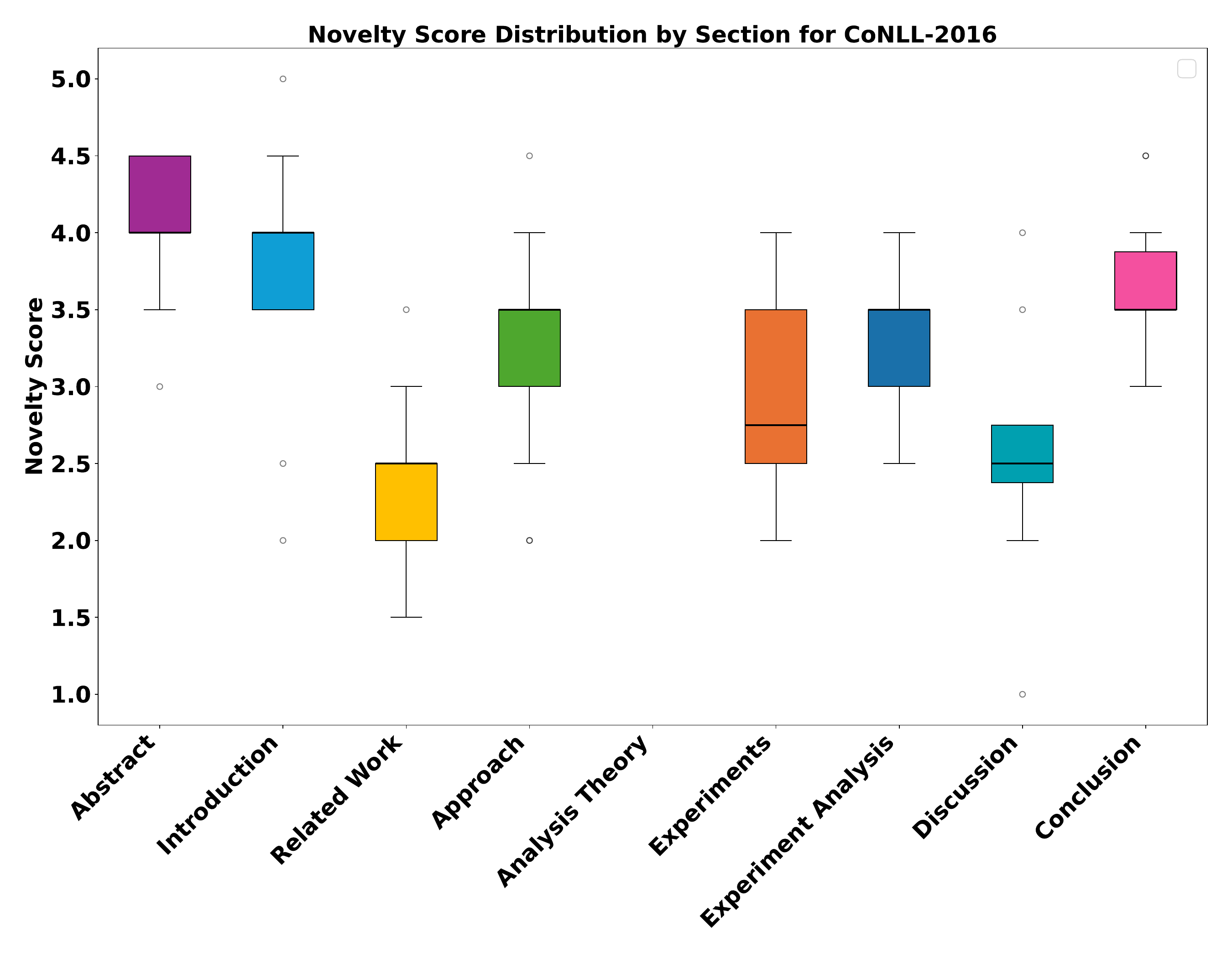}
        \label{fig:conll_2016_novelty_scores}
    }
    
    \subfloat[\scriptsize ICLR-2017]{
        \includegraphics[width=0.35\linewidth]{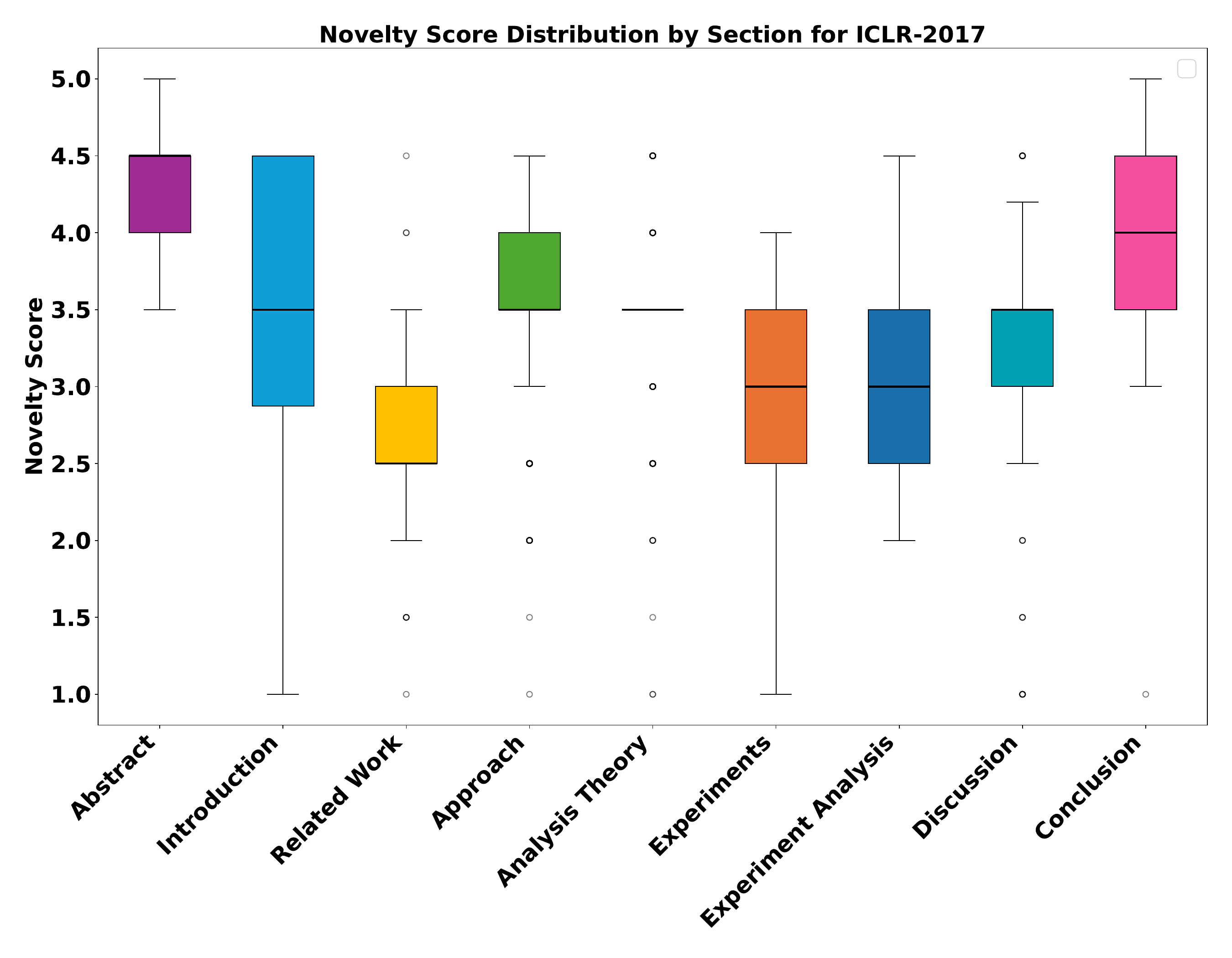}
        \label{fig:iclr_2017_novelty_scores}
    }
    \subfloat[\scriptsize COLING-2020]{
        \includegraphics[width=0.35\linewidth]{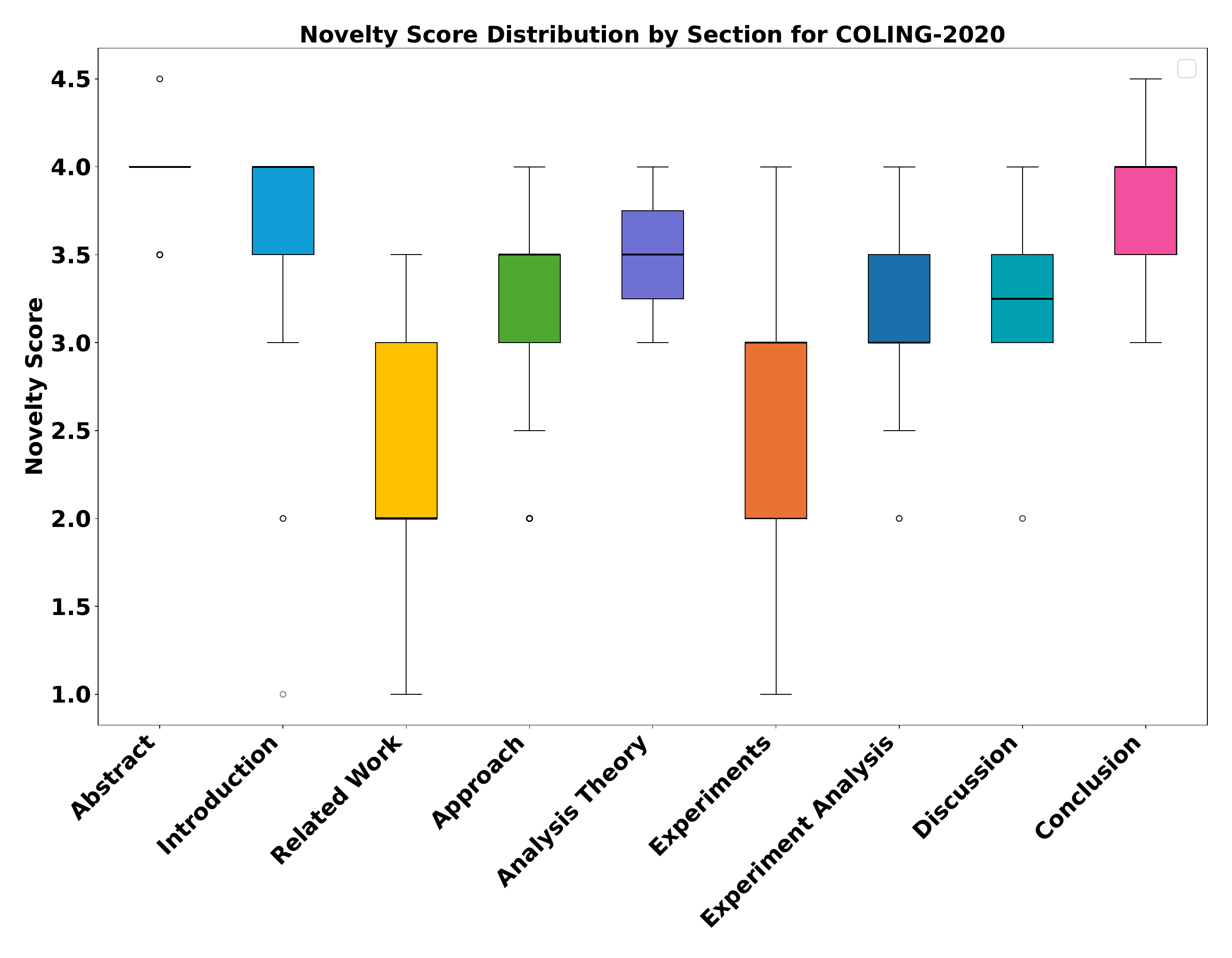}
        \label{fig:coling_2020_novelty_scores}
    }
    \caption{Novelty scoring tendencies across different section types on DeepSeek-HSPIM.}
    \label{fig:novelty_scoring_comparison}
\end{figure*}

\subsection{Analysis of Section-Based Scoring}

In this section, we conduct analyses of section-based scoring.

\subsubsection{Effect of Section Selection on Innovation Scoring}
Fig. \ref{fig:rmse_comparison} compares the RMSE of using individual sections (scored by LLM one-step prompting), SSPIM, and HSPIM\textsubscript{joint} across four datasets. We observe that HSPIM\textsubscript{joint} consistently achieves the lowest RMSE on ICLR-2017, CoNLL-2016, and COLING-2020, and also outperforms five out of nine single-section baselines on ACL-2017. The results highlight the effectiveness of section-based weighted innovation scoring (from individual sections to SSPIM and HSPIM).
In addition, HSPIM provides more stable RMSE performance than using single sections.

\subsubsection{Scoring Bias Across Sections}
The RMSE differences across section types shown in Fig. \ref{fig:rmse_comparison} are essentially driven by the inconsistent scoring tendencies of LLMs across sections.
Fig. \ref{fig:novelty_scoring_comparison} presents box plots that reveal systematic scoring biases across different section types. LLMs tend to assign higher novelty scores to sections like \textit{Abstract}, \textit{Introduction}, and \textit{Conclusion}, while assigning lower scores to sections such as \textit{Related Work} and \textit{Discussion}.
We observe that \textit{Abstract} and \textit{Conclusion} receive higher scores with lower variance, while \textit{Related Work} and \textit{Experiments} show lower scores and greater spread. 
These findings underscore the limitations of single section analysis, which may introduce bias and instability.
To address this, HSPIM uses weighted scoring across sections to better reflect full paper innovation.

\begin{table*}[ht]
\centering
\caption{RMSE comparison of representative section combinations on HSPIM. The bold values represent the best results.}
\label{tab:section_combination}
\resizebox{\textwidth}{!}{%
\begin{tabular}{l|l|c|c|c|c}
\toprule
Category & Section Combination & ICLR‑17 & CoNLL‑16 & ACL‑17 & COLING‑20 \\
\midrule
\multirow{2}{*}{Background Context}
 & \textit{Introduction + Related Work}                       & 0.8584 & 0.3206 & 1.0802 & 0.5074 \\
 & \textit{Related Work + Approach}                           & 0.8842 & 0.3763 & 1.0801 & 0.5259 \\
\midrule
\multirow{2}{*}{Method \& Evidence}
 & \textit{Approach + Experiments}                            & 0.8969 & 0.2059 & 1.0395 & \textbf{0.4750} \\
 & \textit{Approach + Analysis Theory}                        & 0.8563 & 0.3915 & 1.0766 & 0.5255 \\
\midrule
\multirow{2}{*}{Findings \& Discussion}
 & \textit{Experiments + Discussion}                          & 0.8395 & 0.2148 & 1.0701 & 0.4948 \\
 & \textit{Experiments + Experiment Analysis}                 & 0.8636 & 0.2141 & 1.0603 & 0.4878 \\
\midrule
\multirow{2}{*}{Summative Insight}
 & \textit{Introduction + Discussion + Conclusion}            & \textbf{0.8200} & 0.3583 & 1.0866 & 0.5210 \\
 & \textit{Introduction + Conclusion}                         & 0.8256 & 0.3997 & 1.0804 & 0.5195 \\
\midrule
\multirow{2}{*}{Comprehensive Coverage}
 & \textit{Introduction + Related Work + Approach + Experiments} & 0.9334 & 0.2165 & \textbf{1.0035} & 0.4836 \\
 & \textit{Experiments + Analysis + Discussion + Conclusion}      & 0.8711 & \textbf{0.1738} & 1.0666 & 0.4881 \\
\midrule
\multicolumn{2}{c|}{All Sections, Independent} & 0.8474 & 0.1973 & 1.1205 & 0.5075 \\
\bottomrule
\end{tabular}
}
\end{table*}

\subsubsection{Modeling Section Combination}
In the HSPIM framework, all section chunks are processed independently through LLM prompting, including question answering and scoring. In practice, sections may contain overlapping or complementary information. To test whether merging sections helps improve performance, we conduct a section combination experiment.
Specifically, we concatenate the texts of two or more sections into a single input chunk and treat it as a new combined section for HSPIM, while keeping all other sections unchanged.

Table \ref{tab:section_combination} shows the RMSE results for different section combining strategies using GPT-HSPIM\textsubscript{joint}. 
We group these strategies into five categories based on the functional relationship between the merged sections. 
For example, the combination \textit{Approach + Experiments} merges the method and experiment content to capture their complementary information.
\textit{All Sections, Independent} denotes the original setup where no sections are merged.
We observe that some section combinations outperform the original setting on certain datasets.
Compared to the original setup, HSPIM with combined sections shows performance gains in several cases, such as \textit{Approach + Experiments}, \textit{Introduction + Discussion + Conclusion}, \textit{Introduction + Related Work + Approach + Experiments}, and \textit{Experiments + Analysis + Discussion + Conclusion}.
Hence, combining sections is a potential improvement path for the HSPIM framework.

\subsection{Sensitivity Analysis}
In this section, we conduct sensitivity analyses of HSPIM with respect to LLM temperature, prompt instructions for critical scoring, and self-praising expressions in scientific papers.

\begin{table}[ht]
\centering
\caption{Performance of GPT-HSPIM\textsubscript{joint} under different LLM temperature settings.}
\label{tab:llm_temp}
\scriptsize
\setlength{\tabcolsep}{0.5pt}
\begin{tabular}{lcccccccccc}
\toprule
\multirow{3}{*}{Dataset}
  & \multicolumn{8}{c}{LLM Temperature} \\
\cmidrule(lr){2-9}
  & \multicolumn{2}{c}{0} 
  & \multicolumn{2}{c}{0.3}
  & \multicolumn{2}{c}{0.7}
  & \multicolumn{2}{c}{1.0} \\
\cmidrule(lr){2-3} \cmidrule(lr){4-5} \cmidrule(lr){6-7} \cmidrule(lr){8-9}
& RMSE & MAE & RMSE & MAE & RMSE & MAE & RMSE & MAE \\
\midrule
ICLR-2017
    & 0.8474 & 0.7160
    & 0.8499 & 0.6399
    & \textbf{0.8399} & \textbf{0.6359}
    & 0.8532 & 0.6463 \\
CoNLL-2016  
    & 0.1973 & 0.1736
    & \textbf{0.0663} & \textbf{0.0593}
    & 0.1321 & 0.1197
    & 0.1587 & 0.1831 \\
ACL-2017    
    & \textbf{1.1205} & \textbf{0.8897}
    & 1.1489 & 0.9071
    & 1.1432 & 0.9629
    & 1.1395 & 0.9267 \\
COLING-2020 
    & 0.5075 & 0.4129
    & 0.4872 & 0.3901
    & \textbf{0.4844} & \textbf{0.3875}
    & 0.4593 & 0.3666 \\
\bottomrule
\end{tabular}
\end{table}

\subsubsection{Sensitivity Analysis on LLM Temperature}
We analyze the effect of the LLM temperature parameter on the performance of GPT-HSPIM\textsubscript{joint}. 
We analyze the effect of the LLM temperature parameter on the performance of GPT-HSPIM\textsubscript{joint}. In all other experiments, we set the temperature to $0$ to reduce randomness. To increase diversity, we set the temperature to higher values such as $1$.

As shown in Table \ref{tab:llm_temp}, varying the temperature from $0$ to $1.0$ has only a limited impact on RMSE and MAE across all datasets. The results show that increasing the temperature does not significantly change model predictions or reduce conservative scoring. HSPIM framework remains robust to this parameter.

\begin{table}[ht!]
\centering
\caption{Effect of adding the critical scoring instruction in the scoring prompt on GPT-HSPIM\textsubscript{joint}.}
\label{tab:prompt_sensitivity}
\scriptsize
\setlength{\tabcolsep}{2.5pt}
\begin{tabular}{l l cc}
\toprule
Dataset & Setting & RMSE & MAE \\
\midrule
\multirow{3}{*}{ICLR-2017}
    & Original       & 0.8474 & 0.7160 \\
    & +Critical Scoring & 0.7939 & 0.6752 \\
    & Change (\%)    & $-$6.31\% & $-$5.70\% \\
\cmidrule(lr){1-4}
\multirow{3}{*}{CoNLL-2016}
    & Original       & 0.1973 & 0.1736 \\
    & +Critical Scoring & 0.1557 & 0.1464 \\
    & Change (\%)    & $-$21.08\% & $-$15.67\% \\
\cmidrule(lr){1-4}
\multirow{3}{*}{ACL-2017}
    & Original       & 1.1205 & 0.8897 \\
    & +Critical Scoring & 1.1614 & 0.9436 \\
    & Change (\%)    & +3.65\% & +6.06\% \\
\cmidrule(lr){1-4}
\multirow{3}{*}{COLING-2020}
    & Original       & 0.5075 & 0.4129 \\
    & +Critical Scoring & 0.4983 & 0.4049 \\
    & Change (\%)    & $-$1.81\% & $-$1.94\% \\
\bottomrule
\end{tabular}
\end{table}

\subsubsection{Sensitivity Analysis on Critical Scoring instruction}
\label{sssec:prompt_sensitivity}

Table \ref{tab:prompt_sensitivity} shows the performance of GPT-HSPIM\textsubscript{joint} with and without an additional instruction in the scoring prompt. 
Specially, we add the instruction ``Be decisive in your scoring and give higher or lower scores when there is clear evidence, rather than always choosing the middle range" to the scoring prompt to encourage critical evaluation. 
This experiment tests whether HSPIM is sensitive to the scoring instruction. The analysis also aims to find out if this instruction can reduce conservative scores and improve results.
We pay attention to the ACL-2017 dataset to see if this change helps HSPIM perform better.

The results show that adding the critical scoring instruction lowers RMSE and MAE on most datasets, suggesting HSPIM is sensitive to prompt wording. 
However, this approach does not improve results on ACL-2017. 
Such an approach can help address the narrow distribution of predicted scores shown in Fig. \ref{fig:distribution_plot}, but more effective strategies are still required. We plan to explore better solutions in future work.

\begin{table*}[ht!]
\centering
\caption{Effect of applying neutralization to abstract and introduction.}
\label{tab:neutralization_effect}
\scriptsize
\begin{tabular}{llcccc}
\toprule
\multirow{2}{*}{Model} & \multirow{2}{*}{Dataset} 
& \multicolumn{2}{c}{HSPIM (All Sections)} 
& \multicolumn{2}{c}{HSPIM (\textit{Abstract+Introduction})} \\
\cmidrule(lr){3-4} \cmidrule(lr){5-6}
& & RMSE & Avg. Innov. Score & RMSE & Avg. Innov. Score \\
\midrule
\multirow{3}{*}{ICLR-2017}
& Original       & 0.8283 & 3.3399 & 0.8958 & 3.6914 \\
& \textit{+Neutralize}    & 0.8392 & 3.3094 & 0.9760 & 3.3235 \\
& \textit{Change (\%)}    & +1.31\% & -0.91\% & +8.95\% & -9.97\% \\
\midrule
\multirow{3}{*}{CoNLL-2016}
& Original       & 0.0638 & 3.3955 & 0.2716 & 3.6207 \\
& \textit{+Neutralize}    & 0.0661 & 3.3000 & 0.3402 & 3.0278 \\
& \textit{Change (\%)}    & +3.50\% & -2.81\% & +25.26\% & -16.38\% \\
\midrule
\multirow{3}{*}{ACL-2017}
& Original       & 1.0887 & 3.4777 & 1.0238 & 3.6513 \\
& \textit{+Neutralize}    & 1.1044 & 3.4545 & 1.1560 & 3.4675 \\
& \textit{Change (\%)}    & +1.45\% & -0.67\% & +12.91\% & -5.03\% \\
\midrule
\multirow{3}{*}{COLING-2020}
& Original       & 0.5202 & 3.2950 & 0.4990 & 3.4902 \\
& \textit{+Neutralize}    & 0.5224 & 3.2892 & 0.4988 & 3.4510 \\
& \textit{Change (\%)}    & +0.43\% & -0.18\% & +0.04\% & -1.12\% \\
\bottomrule
\end{tabular}
\end{table*}

\subsubsection{Sensitivity Analysis on Self-Praising Expressions}
Scientific paper sections such as \textit{Abstract} and \textit{Introduction} often include self-praising expressions, such as ``significantly better" or ``a novel model." To analyze how such expressions affect the innovation scoring in HSPIM framework, we conduct a sensitivity analysis on DeepSeek-HSPIM\textsubscript{joint} and show the results in Table \ref{tab:neutralization_effect}. 
Note that this analysis only focus on testing the robustness of HSPIM to self-praising expressions. It does not assume that the replaced expressions are necessarily exaggerated or unreasonable.
Specifically, we use zero-shot LLMs to detect self-praising expressions in the \textit{Abstract} and \textit{Introduction} of each paper. Then, LLMs rewrite these expressions in a neutral tone. For example, LLMs may change ``significantly better” to ``better” and ``a novel model” to ``a new model.”

In Table \ref{tab:neutralization_effect}, \textit{+Neutralize} rewrites self-praising expressions in the \textit{Abstract} and \textit{Introduction} into more neutral statements using DeepSeek. 
We observe that \textit{+Neutralize} notably reduces the average innovation score for HSPIM with only \textit{Abstract + Introduction}. 
This suggests that zero-shot LLMs tend to assign higher scores when the text contains subjective or emphatic language. 
On the other hand, HSPIM with all section types (noted as HSPIM (All Sections)) is more robust, showing smaller changes in both innovation scores and RMSE.
HSPIM combines text chunks from all section types to improve the stability of the innovation scoring.

\begin{figure}[ht!]
  \centering
  \includegraphics[width=0.4\textwidth]{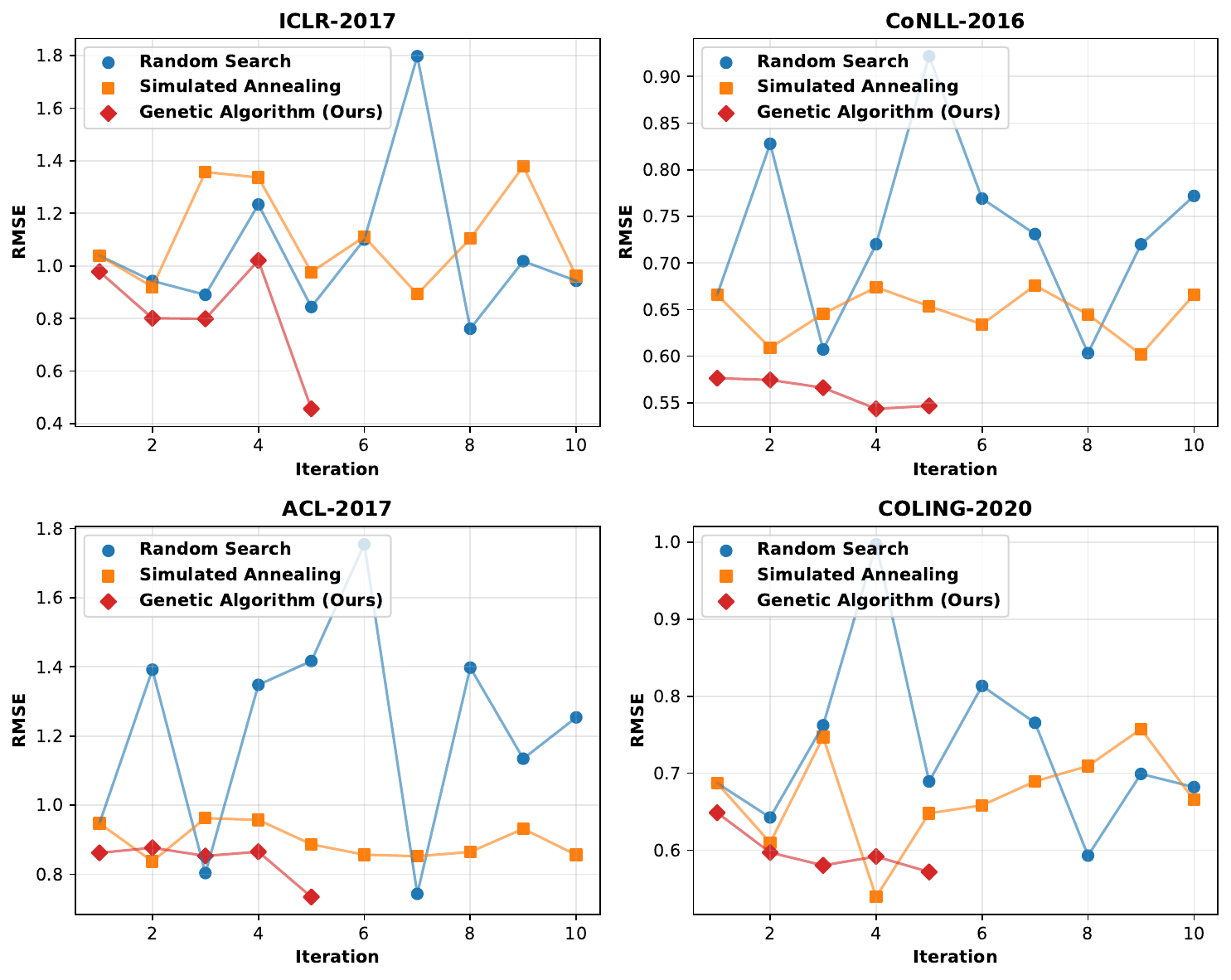}
  \caption{Training RMSE of Genetic Algorithm (ours), Random Search, and Simulated Annealing on HSPIM-DeepSeek. Each iteration randomly samples 20 training samples from the dataset.}
  \label{optimization_comparisons}
\end{figure}

\subsection{Comparison of Prompt Optimization Methods}
Fig. \ref{optimization_comparisons} compares our Genetic Algorithm (GA) with Random Search (RS) and Simulated Annealing (SA) \cite{kirkpatrick1983optimization} on HSPIM-DeepSeek. 

Notes Before Interpreting Results:
(i) To reduce training cost, we do not use the full training set in each iteration. Instead, we randomly divide the training set into batches of 20 samples and use a different batch for each iteration. All methods use the same batch within the same iteration.
So, RMSE may fluctuate across iterations, and it does not always decrease.
(ii) In HSPIM, GA uses 10 question-prompt combinations in each iteration (population size \(P = 10\)), and retains 2 elite individuals.
In contrast, RS and SA are limited by their algorithm design to test only one prompt combination per iteration and keep the best result found so far.
To ensure fairness, we set fewer iterations for GA (\(I = 5\)) and more for RS and SA (\(I = 10\)).  
(iii) RS and SA start from the same initial prompt combination. GA includes this prompt and adds 9 more random prompts. 
So, RS and SA give the same RMSE in iteration 1, while GA may show a different value.

In Fig. \ref{optimization_comparisons}, as the number of iterations increases, GA shows an overall decreasing trend in training RMSE. It achieves lower RMSE within fewer iterations. In comparison, RS shows large fluctuations without a clear trend. SA has smaller changes than RS, but it also does not show a clear improvement. The lack of consistent trends in RS and SA may be due to the limited number of iterations.
We also notice that since LLM prompting for different prompt combinations is independent, GA supports parallel evaluation of multiple prompts in each iteration.
Overall, GA proves to be a stable and efficient method for prompt optimization in HSPIM.

\begin{figure}[ht]
    \centering
    \subfloat[\scriptsize ICLR-2017]{
        \includegraphics[width=0.35\linewidth]{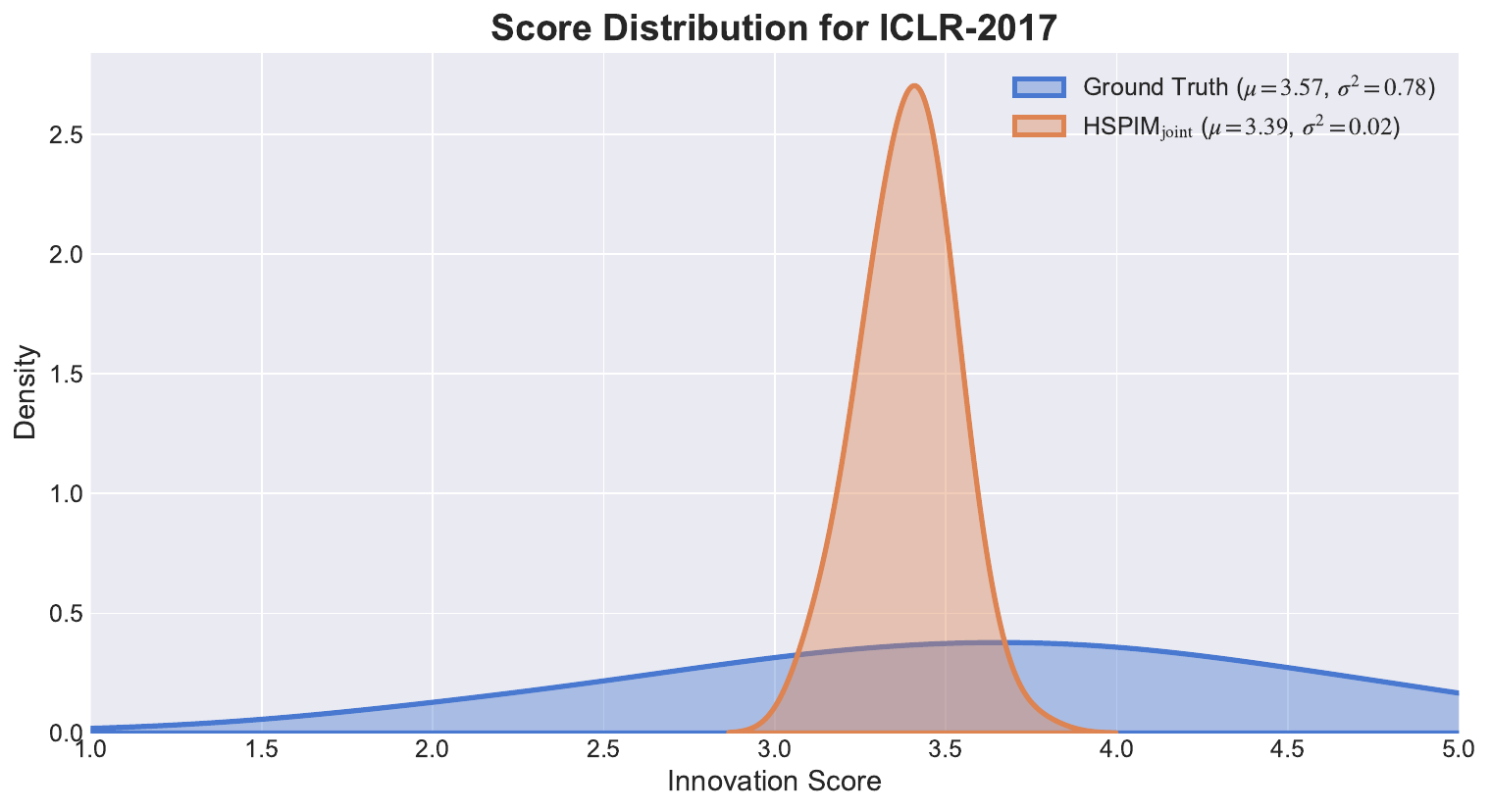}
        \label{fig:distribution_plot_iclr_2017}
    }
    \subfloat[\scriptsize CoNLL-2016]{
        \includegraphics[width=0.35\linewidth]{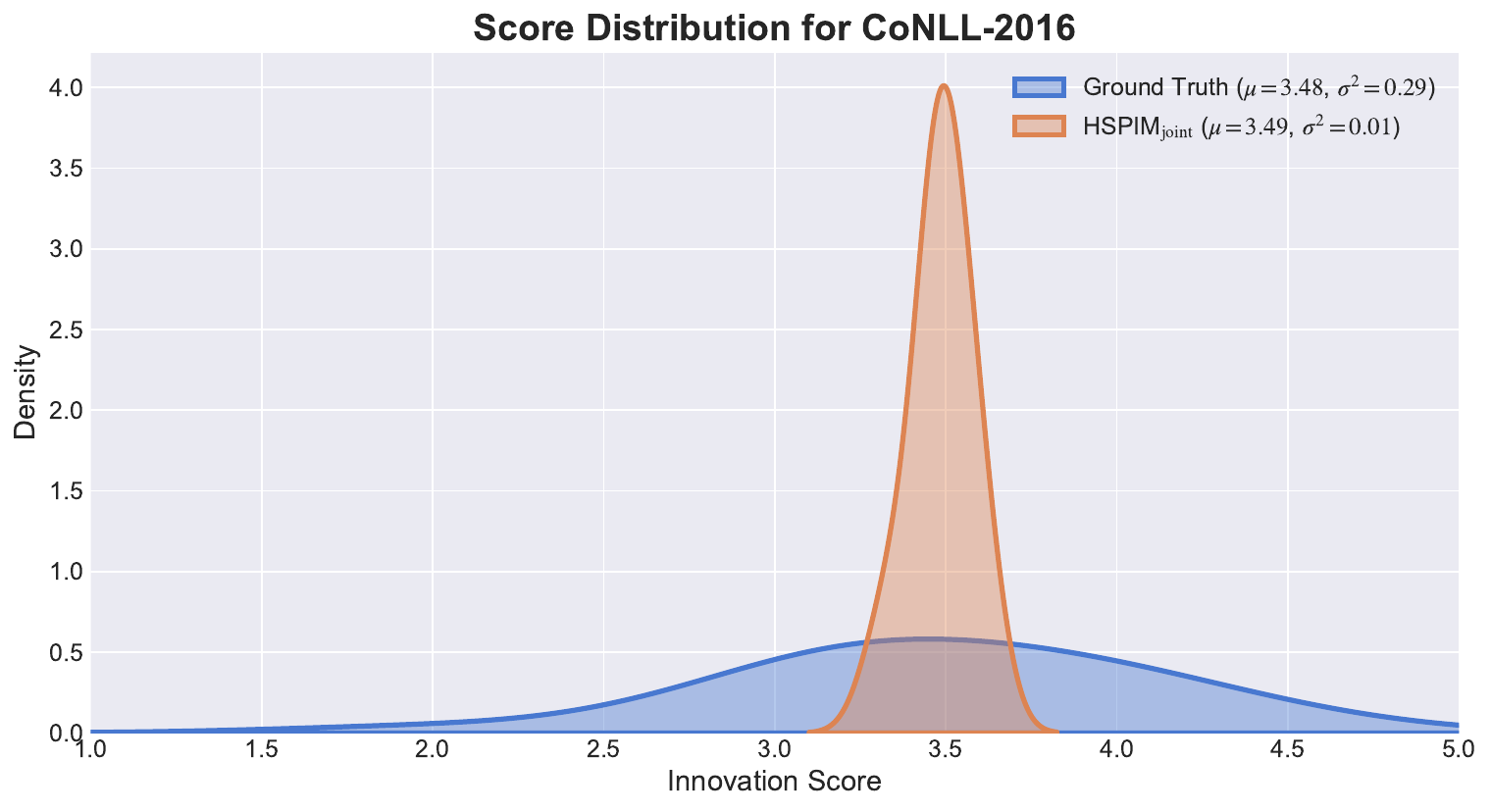}
        \label{fig:distribution_plot_conll_2016}
    }
    
    \subfloat[\scriptsize ACL-2017]{
        \includegraphics[width=0.35\linewidth]{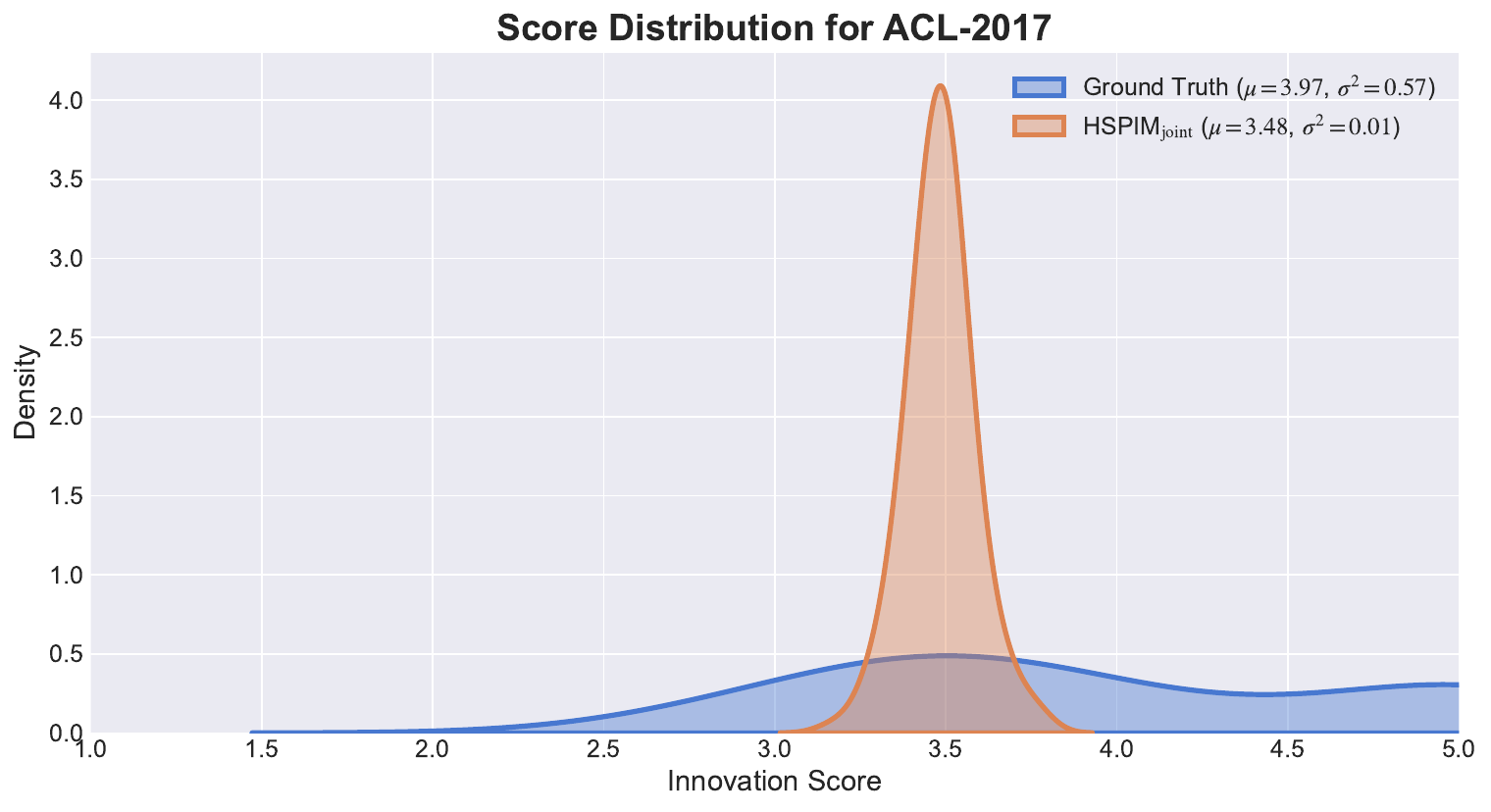}
        \label{fig:distribution_plot_acl_2017}
    }
    \subfloat[\scriptsize COLING-2020]{
        \includegraphics[width=0.35\linewidth]{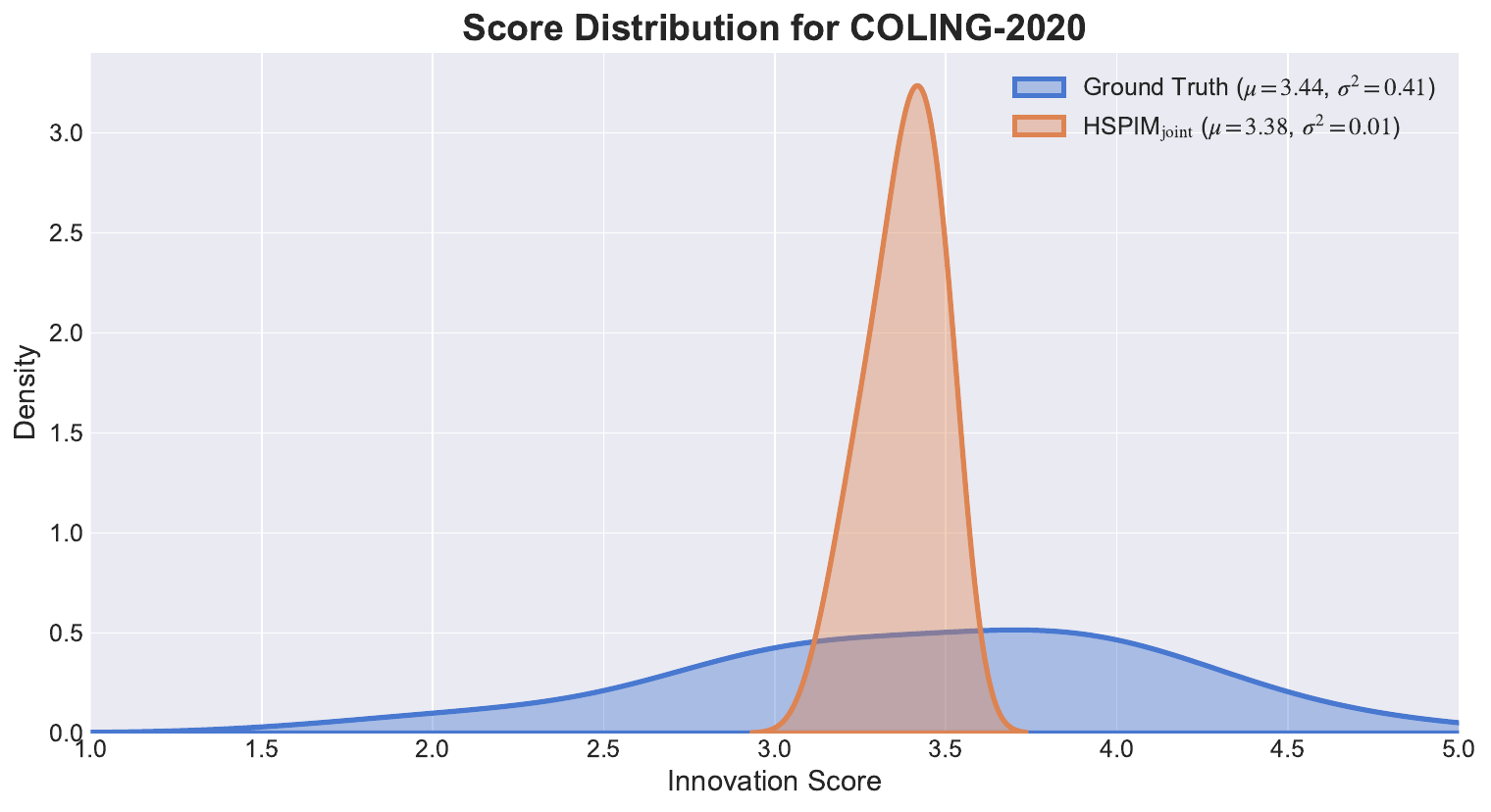}
        \label{fig:distribution_plot_coling_2020}
    }
    \caption{Comparison of ground-truth and HSPIM\textsubscript{joint} predicted innovation score distributions on four datasets. 
    Predicted scores are more concentrated around the mean.
    }
    \label{fig:distribution_plot}
\end{figure}

\subsection{Analysis of Innovation Score Distributions}
\label{ssec:score_distribution}
Fig. \ref{fig:distribution_plot} compares the score distributions of ground-truth and HSPIM\textsubscript{joint} predictions on four datasets. 
Across all datasets, ground-truth scores exhibit a wider spread, including a noticeable proportion of both high and low (tail) values. 
In contrast, HSPIM\textsubscript{joint} predictions are more concentrated and have a lower mean and variance. 
For example, on ACL-2017, the predicted mean and variance are 3.48 and 0.01, while the ground-truth mean and variance are 3.97 and 0.57.
In HSPIM, LLMs rarely give extreme scores. The mismatch in distribution helps explain performance drops on datasets with more diverse or higher ground-truth scores.

\begin{table*}[ht!]
\centering
\caption{Misclassification handling and label-noise sensitivity for GPT-HSPIM\textsubscript{joint}. 
Noise is injected by randomly converting $p\%$ of section labels to \emph{unmatched}.
$\Delta_{\text{orig}}$: change vs \textit{Original}; $\Delta_{\text{handle}}$: change vs \textit{With misclassification handling}.}
\label{tab:misclass_effect}
\scriptsize
\begin{tabular}{l l cc}
\toprule
Dataset & Setting & RMSE & MAE \\
\midrule
\multirow{5}{*}{ICLR-2017}
    & Original & 0.8283 & 0.6966 \\
    & \textit{With misclassification handling} & 0.8248 {\scriptsize($\Delta_{\text{orig}}$ $-0.42\%$)} & 0.6951 {\scriptsize($\Delta_{\text{orig}}$ $-0.22\%$)} \\
    & \textit{+ Noise (unmatched 5\%)}  & 0.8376 {\scriptsize($\Delta_{\text{handle}}$ $+1.56\%$)} & 0.7064 {\scriptsize($\Delta_{\text{handle}}$ $+1.63\%$)} \\
    & \textit{+ Noise (unmatched 10\%)} & 0.8268 {\scriptsize($\Delta_{\text{handle}}$ $+0.25\%$)} & 0.6972 {\scriptsize($\Delta_{\text{handle}}$ $+0.30\%$)} \\
    & \textit{+ Noise (unmatched 20\%)} & 0.8430 {\scriptsize($\Delta_{\text{handle}}$ $+2.20\%$)} & 0.7079 {\scriptsize($\Delta_{\text{handle}}$ $+1.85\%$)} \\
\cmidrule(lr){1-4}
\multirow{5}{*}{CoNLL-2016}
    & Original & 0.0638 & 0.0560 \\
    & \textit{With misclassification handling} & 0.0798 {\scriptsize($\Delta_{\text{orig}}$ $+25.08\%$)} & 0.0659 {\scriptsize($\Delta_{\text{orig}}$ $+17.68\%$)} \\
    & \textit{+ Noise (unmatched 5\%)}  & 0.0798 {\scriptsize($\Delta_{\text{handle}}$ $+25.08\%$)} & 0.0659 {\scriptsize($\Delta_{\text{handle}}$ $+17.68\%$)} \\
    & \textit{+ Noise (unmatched 10\%)} & 0.0798 {\scriptsize($\Delta_{\text{handle}}$ $+25.08\%$)} & 0.0659 {\scriptsize($\Delta_{\text{handle}}$ $+17.68\%$)} \\
    & \textit{+ Noise (unmatched 20\%)} & 0.0798 {\scriptsize($\Delta_{\text{handle}}$ $+25.08\%$)} & 0.0659 {\scriptsize($\Delta_{\text{handle}}$ $+17.68\%$)} \\
\cmidrule(lr){1-4}
\multirow{5}{*}{ACL-2017}
    & Original & 1.0887 & 0.8572 \\
    & \textit{With misclassification handling} & 1.1491 {\scriptsize($\Delta_{\text{orig}}$ $+5.55\%$)} & 0.9284 {\scriptsize($\Delta_{\text{orig}}$ $+8.31\%$)} \\
    & \textit{+ Noise (unmatched 5\%)}  & 1.1526 {\scriptsize($\Delta_{\text{handle}}$ $+0.31\%$)} & 0.9270 {\scriptsize($\Delta_{\text{handle}}$ $-0.15\%$)} \\
    & \textit{+ Noise (unmatched 10\%)} & 1.1651 {\scriptsize($\Delta_{\text{handle}}$ $+1.39\%$)} & 0.9360 {\scriptsize($\Delta_{\text{handle}}$ $+0.82\%$)} \\
    & \textit{+ Noise (unmatched 20\%)} & 1.1566 {\scriptsize($\Delta_{\text{handle}}$ $+0.66\%$)} & 0.9248 {\scriptsize($\Delta_{\text{handle}}$ $-0.39\%$)} \\
\cmidrule(lr){1-4}
\multirow{5}{*}{COLING-2020}
    & Original & 0.5202 & 0.3933 \\
    & \textit{With misclassification handling} & 0.4806 {\scriptsize($\Delta_{\text{orig}}$ $-7.61\%$)} & 0.3874 {\scriptsize($\Delta_{\text{orig}}$ $-1.50\%$)} \\
    & \textit{+ Noise (unmatched 5\%)}  & 0.4758 {\scriptsize($\Delta_{\text{handle}}$ $-1.00\%$)} & 0.3846 {\scriptsize($\Delta_{\text{handle}}$ $-0.74\%$)} \\
    & \textit{+ Noise (unmatched 10\%)} & 0.4722 {\scriptsize($\Delta_{\text{handle}}$ $-1.75\%$)} & 0.3815 {\scriptsize($\Delta_{\text{handle}}$ $-1.53\%$)} \\
    & \textit{+ Noise (unmatched 20\%)} & 0.4810 {\scriptsize($\Delta_{\text{handle}}$ $+0.09\%$)} & 0.3835 {\scriptsize($\Delta_{\text{handle}}$ $-1.02\%$)} \\
\bottomrule
\end{tabular}
\end{table*}

\subsection{Impact of Section Misclassification}
\label{ssec:misclassification}
To address the risk of inaccurate section assignment for ambiguous or unconventional section content, we introduce a revised classification strategy, referred to as \textit{With revised classification strategy} in Table \ref{tab:misclass_effect}.
Previously, the LLM was required to assign every section to the closest type in a fixed list, even when the fit was poor.
In this revised setting, HSPIM first attempts to match each section to the predefined types. 
If a section cannot be confidently classified, it is labeled as ``unmatched'' and, in the QA augmentation step of HSPIM, only uses the common question prompt without applying any section-specific questions.
This design allows HSPIM to more flexibly handle ambiguous or unconventional section content.
As shown in Table \ref{tab:misclass_effect}, this strategy maintains stable performance on most datasets.
In addition, we test robustness by randomly converting 5\%, 10\%, and 20\% of section labels to \emph{unmatched} and re-running QA and scoring. The results show that RMSE and MAE change only slightly, confirming resilience to occasional section misclassification.

\begin{table}[h]
\centering
\caption{Computational cost of different LLMs.}
\label{tab:computational_comparison}
\scriptsize
\begin{tabular}{lccc}
\toprule
LLM & Task & Avg. output Length & Avg Time (s) \\
\midrule
\multirow{2}{*}{Llama} & QA & 6011.61 & 35.29 \\
 & Scoring & 301.49 & 4.66 \\
\midrule
\multirow{2}{*}{Mistral} & QA & 2046.89 & 11.20 \\
 & Scoring & 606.31 & 4.78 \\
\midrule
\multirow{2}{*}{Qwen} & QA & 21626.53 & 106.52 \\
 & Scoring & 454.88 & 5.71 \\
\midrule
\multirow{2}{*}{Deepseek} & QA & 1737.63 & 20.80 \\
 & Scoring & 414.72 & 7.72 \\
\midrule
\multirow{2}{*}{GPT} & QA & 1911.99 & 6.36 \\
 & Scoring & 439.79 & 2.46 \\
\bottomrule
\end{tabular}
\end{table}

\subsection{LLM Computational Costs}
Table \ref{tab:computational_comparison} compares the computational costs of question answering and score generation for LLaMA-3-8B, Mistral-7B, Qwen2.5-7B, DeepSeek-V3, and GPT-4o mini. The metrics include average output length (in characters) and average response time per input. For DeepSeek and GPT, the reported time only reflects API response time. All experiments were conducted on an NVIDIA RTX 3090 GPU (24 GB) without any inference-time optimization. We observe that large-scale models tend to generate more output tokens.
Smaller LLMs (LLaMA-3-8B, Mistral-7B, Qwen2.5-7B) often suffer from repetition issues, leading to inflated output lengths.
Among the open-source small-scale models, Mistral-7B shows competitive time efficiency. Combined with its strong innovation scoring performance in Section~\ref{ssec:llm_hspim_comparison}, Mistral-7B stands out as a practical choice for HSPIM deployment.

\begin{figure}[ht]
  \centering
  \includegraphics[width=0.4\textwidth]{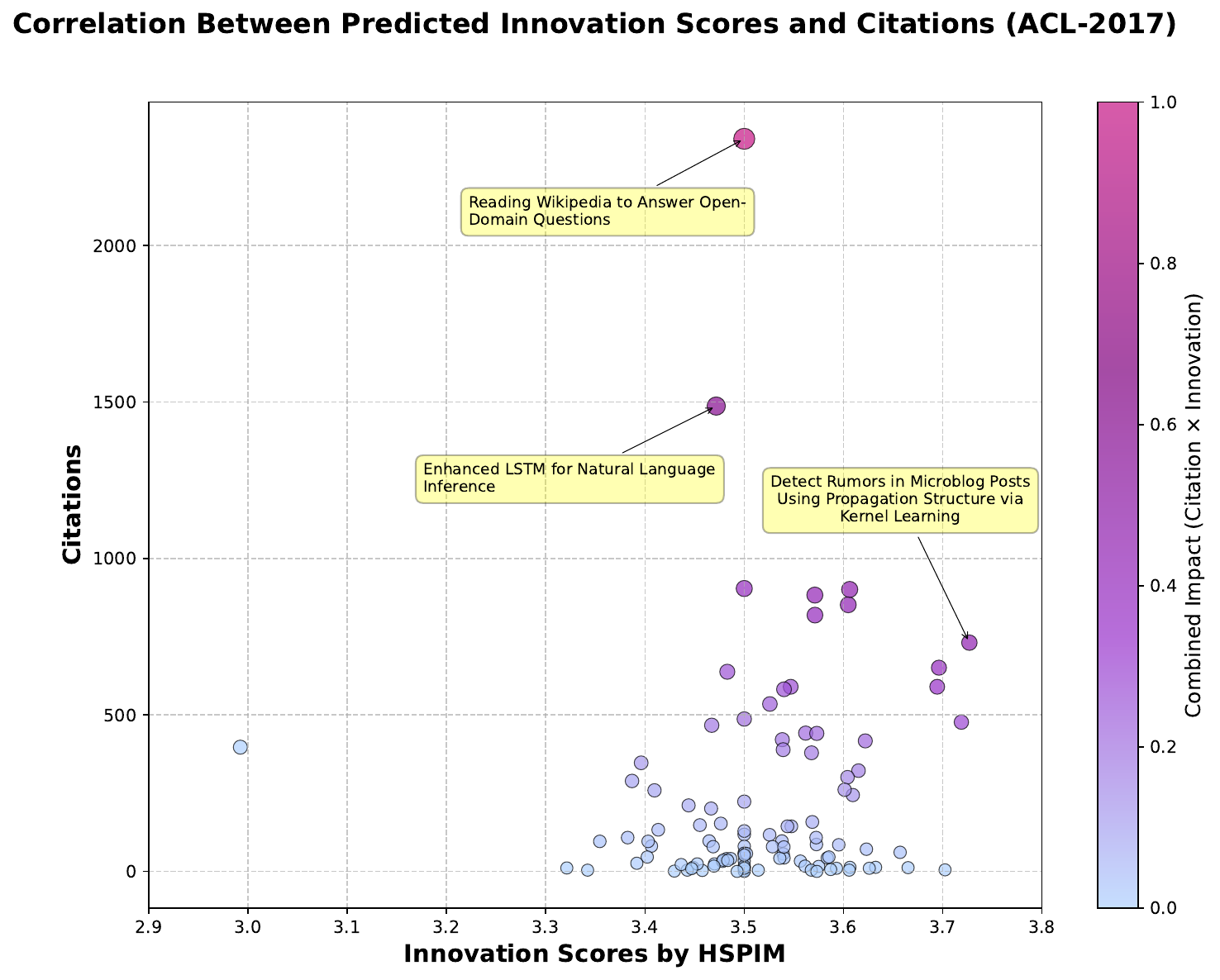}
  \caption{Correlation between HSPIM predicted innovation scores and citation counts on ACL-2017. Marker size is proportional to citations, color reflects the combined impact (citation $\times$ innovation), and annotations highlight the two most‑cited papers and the paper with the highest predicted innovation.}
  \label{innovation_vs_citations}
\end{figure}

\subsection{Case Study}
\subsubsection{HSPIM Innovation Scores vs. Citations}
In this section, we present a case study on the ACL-2017 dataset to compare the predicted innovation scores of HSPIM\textsubscript{joint} with citation counts.
We investigate the extent to which highly cited papers receive higher innovation scores.
Fig. \ref{innovation_vs_citations} presents the correlation between HSPIM innovation scores and citation counts for ACL-2017 papers. 
Citation data were collected from Google Scholar, with records updated as of March 2025.
We observe that papers with higher predicted innovation scores tend to appear slightly higher on the citation axis, suggesting a weak positive trend between predicted innovation and actual citation impact.

We observe that the two most-cited papers both receive a predicted innovation score of approximately 3.5. Their ground-truth innovation scores differ significantly: 3.5 for the most-cited paper and 5.0 for the second most-cited. 
This is because HSPIM uses a weighted average of section-based scores, which leads to stable outputs and avoids extreme values.
In fact, scientific paper quality, including innovation, does not always show a positive correlation with citation counts. External factors such as research domain or author reputation tend to influence citation numbers \cite{aksnes2019citations}.
The paper with the highest predicted innovation also shows a relatively high citation count. This suggests that HSPIM can partly identify papers with academic impact.

\begin{table*}[!htbp]
\centering
\caption{Best question-prompt combinations found by HSPIM\textsubscript{joint}.}
\label{tab:best_prompt_combos}
\renewcommand{\arraystretch}{0.6}
\scriptsize
\resizebox{\textwidth}{!}{
\begin{tabular}{p{2.8cm}|p{2.5cm}|p{11.0cm}} 
\toprule
\textbf{Dataset} & \textbf{Prompt Type} & \textbf{Question} \\
\midrule

\multirow{2}{*}{\textbf{ICLR-2017}}
& \textbf{Common} 
& Additionally, briefly explain both: (1) Reasons for the lack of novelty; (2) Reasons for the presence of novelty. \\
\cmidrule{2-3}
& \textbf{Specific}
& \textbf{Abstract}: Do the contributions bring a major improvement over existing solutions?

\textbf{Introduction}: Assess the significance of the problem being addressed.

\textbf{Related Work}: Are there significant differences between this work and others in the field?

\textbf{Approach}: What new methods were used in this research? Do these methods have advantages over existing ones? Provide supporting cases.

\textbf{Analysis Theory}: Are the assumptions in the theoretical model different from those used in previous studies?

\textbf{Experiments}: How well do the results generalize to other settings?

\textbf{Experiment Analysis}: What new insights are revealed by the analysis of experimental results? Provide detailed comparisons with past methods using quantitative data.

\textbf{Discussion}: How do the limitations affect the applicability of the results?

\textbf{Conclusion}: What specific innovative findings are emphasized in the research conclusions? Is there clear innovation? \\
\midrule

\multirow{2}{*}{\textbf{CoNLL-2016}}
& \textbf{Common}
& Examine the research problem, providing reasons both for the lack of novelty and for any potential innovative contribution. \\
\cmidrule{2-3}
& \textbf{Specific}
& \textbf{Abstract}: Please summarize the main contributions and innovations proposed in the paper.

\textbf{Introduction}: Is the contribution clearly distinguishable from previous work?

\textbf{Related Work}: Critique the novelty of this research compared to related studies.

\textbf{Approach}: What new methods were used in this research? Do these methods have advantages over existing ones? Provide supporting cases.

\textbf{Analysis Theory}: How does the theory proposed here compare to existing theories in terms of scope and applicability?

\textbf{Experiments}: How well do the experimental results support the innovation of the research? Provide specific data and information on improvements.

\textbf{Experiment Analysis}: Evaluate whether the conclusions drawn from the analysis are well-supported.

\textbf{Discussion}: How do the limitations discussed affect the innovation of the research? Analyze these limitations and their consequences.

\textbf{Conclusion}: What specific innovative findings are emphasized in the research conclusions? Is there clear innovation? \\
\midrule

\multirow{2}{*}{\textbf{ACL-2017}}
& \textbf{Common}
& Provide arguments for both why this research may lack originality and why it could still be considered novel within its domain. \\
\cmidrule{2-3}
& \textbf{Specific}
& \textbf{Abstract}: Is there clear evidence to support the contributions as novel?

\textbf{Introduction}: Assess the significance of the problem being addressed.

\textbf{Related Work}: Are the comparisons to prior work sufficiently detailed?

\textbf{Approach}: What new methods were used in this research? Do these methods have advantages over existing ones? Provide supporting cases.

\textbf{Analysis Theory}: In what ways does the theory contribute to resolving any long-standing debates in the field?

\textbf{Experiments}: How well do the results generalize to other settings?

\textbf{Experiment Analysis}: What new insights are revealed by the analysis of experimental results? Provide detailed comparisons with past methods using quantitative data.

\textbf{Discussion}: Evaluate whether the limitations significantly hinder the novelty claims.

\textbf{Conclusion}: What specific innovative findings are emphasized in the research conclusions? Is there clear innovation? \\
\midrule

\multirow{2}{*}{\textbf{COLING-2020}}
& \textbf{Common}
& Provide arguments for both why this research may lack originality and why it could still be considered novel within its domain. \\
\cmidrule{2-3}
& \textbf{Specific}
& \textbf{Abstract}: Is there clear evidence to support the contributions as novel?

\textbf{Introduction}: Is the contribution clearly distinguishable from previous work?

\textbf{Related Work}: Are there significant differences between this work and others in the field?

\textbf{Approach}: What new methods were used in this research? Do these methods have advantages over existing ones? Provide supporting cases.

\textbf{Analysis Theory}: Is the methodology for the theoretical analysis clearly explained, and does it allow for reproducibility?

\textbf{Experiments}: How well do the results generalize to other settings?

\textbf{Experiment Analysis}: Evaluate whether the conclusions drawn from the analysis are well-supported.

\textbf{Discussion}: Evaluate whether the limitations significantly hinder the novelty claims.

\textbf{Conclusion}: Critically assess whether the findings represent a clear advancement in the field. \\
\bottomrule
\end{tabular}
}
\end{table*}

\subsubsection{Best Question-Prompt Combinations}
Table \ref{tab:best_prompt_combos} presents the best specific and common questions selected by DeepSeek-HSPIM\textsubscript{joint} for each dataset. 
In terms of specific questions, we observe that certain question patterns appear consistently, especially for \textit{Abstract} (``\textit{Is there clear evidence to support the contributions as novel?}") and \textit{Conclusion} (``\textit{What specific innovative findings are emphasized in the research conclusions?}").
These frequently selected question prompts suggest that innovation involves both identifying novelty and supporting the contributions with clear evidence.
The specific questions are designed to align with the content of each section, enabling LLMs to extract relevant signals of contribution and novelty. 

Additionally, the common question applies to all sections. It guides LLMs to assess both strengths and weaknesses of the paper. This helps reduce bias and ensures a balanced evaluation of innovation.
From Table \ref{tab:best_prompt_combos}, the common question ``\textit{Provide arguments for both why this research may lack originality and why it could still be considered novel within its domain}" appears twice. 
Overall, the genetic algorithm tends to select questions that ask authors to justify new ideas and reflect on their significance, which aligns with the goal of HSPIM to measure innovation with both novelty and contributions.

\section{Discussion}
In this section, we first explain how HSPIM addresses the limitations of existing content-based methods for measuring innovation. Then, we discuss potential directions for improving our HSPIM further.

\subsection{Addressing Challenges in Content-Based Innovation Measurement}

\begin{itemize}
    \item \textbf{How to handle long-text input?}
    HSPIM breaks down the full text into manageable section-based chunks and then generates QA pairs, forming a Paper-to-Sections-to-QAs decomposition.
    This hierarchical strategy simplifies long-text analysis by turning the global task into smaller sub-tasks.
    \item \textbf{How to capture the full scope of innovation?} Based on the innovation concept, HSPIM uses weighted innovation scoring to link section-level novelty to paper-level innovation. 
    It uses confidence scores as weights to measure each section's contribution to overall innovation.
    Table \ref{tab:ablation_confidence_score} shows the effectiveness of this weighted strategy.
    Section \ref{ssec:weighted_scoring_convergence} provides a theoretical explanation for the convergence of weighted scoring. In addition, QA augmentation helps to extract innovation-related information from text chunks for in-context learning.
    Table \ref{tab:HSPIM_framework_comparison} and Table \ref{tab:semantic_similarity} confirm its effectiveness in both quantifying innovation and improving interpretability.
    \item \textbf{How to improve model generalization?}
    HSPIM builds on zero-shot LLM prompting. Table \ref{tab:unfair_comparison} shows that HSPIM outperforms both supervised deep learning baselines and LLM-based baselines.
    We use a genetic algorithm to optimize question prompts for improving robustness.
\end{itemize}

\subsection{Limitations and Future Works}
\begin{itemize}
    \item \textbf{Assumption of Academic Integrity:} We assume that the scientific papers under evaluation are written with academic integrity. 
    That is, authors do not fabricate experimental results or hide existing related work. Without this assumption, papers that present fake results or ignore prior studies may receive inflated innovation scores.
    An automatic method to detect academic dishonesty could be a promising future direction.
    \item \textbf{Bias in LLM as a Judge:} 
    We observe that the zero-shot LLM in HSPIM tends to assign moderate scores and avoids giving very high scores, such as 5.0. This may be explained by the LLM's verbosity bias \cite{zheng2023judging}, which favors longer inputs. As a result, short section texts in HSPIM may receive conservative scores. 
    To address this, we plan to explore better ways to detect highly innovative papers in the future.
    \item \textbf{Bias in LLM as a Knowledge Base:} 
    HSPIM relies on the knowledge and reasoning abilities of pre-trained LLMs. Some papers in the dataset may already exist in the LLM's training data, which can introduce bias. To reduce this risk, we define clear scoring criteria and apply QA augmentation to encourage the LLM to focus on the presented content. Besides, for papers from newly emerging fields, existing LLMs may lack sufficient knowledge, making their evaluations unreliable. Future work can enhance LLMs with online retrieval-augmented generation to improve domain knowledge.
\end{itemize}

\subsection{Practicality and Deployment}

\begin{itemize}
    \item \textbf{Application potential:} HSPIM provides a paper-level innovation score together with QA pairs and scoring reasons that can be inspected alongside the paper. The framework is training-free, applicable across domains, runs end to end, and works with both API-based and open source LLMs (see Section \ref{sssec:step5_prompt_opt}).

    \item \textbf{Cost model:} For a paper \(p_i\) with \(l_i\) chunks, the inference time is \(\sum_{k=1}^{l_i} \tau_{\mathrm{QA},k} + \sum_{k=1}^{l_i} \tau_{\mathrm{score},k}\). Empirical times are reported in Table~\ref{tab:computational_comparison}. Reducing the number of chunks lowers cost roughly in proportion to the reduction. In our experiments, HSPIM\textsubscript{pruning} could improve accuracy by using fewer section types. Besides, chunks are independent and can be processed in parallel to reduce total runtime. Prompt optimization is done offline and inference uses the selected prompts, keeping the framework training-free.
\end{itemize}

\section{Conclusion}
In this paper, we propose HSPIM, a hierarchical framework to measure the innovation of scientific papers. 
HSPIM is the first approach to conduct a Paper-to-Sections-to-QAs decomposition for assessing scientific innovation. HSPIM classifies text chunks by section type and transforms the complex task of measuring paper-level innovation into simpler sub-tasks of section-level question answering and weighted innovation scoring. Unlike prior content-based models that evaluate novelty from specific sections rather than the full paper, or peer review LLMs that generate review texts and attribute scores, HSPIM directly targets innovation measurement with a training-free hierarchical design. HSPIM not only enables effective innovation quantification but also provides informative textual outputs, including section-based QA pairs and scoring rationales, which are missing in previous content-based regression models.
Extensive experiments demonstrate the effectiveness and interpretability of HSPIM in measuring innovation.



\bibliographystyle{elsarticle-num}
\bibliography{reference.bib}

\end{document}